\newcommand{\KL}{\mathrm{KL}}
\newcommand{\E}{\mathbb{E}}
\newcommand{\Ec}{\mathcal{E}}
\newcommand{\R}{\mathbb{R}}
\newcommand{\Pc}{\mathcal{P}}
\newcommand{\Lc}{\mathcal{L}}
\newcommand{\Mcal}{\mathcal{M}_+}
\newcommand{\dd}{\,\mathrm{d}}
\newcommand{\EVaR}{\mathrm{EVaR}}
\newcommand{\ERM}{\mathrm{ERM}}
\newcommand{\Pcal}{\mathcal{P}}
\newcommand{\xmax}{x_{\max}}
\DeclareMathOperator*{\argmax}{arg\,max}
\DeclareMathOperator*{\argmin}{arg\,min}
\DeclareMathOperator{\supp}{supp}  
\newcommand{\KLiU}{\mathrm{KL}_{\inf}^{\mathrm{U}}}
\newcommand{\CVaR}{\mathrm{CVaR}}
\newcommand{\KLiL}{\mathrm{KL}_{\inf}^{\mathrm{L}}}
\newcommand{\Mc}{\mathcal{M}}
\definecolor{evcol}{HTML}{007F7F}
\title[Short Title]{EVaR-Optimal Arm Identification in Bandits}
\newcommand{\COMMENTSWITCH}{1} 
  \newcommand{\cmt}[1]{\textcolor{red}{#1}}
  \newcommand{\cmt}[1]{}
  \newcommand{\fix}[1]{\textcolor{teal}{#1}}
  \newcommand{\fix}[1]{\textcolor{black}{#1}}
\begin{document}

\maketitle


\begin{abstract}%
  We study the fixed-confidence best arm identification (BAI) problem within the multi-armed bandit (MAB) framework under the Entropic Value-at-Risk (EVaR) criterion. Our analysis considers a nonparametric setting, allowing for general reward distributions bounded in $[0,1]$.
\fix{  This formulation addresses the critical need for risk-averse decision-making in high-stakes environments, such as finance, moving beyond simple expected value optimization.}
We propose a $\delta$-correct, Track-and-Stop based algorithm and derive a corresponding lower bound on the expected sample complexity, which we prove is asymptotically matched. The implementation of our algorithm and the characterization of the lower bound both require solving a complex convex optimization problem and a related, simpler non-convex one.
\cmt{
Finally, we report numerical exper}
\end{abstract}
\begin{keywords}%
  Best arm identification, Entropic Value at Risk, Bandit problems, Risk averse.
\end{keywords}
\section{Introduction}
\label{Intro}
Risk-sensitive decision-making is crucial in different fields such as finance, insurance, operations, and safety-critical systems. This has spurred the development and analysis of a broad array of risk measures, including mean-variance \cite{markowitz1952portfolio}, Conditional Value-at-Risk (CVaR) \cite{artzner1999coherent,rockafellar2000cvar}, spectral risk measures \cite{acerbi2002spectral}, cumulative prospect theory \cite{tversky1992cumulative}, utility-based shortfall risk \cite{artzner1999coherent,hu2018utility}, and more recently, Entropic Value-at-Risk (EVaR) \cite{ahmadi2011information}. Applying these criteria to sequential decision-making, where outcomes are learned through repeated interaction, remains a key challenge.

The multi-armed bandit (MAB) framework provides a formal model for sequential decision-making under uncertainty. In the fixed-confidence setting, the goal is to identify the best arm with a guaranteed probability of being correct. The primary performance metric in this setup is sample complexity (the expected number of samples required to make a decision) which we aim to minimize.
Beyond maximizing the expected reward, risk-aware bandit formulations have received increasing attention. Prior work has explored a range of objectives and settings, including mean-variance bandits and VaR-based objectives \cite{vakili2015meanvariance}, and risk-averse strategies in (sub-)Gaussian environments \cite{sani2012riskaversion}. Formulations centered on CVaR have been particularly well-studied, spanning best-arm identification (BAI) \cite{agrawal2021BAIcvar}, regret minimization with CVaR/VaR constraints \cite{agrawalregret1}, and Thompson Sampling-based approaches for regret minimization \cite{baudry2021optimal}.

Despite extensive research on risk-aware bandits, the more recent notion EVaR has received comparatively little attention, despite a recent surge of interest due to its particular role in Markov Decision Processes (see e.g. \cite{martheGV23beyond}). One notable study is the pioneering contribution \cite{maillard2013robust}, which proposed an algorithm for regret minimization under Entropic Risk Measure (ERM), a closely related quantity. However, to the best of our knowledge, the best arm identification (BAI) problem for EVaR in the fixed-confidence setting remains unexplored.  This paper is the first to address this gap.

\paragraph{Setting.}
We consider a $K$-armed bandit with unknown reward (or loss) laws $(\nu_i)_{i\in[K]}$ drawn from a class $\mathcal{P}$, and fix a risk level $\alpha\in(0,1)$. 
For a fixed risk parameter $\delta\in(0,1)$, our goal is to design a stopping time $\tau_\delta$ and a recommendation $\hat{\imath}$ such that, with probability at least $1-\delta$, $\hat{\imath}\in \arg\min_{i\in K}\EVaR_\alpha(X_{\nu_i})$, while minimizing the expected sample size $\E_\nu[\tau_\delta]$. The key departure from classical mean-based best-arm identification is the EVaR feasibility constraint $\EVaR_\alpha(X_{\nu_i})$, which fundamentally changes both the information structure and the optimal allocation of samples.
 
The classical asymptotically optimal algorithms for BAI in MAB are guided by an information-theoretic lower bound \cite{garivier2016optimal}. This bound is characterized by an optimization problem that takes the true arm distributions as inputs and solves for the optimal sampling proportions. These algorithms operate sequentially by using the empirical distributions gathered so far as a plug-in estimate for the true distributions, which in turn guides the sampling strategy.

We follow this high-level structure, but adapting it to the performance metric considered here introduces unique technical challenges. Following the spirit of the analysis for CVaR-BAI by \citet{agrawal2021BAIcvar}, we now introduce two key functionals that are central to addressing the technical challenges arising in the EVaR case.

For two probability measures $\eta_1,\eta_2\in\Pc(\R)$, Kullback–Leibler divergence is defined as $\KL(\eta_1\Vert\eta_2)$ $:=$ $\int \log\!\Big(\frac{d\eta_1}{d\eta_2}\Big)\,d\eta_1$, or $+\infty$ when  $\eta_1\not\ll \eta_2$.
Our analysis centers on two information distances, which are KL-projections of a measure \(\eta\) onto a feasible set defined by an \(\EVaR_\alpha(\cdot)\) constraint formally defined in Equation~\eqref{def:evar:mgf} below:
\begin{equation}
\begin{aligned}
\label{def:KLU/L}
\KLiU(\eta, \nu):=\min_{\substack{\kappa \in \Pcal([0,1]): 
\\ \EVaR_\alpha(X_\kappa) \ge \nu}}
\quad & \KL(\eta\Vert \kappa),
\hspace{2cm}
\KLiL(\eta, \nu) :=\min_{\substack{\kappa \in \Pcal([0,1]): 
\\ \EVaR_\alpha(X_\kappa) \le \nu}}
\quad & \KL(\eta\Vert \kappa).
\end{aligned}
\end{equation}
Functionals of this type already appeared in \cite{burnetas96optimal}; they have since been adapted to identify the best mean in a nonparametric setting \cite{agrawalBAImean} and recently under a CVaR constraint \cite{agrawal2021BAIcvar} which  which is most relevant to our work.

These functionals \(\KLiU\) and \(\KLiL\) are asymmetric and require separate analyses for the upper and lower forms. They drive our contributions: we first use them to characterize the sample-complexity lower bound (Section~\ref{sec:lowerbound}); the same functionals then underpin our \(\delta\)-correct algorithm (Section~\ref{sec:alg:evar}); and finally, their structure enables a sharp performance analysis, yielding the requisite concentration inequalities via supermartingale methods.
{ Studying KL projections is essential because they provide the geometric and algorithmic primitives that let us extend our analysis to other risk-aware decision problems.}

\textbf{Organization of the paper: }
\fix{We continue this section by motivation on EVaR with its useful definitions and finish it with an overviewing  the technical challenges.}
 Section~\ref{sec:assum} gathers the standing assumptions and lemmas.
In Section~\ref{sec:lowerbound}, we formalize the bandit model, derive an instance-dependent lower bound on sample complexity, and identify the KL–projection quantities that appear in the bound; we also introduce a characteristic information term based on KL projections and develop the supporting theory.
Section~\ref{sec:alg:evar} develops a tractable optimization reformulation and presents a $\delta$-correct algorithm built upon it; we prove asymptotic optimality-under stated conditions, its sample complexity matches the lower bound.
\cmt{ Section~\ref{sec:experiment} reports numerical experiments.On the empirical side, we compare our approach against a plug-in EVaR estimator (and standard LUCB-style baselines), observing substantial sample-efficiency gains especially under heavy-tailed.
}

We conclude this section by motivating the EVaR-based objective, presenting equivalent formulations, and clarifying its links to standard risk measures.
Overall, the techniques introduced here contribute a principled pathway for incorporating entropic risk criteria into bandit learning.

\subsection*{What is the EVaR? Why should we care?}
\label{subsec:mot_def}
The Entropic Value-at-Risk (EVaR) is a powerful risk measure that remains under-explored in sequential decision-making contexts like multi-armed bandits. 
 Its importance stems from two key properties: it serves as the tightest coherent upper bound on Conditional Value-at-Risk (CVaR) which is a cornerstone risk measure with numerous applications, and it is intrinsically linked to the Entropic Risk Measure (ERM). We motivate our system model here by explaining these links in this subsection. For notational simplicity, in this subsection we view risk measures as functions of random variables; in the BAI formulation, we will instead regard them as functionals of the underlying distributions.

For a random variable \(X\) representing loss and a confidence level \(\alpha \in (0,1)\), the {Conditional Value-at-Risk (CVaR)} at tail probability \(1-\alpha\) is defined as:
\[
    \CVaR_{\alpha}(X) = \inf_{v \in \mathbb{R}} \left\{ v + \frac{1}{1-\alpha} \E\big[(X - v)^+\big] \right\},
\]
where \((z)^+ = \max(0, z)\) denotes the positive part of \(z\).
While CVaR is widely used, EVaR offers significant analytical and computational advantages. Defined via its exponential-moment form, EVaR is the tightest Chernoff-style bound on (C)VaR:
\begin{equation}
    \EVaR_\alpha(X) = \inf_{z>0} \frac{1}{z} \left( \log \E[e^{zX}] +\rho \right).
    \label{def:evar:mgf}
\end{equation}
This  Moment-Generating Function (MGF) formulation connects directly to moment-generating functions and concentration inequalities. More revealingly, EVaR has a compelling distributionally robust interpretation as the worst-case expectation over an ambiguity set defined by a Kullback-Leibler (KL) divergence constraint:
\begin{equation}
    \EVaR_\alpha(X) = \sup_{Q\ll P:\,\ \KL(Q,P) \le \rho} \E_Q[X],
    \label{def:evar:KL}
\end{equation}
where \(\rho:=-\log(1-\alpha)\).
This link to relative entropy makes EVaR amenable to smooth, convex optimization. In contrast, the dual representation of CVaR imposes a hard density ratio cap \(0 \le \frac{dQ}{dP} \le (1-\alpha)^{-1}\) a structure that is often more challenging to work with \cite{ahmadi2011information}, \cite{acerbi2002spectral}.
Since \(\EVaR_\alpha(X) \ge \CVaR_{\alpha}(X)\) at the same level \cite[Theorem~6]{ahmadi2011information}, EVaR-feasible solutions provide conservative certificates for CVaR constraints.
\begin{figure}[!h]
    \centering
    \includegraphics[width=0.65\linewidth]{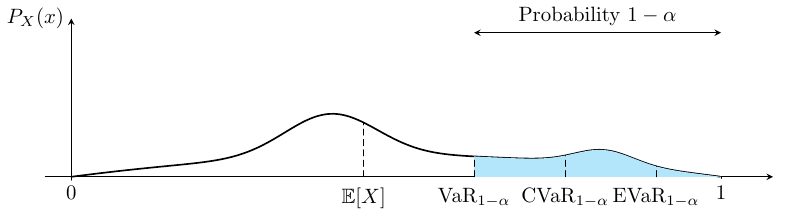}
    \caption{\fix{Schematic density on $[0,1]$ with right tail mass $1-\alpha$ shaded. Markers show $\mathbb{E}[X]$, $\mathrm{VaR}_\alpha(X)$, $\mathrm{CVaR}_\alpha(X)$, and $\mathrm{EVaR}_\alpha(X)$.}}
    \label{fig:placeholder}
\end{figure}

It is also crucial to distinguish EVaR from the closely related Entropic Risk Measure (ERM). While mathematically linked, they represent fundamentally different approaches to risk. The {ERM} is defined as $\ERM_z(X) := \frac{1}{z} \log\left(\E[\exp(zX)]\right)$.
Here, \(z>0\) is a {\it{fixed, user-defined parameter} } representing risk aversion. Its distributionally robust interpretation reveals that it applies a {soft penalty} on the KL-divergence from the nominal model \(P\):
\[
    \ERM_z(X) = \sup_{Q\ll P}\left\{\E_Q[X] - \frac{1}{z}\KL({Q}||{P})\right\}.
\]
A larger \(z\) reduces the penalty, allowing for more adversarial distributions \(Q\) and thus a more conservative risk estimate. In contrast, EVaR uses a {hard constraint} on the KL-divergence, where the confidence level \(\alpha\) directly controls the radius of the uncertainty set.
The mathematical connection makes this "soft vs. hard" distinction explicit. For EVaR, \(z\) is not a fixed parameter but an internal variable over which an optimization is performed to find the tightest possible bound for a given \(\alpha\).

This distinction has significant practical implications. In a bandit setting, optimizing a fixed ERM can often be reduced to a standard mean-based problem by applying an exponential utility transformation to the rewards. Addressing the EVaR optimization problem, however, presents a greater technical challenge due to the nested infimum over \(z\), which is the novel direction explored in this work.

Our approach to the Best Arm Identification for EVaR (BAI-EVaR) is inspired by the BAI-CVaR framework of \citet{agrawal2021BAIcvar}. We leverage both the MGF and the distributionally robust representations of EVaR. This dual perspective is crucial as it facilitates a clean analysis of two key KL-projection terms that arise in our derivations.

\section{Assumptions and Preliminaries }
\label{sec:assum}

We start with a simple remark: if the set of distribution considered is limited to a one–parameter canonical exponential family, the BAI-EVaR problem is actually already solved by~\cite{garivier2016optimal}.
\begin{proposition}
\label{prp:evar_spef}
Let $\{P_\theta\}_{\theta\in(a,b)}$ be a one–parameter canonical exponential family
with log-partition $\psi$ (strictly convex). For $\alpha\in(0,1)$, write
\[
\mathrm{EVaR}_\alpha(P_\theta)
\;=\;
\inf_{z>0}\ \frac{\psi(\theta+z)-\psi(\theta)+\rho}{z}.
\]
Then $\theta\mapsto \mathrm{EVaR}_\alpha(P_\theta)$ is \emph{strictly increasing}.
Consequently, since $\mu(\theta)=\psi'(\theta)$ is strictly increasing, the arm minimizing
$\mathrm{EVaR}_\alpha$ coincides with the arm minimizing the mean.
\end{proposition}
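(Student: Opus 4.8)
The plan is to treat $\EVaR_\alpha(P_\theta)$ as the value function of a parametric minimization,
\[
E(\theta) := \inf_{z>0} g(\theta,z), \qquad g(\theta,z) := \frac{\psi(\theta+z)-\psi(\theta)+\rho}{z},
\]
and to show its sensitivity in $\theta$ is strictly positive. The engine of the argument is a single pointwise fact driven by strict convexity of $\psi$: for fixed $z>0$, writing $\psi(\theta+z)-\psi(\theta)=\int_0^z \psi'(\theta+s)\,\dd s$ and using that $\psi'$ is strictly increasing, for any $\theta_1<\theta_2$ the integrand is strictly larger at $\theta_2$, so $g(\theta_2,z)>g(\theta_1,z)$ for \emph{every} $z>0$. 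Equivalently, $\partial_\theta g(\theta,z)=\big(\psi'(\theta+z)-\psi'(\theta)\big)/z>0$ since $z>0$.

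Taking the infimum over $z$ in the pointwise inequality gives $E(\theta_2)\ge E(\theta_1)$ for free, but the strictness is \emph{a priori} lost in the infimum. To recover it I would pick a minimizer $z^\star$ of the larger-$\theta$ problem and chain
\[
E(\theta_2)=g(\theta_2,z^\star)>g(\theta_1,z^\star)\ge \inf_{z>0} g(\theta_1,z)=E(\theta_1),
\]
yielding $E(\theta_1)<E(\theta_2)$. Equivalently, one can invoke the envelope (Danskin) theorem to obtain $E'(\theta)=\partial_\theta g(\theta,z^\star)=\big(\psi'(\theta+z^\star)-\psi'(\theta)\big)/z^\star>0$.

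The hard part is the one hypothesis hidden in the display: that the infimum defining $E(\theta_2)$ is \emph{attained} at a finite $z^\star\in(0,\infty)$. I would establish this by a boundary analysis of $z\mapsto g(\theta,z)$: as $z\to 0^+$ the numerator tends to $\rho>0$ while the denominator vanishes, so $g\to+\infty$; combined with continuity on $(0,\infty)$, a finite minimizer exists whenever $g$ is coercive, or dips below its limit $\lim_{z\to\infty} g(\theta,z)=\lim_{u\to\infty}\psi'(u)$, the essential supremum of the support. This holds for light-tailed families such as the Gaussian, and more generally unless there is a heavy atom at the right endpoint of the support, i.e. the degenerate regime $P_\theta(\{\xmax\})\ge 1-\alpha$, in which the infimum is approached only as $z\to\infty$, $\EVaR_\alpha$ collapses to the common essential supremum, and monotonicity becomes non-strict. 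I would therefore either state attainment as a standing hypothesis or restrict to families where it holds, and treat this step with care, as it is the only place the argument can fail.

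Finally, the mean claim is immediate: $\mu(\theta)=\psi'(\theta)$ is strictly increasing because $\psi$ is strictly convex, so both $\theta\mapsto\EVaR_\alpha(P_\theta)$ and $\theta\mapsto\mu(\theta)$ are strictly increasing on the natural-parameter interval. Hence the arm of smallest natural parameter is simultaneously the $\EVaR$-minimizer and the mean-minimizer, the two $\argmin$ sets coincide, and BAI-EVaR reduces to classical mean BAI as solved in \cite{garivier2016optimal}.
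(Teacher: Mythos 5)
Your core argument is identical to the paper's: for each fixed $z>0$, strict convexity of $\psi$ gives $\partial_\theta g(\theta,z)=\bigl(\psi'(\theta+z)-\psi'(\theta)\bigr)/z>0$, and the pointwise infimum over $z$ inherits (non-strict) monotonicity. Where you differ is in how strictness is recovered, and there your treatment is more careful than the paper's own. The paper simply asserts ``on $[0,1]$ the infimum is attained, so the increase is strict,'' with no justification; your chaining $E(\theta_2)=g(\theta_2,z^\star)>g(\theta_1,z^\star)\ge E(\theta_1)$ through a minimizer $z^\star$ of the larger-$\theta$ problem is the correct mechanism, and it makes explicit that finite attainment is the load-bearing hypothesis. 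Your caveat is genuinely needed: the paper's attainment claim is contradicted by its own appendix, where Lemma~\ref{lem:f-bounds} and the monotonicity of $G_\eta$ show that when $\sup_{z>0}G_\eta(z)\le\rho$ the infimum is approached only as $z\to\infty$ and $\EVaR_\alpha(\eta)=\xmax(\eta)$; the appendix even records that $\EVaR_\alpha(\eta)=1$ iff $\eta\{X=1\}\ge 1-\alpha$, with minimizer $z^\star=+\infty$. Concretely, two Bernoulli arms with success probabilities $p_1\ne p_2$, both at least $1-\alpha$, form a canonical exponential family with $\EVaR_\alpha=1$ for both, so strict monotonicity fails and the proposition needs exactly the restriction you propose (attainment as a standing hypothesis, or exclusion of the regime $P_\theta(\{\xmax\})\ge 1-\alpha$). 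One small refinement to your last paragraph: in the degenerate regime the mean-minimizing arm is still \emph{an} EVaR-minimizer (non-strict monotonicity survives), but the EVaR $\argmin$ set may be strictly larger than the mean $\argmin$, so the two sets coincide only under the strictness hypothesis, which is how you have conditioned the claim.
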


\begin{proof}
For each fixed $z>0$, the map
$g_\theta(z):=\big(\psi(\theta+z)-\psi(\theta)\big)/z$
satisfies
$\partial_\theta g_\theta(z)=\big(\psi'(\theta+z)-\psi'(\theta)\big)/z>0$
by strict convexity of $\psi$.
Hence $g_\theta(z)+\tfrac{1}{z}\rho$ is strictly increasing in $\theta$ for every $z>0$.
Taking the infimum over $z>0$ (pointwise infimum of strictly increasing functions)
gives that $\mathrm{EVaR}_\alpha(P_\theta)$ is increasing; on $[0,1]$ the infimum is attained,
so the increase is strict.
\end{proof}

So as to focus on the specificities of EVaR and not of tail properties of the distributions, we focus in the sequel on the set of probability distributions supported on the interval, say $[0,1]$, and denote it by $\Lc := \{\kappa \in \Pc(\R) : \operatorname{supp}(\kappa)\subseteq[0,1]\}$. It is is known to be a compact space under the topology of weak convergence.

\fix{This section reviews essential properties of EVaR and establishes the mathematical tools for our analysis. Specifically, we establish the continuity of our risk measure (Lemma~\ref{lem:evar-cont}) and guarantee the compactness of its level sets on $\Pc([0,1])$ under weak convergence (Corollary~\ref{cor:evar-level-sets}). This latter property is critical, as it enables a direct application of Berge’s Maximum Theorem to our EVaR-BAI optimization problems. This application yields the existence of optimizers, the continuity of value functions, and the upper hemicontinuity of the argmin/argmax correspondences used throughout our framework.
}

\begin{remark}\label{rem:EVaR_conc}
$\EVaR_\alpha(\kappa)$ is concave in the distribution $\kappa$ and convex in the random variable $X$.
\end{remark}
\begin{lemma}\label{lem:evar-cont}
Fix $\alpha\in(0,1)$ and set $\rho:=-\log(1-\alpha)$. For $X\in[0,1]$ and any sequence
$\kappa_n\Rightarrow\kappa$ (weakly) in $\Pc([0,1])$,
\[
\EVaR_\alpha(\kappa_n)
=\inf_{z>0}\frac{\log\E_{\kappa_n}[e^{z X}]+\rho}{z}
\ \longrightarrow\
\EVaR_\alpha(\kappa)=\inf_{z>0}\frac{\log\E_{\kappa}[e^{z X}]+\rho}{z}.
\]
\end{lemma}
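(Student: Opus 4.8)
The plan is to write $f_n(z):=\bigl(\log\E_{\kappa_n}[e^{zX}]+\rho\bigr)/z$ and $f(z):=\bigl(\log\E_{\kappa}[e^{zX}]+\rho\bigr)/z$, so that $\EVaR_\alpha(\kappa_n)=\inf_{z>0}f_n(z)$ and $\EVaR_\alpha(\kappa)=\inf_{z>0}f(z)$, and to prove the two one-sided statements $\limsup_n\EVaR_\alpha(\kappa_n)\le\EVaR_\alpha(\kappa)$ and $\liminf_n\EVaR_\alpha(\kappa_n)\ge\EVaR_\alpha(\kappa)$ separately. The starting point is pointwise convergence: for each fixed $z>0$ the map $x\mapsto e^{zx}$ is bounded and continuous on $[0,1]$, so $\kappa_n\Rightarrow\kappa$ gives $\E_{\kappa_n}[e^{zX}]\to\E_{\kappa}[e^{zX}]$; since these quantities lie in $[1,e^z]$, the logarithm is continuous there and $f_n(z)\to f(z)$. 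I would also record two uniform ingredients: first, $\log\E_{\kappa_n}[e^{zX}]\ge 0$ forces $f_n(z)\ge\rho/z$ for every $n$; second, each $z\mapsto\log\E_{\kappa_n}[e^{zX}]$ is convex, so the pointwise convergence upgrades, by the standard fact that pointwise limits of convex functions converge uniformly on compacta, to uniform convergence of $f_n\to f$ on every compact subinterval $[\rho,Z]\subset(0,\infty)$.

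The upper bound is then immediate: given $\epsilon>0$, choose $z_0>0$ with $f(z_0)\le\EVaR_\alpha(\kappa)+\epsilon$ (possible since $\EVaR_\alpha(\kappa)$ is the infimum of $f$ over the open set $(0,\infty)$), and use pointwise convergence at the single point $z_0$ to get $\limsup_n\EVaR_\alpha(\kappa_n)\le\limsup_n f_n(z_0)=f(z_0)\le\EVaR_\alpha(\kappa)+\epsilon$; letting $\epsilon\downarrow 0$ closes this direction.

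The lower bound is where the work lies, and the only genuinely delicate regime is $z\to\infty$: there $e^{zX}$ is unbounded, so weak convergence gives no direct control, and an escaping near-minimizer $z_n^\ast\to\infty$ must be ruled out. I would fix $a<\EVaR_\alpha(\kappa)$ and split $(0,\infty)$ into three pieces. On $(0,\rho)$ the bound $f_n(z)\ge\rho/z>1\ge\EVaR_\alpha(\kappa)$ disposes of small $z$. On the compact middle $[\rho,Z]$, uniform convergence yields $\inf_{[\rho,Z]}f_n\to\inf_{[\rho,Z]}f\ge\EVaR_\alpha(\kappa)$, so $\inf_{[\rho,Z]}f_n\ge\EVaR_\alpha(\kappa)-\epsilon$ for large $n$. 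For the tail $z>Z$ I would use the elementary bound $\E_{\kappa_n}[e^{zX}]\ge e^{za}\,\kappa_n([a,1])$, which gives
\[
f_n(z)\ \ge\ a+\frac{\log\kappa_n([a,1])+\rho}{z}.
\]
The key point is that the tail mass stays bounded below: since $a<\EVaR_\alpha(\kappa)\le\xmax$ (the essential supremum of $\kappa$), one has $\kappa((a,1])>0$, and the Portmanteau theorem applied to the relatively open set $(a,1]$ gives $\liminf_n\kappa_n([a,1])\ge\kappa((a,1])>0$. Hence $\log\kappa_n([a,1])$ is bounded below uniformly for large $n$, and choosing $Z$ large enough makes the second term above smaller than $\epsilon$ in absolute value, so $\inf_{z>Z}f_n(z)\ge a-\epsilon$. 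Combining the three regimes yields $\EVaR_\alpha(\kappa_n)\ge a-\epsilon$ for all large $n$; letting $\epsilon\downarrow 0$ and then $a\uparrow\EVaR_\alpha(\kappa)$ gives $\liminf_n\EVaR_\alpha(\kappa_n)\ge\EVaR_\alpha(\kappa)$, which together with the upper bound completes the proof. The main obstacle, as noted, is precisely the unbounded-integrand tail; everything else is either pointwise weak convergence or soft convex-analytic bookkeeping.
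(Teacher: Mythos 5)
Your proof is correct, and while your upper bound coincides with the paper's (evaluate at a near-minimizer $z_0$ and use weak convergence at that single point), your lower bound takes a genuinely different route. The paper extracts near-minimizers $z_n$ of $f_{\kappa_n}$ and argues by subsequences: if $(z_n)$ is bounded it uses joint continuity of $(\kappa,z)\mapsto f_\kappa(z)$ on compacta, and if $z_n\to\infty$ it invokes $f_{\kappa_n}(z)\to\xmax(\kappa_n)$ together with lower semicontinuity of $\xmax$ under weak convergence. You instead bound $\inf_{z>0} f_n(z)$ uniformly over a three-region split of the $z$-axis: $f_n(z)\ge\rho/z>1\ge\EVaR_\alpha(\kappa)$ disposes of $(0,\rho)$; uniform convergence on compacta of the convex log-MGFs (the classical Rockafellar-type fact, correctly applied to $\Lambda_n$ rather than to $f_n$, which need not be convex) handles $[\rho,Z]$; and the quantitative tail bound $f_n(z)\ge a+(\log\kappa_n([a,1])+\rho)/z$ with $\liminf_n\kappa_n([a,1])\ge\kappa((a,1])>0$ — Portmanteau on the relatively open set $(a,1]$, valid since $a<\EVaR_\alpha(\kappa)\le\xmax(\kappa)$ — handles $z>Z$. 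This buys two things. First, the paper's Case 2 asserts $f_{\kappa_n}(z_n)\ge\xmax(\kappa_n)-\varepsilon$ for all large $n$, which implicitly requires the convergence $f_{\kappa_n}(z)\to\xmax(\kappa_n)$ to hold uniformly in $n$; the explicit lower bound of Lemma~\ref{lem:f-bounds} involves $\log\kappa_n\{X=\xmax(\kappa_n)\}$, which is $-\infty$ when there is no atom at the essential supremum, so that uniformity is not automatic. Your fixed-level-$a$ estimate is uniform in $n$ by construction and sidesteps this entirely (the same Portmanteau fact underlies the paper's lsc-of-$\xmax$ step, but you use it quantitatively). Second, your region $(0,\rho)$ explicitly rules out near-minimizers drifting to $0$, which the paper's Case 1 (``$z_n\to z^*>0$'') assumes silently. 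Your quantifier ordering is also sound: fix $a$ and $\varepsilon$, obtain $c=\kappa((a,1])>0$ and $N_1$ from Portmanteau, choose $Z$ depending on $c,\varepsilon$, then obtain $N_2$ from uniform convergence on $[\rho,Z]$.
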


See the proof of the lemma in Appendix~\ref{prf:lem:evar-cont}. 
 Continuity of \(\EVaR_\alpha\) on \(\Pc([0,1])\) immediately gives compact level sets because \(\Pc([0,1])\) itself is compact in the weak topology.
\begin{corollary}\label{cor:evar-level-sets}
Fix $\alpha\in(0,1)$ and consider $\EVaR_\alpha:\Pc([0,1])\to\R$.
Then for every $c\in\R$ the \emph{sublevel} and \emph{superlevel} sets
\[
\{\kappa\in\Pc([0,1]):\ \EVaR_\alpha(\kappa)\le c\}
\quad\text{and}\quad
\{\kappa\in\Pc([0,1]):\ \EVaR_\alpha(\kappa)\ge c\}
\]
are closed in the weak topology. Consequently, they are compact since $\Pc([0,1])$ is compact.
\end{corollary}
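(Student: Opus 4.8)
The plan is to recognize this as the textbook fact that a continuous real-valued function has closed sub- and superlevel sets, combined with the fact that a closed subset of a compact space is compact. Since Lemma~\ref{lem:evar-cont} already establishes that $\EVaR_\alpha$ is continuous on $\Pc([0,1])$ under weak convergence, and the surrounding text records that $\Pc([0,1])$ is weakly compact, essentially all the analytic content is already in place and the corollary reduces to a short topological deduction. (Note also that for $X\in[0,1]$ one has $0\le\EVaR_\alpha(\kappa)\le 1+\rho/z$ for every $z$, so the map is genuinely finite-valued into $\R$, as claimed.)

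Concretely, I would first write the sublevel set as the preimage $\EVaR_\alpha^{-1}\big((-\infty,c]\big)$ and the superlevel set as $\EVaR_\alpha^{-1}\big([c,+\infty)\big)$. Since $(-\infty,c]$ and $[c,+\infty)$ are closed in $\R$, and preimages of closed sets under a continuous map are closed, both sets are closed in the weak topology, and then compactness follows from closedness inside the compact space $\Pc([0,1])$.

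There is one small bookkeeping point to address: Lemma~\ref{lem:evar-cont} is phrased sequentially (for $\kappa_n\Rightarrow\kappa$), whereas "continuous" and "closed" are topological notions. I would resolve this by recalling that the weak topology on $\Pc([0,1])$ is metrizable (e.g.\ via the L\'evy--Prokhorov or $1$-Wasserstein metric), so that sequential continuity coincides with continuity and sequential closedness with closedness. To keep the argument self-contained I would simply give the direct sequential version: take any $\kappa_n$ in the sublevel set with $\kappa_n\Rightarrow\kappa$; then $\EVaR_\alpha(\kappa_n)\le c$ for all $n$, and Lemma~\ref{lem:evar-cont} yields $\EVaR_\alpha(\kappa_n)\to\EVaR_\alpha(\kappa)$, so passing to the limit in a weak inequality gives $\EVaR_\alpha(\kappa)\le c$; hence $\kappa$ lies in the sublevel set, which is therefore sequentially closed. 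The superlevel case is identical with the reversed inequality.

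Finally, since $\Pc([0,1])$ is compact in the weak topology and a closed subset of a compact space is compact, both level sets are compact, completing the proof. I do not expect any genuine obstacle here; the only subtlety is the sequential-versus-topological phrasing, and this is harmless precisely because weak convergence on $\Pc([0,1])$ is metrizable, so the two formulations agree.
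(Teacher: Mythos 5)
Your proof is correct and follows essentially the same route as the paper: continuity of $\EVaR_\alpha$ from Lemma~\ref{lem:evar-cont} makes the level sets preimages of closed subsets of $\R$, hence weakly closed, and compactness follows since $\Pc([0,1])$ is weakly compact. Your added remark on metrizability of the weak topology (so that the sequential statement of Lemma~\ref{lem:evar-cont} genuinely yields topological continuity) is a harmless tightening of a point the paper leaves implicit.
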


\begin{proof}
By Lemma~\ref{lem:evar-cont}, $\EVaR_\alpha$ is continuous on $\Pc([0,1])$ (weak topology).
Therefore the sets above are preimages of the closed sets $(-\infty,c]$, $[c,\infty)$, and $\{c\}$
under a continuous map, hence are closed. Compactness follows from the compactness of $\Pc([0,1])$. See Lemma~\ref{lem:detailed-levelset} for more details.
\end{proof}

\begin{lemma} \label{lem:rangeEVaR}
Fix $\alpha\in(0,1)$ and let $e_\alpha(\eta)=\inf_{z>0}\tfrac{1}{z}\big(\log\mathbb E_\eta[e^{z X}]-\log(1-\alpha)\big)$
denote $\mathrm{EVaR}_\alpha$ of $X\sim\eta$. If $\eta\in\mathcal P([0,1])$, then
\[
e_\alpha(\eta)\in[0,1].
\]
Moreover, the extremes are attained: $e_\alpha(\delta_0)=0$ and $e_\alpha(\delta_1)=1$. Consequently, the natural
EVaR range set is \(
D := [0,1]\) and  \(D^\circ=(0,1)\) denotes its interior.
\end{lemma}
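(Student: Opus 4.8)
The plan is to work directly with the MGF representation
$e_\alpha(\eta)=\inf_{z>0}\frac1z\big(\log\E_\eta[e^{zX}]+\rho\big)$, where $\rho=-\log(1-\alpha)>0$, and to exploit $X\in[0,1]$ to sandwich the bracketed numerator uniformly in $z>0$. For the upper bound, $X\le 1$ forces $e^{zX}\le e^{z}$ pointwise, hence $\E_\eta[e^{zX}]\le e^{z}$ and $\log\E_\eta[e^{zX}]\le z$. Substituting gives $\frac1z\big(\log\E_\eta[e^{zX}]+\rho\big)\le 1+\rho/z$ for every $z>0$, and letting $z\to\infty$ inside the infimum yields $e_\alpha(\eta)\le 1$. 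For the lower bound, I would use $X\ge 0$, so that $e^{zX}\ge 1$ and $\log\E_\eta[e^{zX}]\ge 0$; since $\rho>0$ and $z>0$, every term $\frac1z\big(\log\E_\eta[e^{zX}]+\rho\big)$ is strictly positive, so the infimum is $\ge 0$. This establishes $e_\alpha(\eta)\in[0,1]$.

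For attainment of the extremes I would simply evaluate at the two Dirac masses. At $\delta_0$ we have $\E_{\delta_0}[e^{zX}]=1$, so $e_\alpha(\delta_0)=\inf_{z>0}\rho/z=0$; at $\delta_1$ we have $\E_{\delta_1}[e^{zX}]=e^{z}$, so $e_\alpha(\delta_1)=\inf_{z>0}(1+\rho/z)=1$. In both cases the infimum is approached only as $z\to\infty$, so it is not attained at any finite $z$; nevertheless the EVaR \emph{values} equal $0$ and $1$ exactly, which is all that is claimed.

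To upgrade the inclusion $e_\alpha(\Pc([0,1]))\subseteq[0,1]$ to the full range $D=[0,1]$, I would invoke continuity of $e_\alpha$ on $\Pc([0,1])$ in the weak topology (Lemma~\ref{lem:evar-cont}) together with the convexity, hence connectedness, of $\Pc([0,1])$: the continuous image of a connected set is an interval, and since by the previous step it contains both endpoints $0$ and $1$, it must equal $[0,1]$. The interior is then $D^\circ=(0,1)$.

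There is no deep obstacle here; the only point needing care is the limiting regime in $z$. One must recognize that although the infimum over $z$ need not be attained at any finite point, its \emph{value} is nonetheless exactly the claimed endpoint, and one should confirm that the opposite regime $z\to 0^+$ never interferes: there the numerator tends to $\rho>0$ while $z\to 0$, so the quotient diverges to $+\infty$ and can never be the minimizer.
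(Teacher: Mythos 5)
Your proof is correct and follows essentially the same elementary route the paper relies on: the sandwich $0\le \log\E_\eta[e^{zX}]\le z$ from $X\in[0,1]$ (the same bound underlying Lemma~\ref{lem:f-bounds}, which gives $f_\eta(z)\le \xmax(\eta)+\rho/z$ and $\lim_{z\to\infty}f_\eta(z)=\xmax(\eta)$), together with direct evaluation at $\delta_0$ and $\delta_1$, correctly noting that the infimum is a limit as $z\to\infty$ rather than attained at finite $z$ (cf.\ Corollary~\ref{cor:no-equality}). Your connectedness argument upgrading the inclusion to full surjectivity onto $[0,1]$ goes slightly beyond what the lemma asserts and is sound; the paper realizes the analogous sweep more concretely by mixing $\delta_0$ and $\delta_1$ in Lemma~\ref{lem:detailed-levelset}.
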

\begin{definition}[Lower semicontinuity]
A function $f:(\mathcal X,\tau)\to(-\infty,+\infty]$ is \emph{lower semicontinuous (l.s.c.) at $x\in\mathcal X$} if \(
\liminf_{y\to x} f(y)\ \ge\ f(x)\).
Equivalently (sequential form in metric spaces), $f$ is lower semicontinous \ iff for every sequence $x_n\to x$, \(f(x)\ \le\ \liminf_{n\to\infty} f(x_n)\).
\end{definition}
\begin{remark}
\label{rem:compact-attain}
The space $\Pc([0,1])$ of Borel probability measures on $[0,1]$ is compact under weak convergence.
Hence, whenever the objective functional is l.s.c.
and the feasible set is closed in the weak topology,
the minimum is \emph{attained}.
We use the compactness to guarantee existence of minimizers combining with l.s.c. objectives and closed
feasible sets.
\end{remark}

\begin{lemma}\label{lem:lsc}
Fix \(\eta\in \Pc([0,1])\). The map \(\kappa\mapsto \KL(\eta\Vert \kappa)\) is l.s.c on \(\Pc([0,1])\) for the weak topology. In particular, if \(\eta\not\ll \kappa\), then \(\KL(\eta\Vert \kappa)=+\infty\).
\end{lemma}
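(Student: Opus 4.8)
The plan is to deduce lower semicontinuity from the Donsker--Varadhan variational representation of relative entropy, which writes $\KL(\eta\Vert\cdot)$ as a pointwise supremum of functionals that are individually weakly continuous; a supremum of continuous functions is automatically l.s.c., so the result follows with essentially no extra work.

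First I would recall that on a Polish space, and in particular on the compact metric space $[0,1]$, one has for every $\kappa\in\Pc([0,1])$
\begin{equation*}
\KL(\eta\Vert\kappa)=\sup_{f\in C([0,1])}\Big\{\int f\,\dd\eta-\log\int e^{f}\,\dd\kappa\Big\},
\end{equation*}
where the supremum ranges over bounded continuous functions (here $C([0,1])$, since $[0,1]$ is compact). This identity, including the nontrivial fact that restricting the test functions to be continuous does \emph{not} lower the value below $\KL(\eta\Vert\kappa)$, is classical (see, e.g., the weak-convergence treatment of large deviations by Dupuis and Ellis). I would cite it rather than reprove it.

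Next, fix $f\in C([0,1])$ and set $\Phi_f(\kappa):=\int f\,\dd\eta-\log\int e^{f}\,\dd\kappa$. Since $f$ is bounded and continuous, so is $e^{f}$, and $e^{f}$ is bounded below by a strictly positive constant; hence if $\kappa_n\Rightarrow\kappa$ weakly, then $\int e^{f}\,\dd\kappa_n\to\int e^{f}\,\dd\kappa>0$ by the definition of weak convergence, and continuity of $\log$ on $(0,\infty)$ gives $\Phi_f(\kappa_n)\to\Phi_f(\kappa)$ (the term $\int f\,\dd\eta$ is constant in $\kappa$ because $\eta$ is fixed). Thus each $\Phi_f$ is weakly continuous on $\Pc([0,1])$. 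Consequently, for any sequence $\kappa_n\Rightarrow\kappa$ and any fixed $f$, $\liminf_n\KL(\eta\Vert\kappa_n)\ge\liminf_n\Phi_f(\kappa_n)=\Phi_f(\kappa)$; taking the supremum over $f\in C([0,1])$ yields $\liminf_n\KL(\eta\Vert\kappa_n)\ge\KL(\eta\Vert\kappa)$, which is exactly the sequential l.s.c. condition. Because $\Pc([0,1])$ is compact and metrizable in the weak topology, sequential and topological lower semicontinuity coincide, matching the definition used above. The ``in particular'' claim is immediate from the definition of $\KL$, which assigns the value $+\infty$ whenever $\eta\not\ll\kappa$.

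The main obstacle—really the only nonroutine point—is the justification that the supremum may be restricted to continuous test functions without its value dropping below $\KL(\eta\Vert\kappa)$; this is precisely where Polishness of $[0,1]$ enters, through an approximation of bounded measurable functions by continuous ones, and I would rely on the standard reference instead of redoing that argument. Everything else reduces to continuity of integration against bounded continuous integrands under weak convergence.
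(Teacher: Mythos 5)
Your proof is correct and follows essentially the same route as the paper, which (in Remark~\ref{rem:kl-cont}) also deduces lower semicontinuity from the Donsker--Varadhan variational formula $\KL(\eta\Vert\kappa)=\sup_{f\in C_b}\{\E_\eta[f]-\log\E_\kappa[e^{f}]\}$ and the fact that a supremum of weakly continuous functionals is l.s.c., citing the classical literature for the variational identity. Your write-up merely fills in the routine details (continuity of each fixed-$f$ functional, equivalence of sequential and topological l.s.c.\ on the compact metrizable space $\Pc([0,1])$) that the paper leaves implicit.
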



\section{Model and the Lower Bound}
\label{sec:lowerbound}

\paragraph{Problem setup (BAI–EVaR).}
Fix an integer $K \ge 2$. Let $\mathcal P([0,1])$ denote the set of probability
measures supported on $[0,1]$, and let
$\mathcal M \;:=\; \mathcal P([0,1])^K$
be the set of $K$-arm bandit instances. A problem instance is
$\nu=(\nu_1,\ldots,\nu_K)\in\mathcal M$, where arm $i$ yields i.i.d.\ draws
$X_i\sim \nu_i$. For a risk level $\alpha\in(0,1)$, EVaR is defined in \eqref{def:evar:mgf} or \eqref{def:evar:KL} of a
distribution $\eta\in\mathcal P([0,1])$.
The best arm under EVaR is
$i^\star(\nu) \;:=\; \arg\min_{i\in\{1,\dots,K\}} \EVaR_\alpha(\nu_i)$.
A BAI–EVaR algorithm is a sampling rule, a stopping time
$\tau$, and a recommendation $\hat{\imath}_\tau$. For a confidence level
$\delta\in(0,1)$, the algorithm is \emph{$\delta$-correct} if for all
$\nu\in\mathcal M$, the algorithm declares the best arm with probability higher tha $1-\delta$,
and its sample complexity is \(\E_\nu[\tau_\delta]\), which we aim to minimize.

The general lower bound as in Garivier-Kaufmann \cite{garivier2016optimal} remains the same in the general form
\begin{equation}\label{th:GK2016}
\mathbb{E}(\tau_{\delta}) \ge T(\mu)^{-1} \log\frac{1}{4\delta}
\quad 
\text{where} \quad T(\mu) = \sup_{t\in\Sigma_{K}} \inf_{\nu\in\mathcal{A}_{1}^{c}}\sum_{a=1}^{K}t_{a}\KL(\mu_{a},\nu_{a}), \quad \text{and } \mathcal{A}_{j}^{c}=\mathcal{M}\backslash\mathcal{A}_{j}. 
\end{equation}


{
For the ease of notation, we assume without loss of generality that the best-EVaR arm in $\mu$ is arm index $1$. 
Let $\Sigma_{K}$ denote the probability simplex in $\mathbb{R}^{K}$, $\mathcal{A}_{j}$ denote the collection of all bandit problems in $\mathcal{M}$ which have arm $j$ as the best-EVaR arm, $\tau_{\delta}$ be the stopping time for the $\delta$-correct algorithm, $N_{a}(\tau)$ denote the number of times arm $a$ has been sampled by the algorithm.
To understand the lower bound and later propose a $\delta$-correct
 algorithm, we need to have a better understanding of $T(\mu)$.

 Analogous to the general heavy tail distribution \cite{agrawalBAImean}
and CVaR constraint \cite{agrawal2021BAIcvar}, one can simplify $T(\mu)$ in \eqref{th:GK2016} as
\begin{equation}
T(\mu) = \sup_{t \in \Sigma_K} \min_{j \neq 1} \inf_{x \le y} \{t_{1} \KLiU
(\mu_{1},y) + t_j \KLiL(\mu_j, x)\}.
\label{Tmu}
\end{equation}
  where the high-level structure and formulas of the lower bound and $T(\mu)$ remain the same, but the underlying properties of the $\KL_{\inf}$ functionals are different and important to discuss carefully and  necessary for understanding of lower bound and our proposed algorithm.

\begin{lemma}\label{lem:KLiLKLiU-cont}
On $\Pc([0,1])\times[0,1]$ (weak topology on $\Pc([0,1])$), the maps
\[
(\eta,x)\longmapsto \KLiL(\eta,x)
\quad\text{and}\quad
(\eta,x)\longmapsto \KLiU(\eta,x)
\]
are continuous.
\end{lemma}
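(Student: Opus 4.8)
The plan is to prove continuity of each value function by establishing lower and upper semicontinuity separately, working out $\KLiU$ in detail and noting that $\KLiL$ is entirely symmetric, with the minimal-$\EVaR_\alpha$ atom $\delta_0$ playing the role that $\delta_1$ plays for $\KLiU$. Throughout I would use that $\Pc([0,1])$ is weakly compact, that the feasible sets $\{\kappa:\EVaR_\alpha(\kappa)\ge x\}$ and $\{\kappa:\EVaR_\alpha(\kappa)\le x\}$ are nonempty (they contain $\delta_1$, resp.\ $\delta_0$, by Lemma~\ref{lem:rangeEVaR}) and compact (Corollary~\ref{cor:evar-level-sets}), and that $\KL(\eta\Vert\cdot)$ is l.s.c.\ with a closed feasible set, so the minimum defining $\KLiU(\eta,x)$ is attained (Remark~\ref{rem:compact-attain}).

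For lower semicontinuity, given $(\eta_n,x_n)\to(\eta,x)$, I pick a subsequence realizing $\liminf_n\KLiU(\eta_n,x_n)$ and, along it, minimizers $\kappa_n$; by weak compactness I extract a further subsequence $\kappa_n\Rightarrow\kappa^\dagger$. Feasibility passes to the limit because $\EVaR_\alpha(\kappa_n)\ge x_n$ and $\EVaR_\alpha$ is weakly continuous (Lemma~\ref{lem:evar-cont}), giving $\EVaR_\alpha(\kappa^\dagger)\ge x$; and the joint (two-variable) lower semicontinuity of $(\mu,\kappa)\mapsto\KL(\mu\Vert\kappa)$, a standard strengthening of Lemma~\ref{lem:lsc}, yields $\KL(\eta\Vert\kappa^\dagger)\le\liminf_n\KL(\eta_n\Vert\kappa_n)$. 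Hence $\KLiU(\eta,x)\le\KL(\eta\Vert\kappa^\dagger)\le\liminf_n\KLiU(\eta_n,x_n)$. This is exactly the l.s.c.\ half of Berge's theorem: the constraint correspondence $x\mapsto\{\kappa:\EVaR_\alpha(\kappa)\ge x\}$ is upper hemicontinuous with compact values (closed graph, again from Lemma~\ref{lem:evar-cont}), paired with an l.s.c.\ objective.

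For upper semicontinuity I must produce, from a minimizer $\kappa^\star$ of the limit problem (assume $\KLiU(\eta,x)<\infty$, so $\eta\ll\kappa^\star$), a recovery sequence $\kappa_n$ feasible at $(\eta_n,x_n)$ with $\limsup_n\KL(\eta_n\Vert\kappa_n)\le\KL(\eta\Vert\kappa^\star)$. The $x$-direction is clean: mixing a vanishing amount of the maximal-$\EVaR$ atom, $\kappa_n:=(1-\varepsilon_n)\kappa^\star+\varepsilon_n\delta_1$, restores feasibility when $x_n>x=\EVaR_\alpha(\kappa^\star)$ by concavity of $\EVaR_\alpha$ (Remark~\ref{rem:EVaR_conc}), upon choosing $\varepsilon_n\asymp(x_n-x)/(1-x)\to0$; and the pointwise bound $\frac{d\eta}{d\kappa_n}\le(1-\varepsilon_n)^{-1}\frac{d\eta}{d\kappa^\star}$ gives $\KL(\eta\Vert\kappa_n)\le-\log(1-\varepsilon_n)+\KL(\eta\Vert\kappa^\star)$, so the $\limsup$ is as required. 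This $\delta_1$-mixing is precisely the lower hemicontinuity of the constraint correspondence that Berge needs; the boundary level $x=1$, where $\{\EVaR_\alpha\ge1\}$ degenerates, I would treat separately using Lemma~\ref{lem:rangeEVaR} and monotonicity of $\KLiU$ in $x$.

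The genuine obstacle is upper semicontinuity in the measure argument: weak convergence $\eta_n\Rightarrow\eta$ does not control $\KL(\eta_n\Vert\cdot)$ from above, since $\KL$ is only l.s.c.\ and $\eta_n$ may carry small masses where $\kappa^\star$ is null. To overcome this I would exploit the geometry of the projection rather than weak convergence alone. The feasible set is convex (superlevel set of the concave $\EVaR_\alpha$) and $\KL(\eta\Vert\cdot)$ is strictly convex on its domain, so the minimizer $\kappa^\star(\eta)$ is unique; the outer semicontinuity of the argmin (the l.s.c.-Berge conclusion above) then makes $\eta\mapsto\kappa^\star(\eta)$ weakly continuous. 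It remains to upgrade l.s.c.\ to continuity of $\KL$ along $(\eta_n,\kappa^\star(\eta_n))$, for which I would use the explicit structure of the EVaR-projection optimizer: it preserves the support/atoms of $\eta$ (keeping $d\eta/d\kappa^\star$ bounded) and otherwise reweights by a bounded exponential tilt with internal parameter ranging in a compact set, finite because $X\in[0,1]$ forces the infimum in $\EVaR_\alpha$ to be attained. On this bounded-density-ratio regime, dominated convergence promotes weak convergence to continuity of $\KL$. A cleaner alternative is to invoke the dual reformulation of Section~\ref{sec:alg:evar}, which writes $\KLiU$ and $\KLiL$ as suprema of jointly continuous functions over a compact dual domain, so that continuity follows from Berge's Maximum Theorem with a genuinely continuous objective, bypassing the l.s.c.\ difficulty altogether.
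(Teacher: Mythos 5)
Your lower-semicontinuity half is essentially the paper's own argument (Lemma~\ref{lem:EVaR-KLL-lsc}): attained minimizers on the compact feasible sets, a weak limit $\kappa_n\Rightarrow\kappa^\dagger$, feasibility in the limit via continuity of $\EVaR_\alpha$, and joint l.s.c.\ of $\KL$. The gaps are in the u.s.c.\ half, and they are exactly where the paper had to work hardest. First, the claim that $\KLiL$ is ``entirely symmetric'' to $\KLiU$ with $\delta_0$ in place of $\delta_1$ fails at the boundary. For $\KLiL$ at $x=e_\alpha(\eta)$ with $e_\alpha(\eta)=0$, the mixing weight needed to restore feasibility is $w_n=1-x_n/e_\alpha(\eta_n)$, and when both $x_n\to0$ and $e_\alpha(\eta_n)\to0$ this ratio need not tend to $1$, so $w_n$ need not vanish and $\KL(\eta_n\Vert(1-w_n)\eta_n+w_n\delta_0)$ can blow up. The paper isolates this as Subcase~B of Lemma~\ref{lem:EVaR-KLL-usc} and resolves it with a genuinely different device: the Esscher tilt $d\kappa^{(t)}/d\eta\propto e^{-tX}$, which lowers the EVaR while guaranteeing $\KL(\eta\Vert\kappa^{(t)})\le t\to0$ (Lemma~\ref{lem:Esscher}). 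Your proposal has no tool for this case; the mixture recovery sequence you rely on is the one that breaks.

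Second, your ``cleaner alternative'' --- Berge's theorem applied to the dual with ``a compact dual domain'' --- assumes compactness that does not hold. For $\KLiL$ the inner multiplier range is the \emph{half-open} interval $\big[0,\,1/(e^{-\rho+z\nu}-1)\big)$, and the outer variable $z$ ranges over an unbounded set on which the EVaR infimum may only be attained at $z^*=+\infty$ (precisely when $\eta\{X=1\}\ge1-\alpha$). A supremum of continuous functions over a non-compact set is only l.s.c., which is the direction you already have; u.s.c.\ is the issue. The paper's fix (Lemma~\ref{lem:cont-2case}) shrinks the multiplier interval by $\varepsilon$ and restricts $z$ to a compact window, and this is legitimate only in the interior region $x<e_\alpha(\eta)$ --- which is why the paper partitions into the three regimes $x>e_\alpha(\eta)$, $x<e_\alpha(\eta)$, $x=e_\alpha(\eta)$ rather than running one global dual argument. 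Your route~(a) is also unsound as stated: strict convexity of $\KL(\eta\Vert\cdot)$ pins the minimizer's density only on $\supp(\eta)$, while the off-support mass (which the projection genuinely uses, e.g.\ mass near $1$ for $\KLiU$) need not be unique, so continuity of $\eta\mapsto\kappa^\star(\eta)$ does not follow; and the ``internal tilt parameter in a compact set'' is exactly what fails near the boundary. To repair the proposal you would need to adopt the paper's case split and the Esscher-tilt argument for the degenerate boundary, and replace the compact-dual claim with an explicit $\varepsilon$-shrinking of the feasible multiplier set before invoking Berge.
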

\fix{The proof is deferred to Appendix~\ref{app:prf:lem:KLiLKLiU-cont}, as it relies on details of the KL-projection that are presented first.}
\begin{lemma}
\label{lem:stab_EVaR}
For $\nu\in \mathcal P([0,1])^K$, let $t^*(\nu)\subseteq\Sigma_K$ denote the set of maximizers of \eqref{Tmu}.
Then $t^*(\nu)$ is nonempty, compact, convex, and the correspondence $\nu\mapsto t^*(\nu)$ is upper hemicontinuous.
\end{lemma}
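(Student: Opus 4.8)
The plan is to recognize $T(\nu)$ as the value of a parametric maximization over the fixed compact simplex and to invoke Berge's Maximum Theorem, obtaining nonemptiness, compactness, and upper hemicontinuity in one stroke; convexity of $t^\ast(\nu)$ will then follow separately from concavity of the objective in $t$. Write, for each $j\neq 1$,
\[
G_j(\nu,t):=\inf_{(x,y)\in\Delta}\bigl\{t_1\,\KLiU(\nu_1,y)+t_j\,\KLiL(\nu_j,x)\bigr\},
\qquad \Delta:=\{(x,y)\in[0,1]^2:\ x\le y\},
\]
and set $F(\nu,t):=\min_{j\neq 1}G_j(\nu,t)$, so that $T(\nu)=\sup_{t\in\Sigma_K}F(\nu,t)$ and $t^\ast(\nu)=\argmax_{t\in\Sigma_K}F(\nu,t)$. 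Note that $\Delta$ is compact and $\Sigma_K$ is compact and convex; the constraint correspondence for the outer problem is the constant map $\nu\mapsto\Sigma_K$, which is trivially continuous and compact-valued, so the only real work is to establish joint continuity of $F$ and concavity of $F(\nu,\cdot)$.

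For continuity, I would first invoke Lemma~\ref{lem:KLiLKLiU-cont}, which gives that $(\eta,x)\mapsto\KLiU(\eta,x)$ and $(\eta,x)\mapsto\KLiL(\eta,x)$ are continuous on $\Pc([0,1])\times[0,1]$. Hence the bracketed integrand $(\nu,t,x,y)\mapsto t_1\KLiU(\nu_1,y)+t_j\KLiL(\nu_j,x)$ is jointly continuous on $\Mc\times\Sigma_K\times\Delta$. Minimizing a jointly continuous function over the fixed compact set $\Delta$ yields a continuous value function — this is again Berge's theorem, now applied to the inner minimization with the constant compact-valued constraint $\Delta$ — so each $G_j$ is continuous on $\Mc\times\Sigma_K$, and the finite minimum $F=\min_{j\neq 1}G_j$ is continuous as well.

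For concavity in $t$, I would observe that for each fixed $(x,y)$ the bracket is affine in $t$, so $G_j(\nu,\cdot)$, an infimum of affine (hence concave) functions, is concave on $\Sigma_K$; a pointwise minimum of concave functions, $F(\nu,\cdot)$ is therefore concave. Berge's Maximum Theorem applied to the outer problem now delivers that $T$ is continuous and that $t^\ast$ is nonempty, compact-valued, and upper hemicontinuous. Convexity of each $t^\ast(\nu)$ follows because it is the argmax of the concave function $F(\nu,\cdot)$ over the convex set $\Sigma_K$; compactness also follows directly, as $t^\ast(\nu)$ is a closed subset (by continuity of $F(\nu,\cdot)$) of the compact $\Sigma_K$.

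The main obstacle I anticipate is the joint continuity of $F$ through the inner infimum, specifically the behaviour of $\KLiU(\nu_1,y)$ as $y\uparrow 1$ and of $\KLiL(\nu_j,x)$ as $x\downarrow 0$, where the projections can blow up to $+\infty$, together with the $0\cdot\infty$ ambiguity when a coordinate of $t$ vanishes. I would handle this by relying on the genuine (finite-valued) continuity asserted in Lemma~\ref{lem:KLiLKLiU-cont} on the relevant domain and by noting that, since $\KLiU(\nu_1,\cdot)$ is monotone increasing and $\KLiL(\nu_j,\cdot)$ monotone decreasing, any minimizing $(x,y)$ is pushed away from the offending corner of $\Delta$ toward a region where both functionals are finite; the infimum is thus attained at finite values, so the interchange of the infimum with limits in $(\nu,t)$ is legitimate and the claimed continuity holds.
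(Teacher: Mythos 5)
Your proof is correct and follows the same overall strategy as the paper's: establish joint continuity of the objective via Lemma~\ref{lem:KLiLKLiU-cont} and an inner application of Berge's theorem, get concavity of $t\mapsto F(\nu,t)$ as a finite minimum of infima of affine functions, then conclude by Weierstrass, Berge's maximum theorem, and the fact that the argmax of a concave function over the convex $\Sigma_K$ is convex. The one genuine difference is in the inner decomposition: the paper first reduces the infimum to a single common threshold $x$ ranging over the $\nu$-dependent interval $I_{a,j}(\nu)=[\EVaR_\alpha(\nu_j),\EVaR_\alpha(\nu_a)]$, which forces it to verify (Corollary~\ref{cor:joint-cont-g}, via Lemma~\ref{lem:evar-cont}) that the interval endpoints move continuously so that Berge's minimum theorem applies to a \emph{varying} compact constraint correspondence; you instead keep the two-variable formulation over the \emph{fixed} compact set $\Delta=\{x\le y\}$, so your constraint correspondence is constant and that continuity-of-endpoints step disappears. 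What you pay for this is exactly the boundary issue you flag: on all of $\Delta$ the functional $\KLiL(\nu_j,\cdot)$ can be $+\infty$ at $x=0$ (its feasible set there degenerates to $\delta_0$), so ``jointly continuous'' must be read in the extended-real sense, with the additional $0\cdot\infty$ ambiguity when a coordinate of $t$ vanishes; your monotonicity argument (minimizers are pushed to $x=y$ in the finite region, and e.g.\ $x=\EVaR_\alpha(\nu_j)$ always yields a finite bracket, so the value function is uniformly finite and the interchange of limits is harmless) is the right repair and is, in substance, the same device as the paper's common-point interval reduction. The paper's own continuity lemma is in fact invoked there on $\Pc([0,1])\times(0,1)$, so neither treatment of the boundary is more rigorous than the other; your version is marginally more economical in auxiliary lemmas, the paper's keeps all functionals finite by construction.
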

Proof is given in Appendix~\ref{prf:lem:stab_EVaR}.
We need this lemma to ensure stability of the oracle weights, upper hemicontinuity (and convexity) of $t^*(\nu)$ guarantees that as the empirical instance $\hat\nu_n$ approaches the true $\nu$, any computed maximizer stays close to an optimal one and we use it in the analysis to keep the C-tracking  \cite{garivier2016optimal} allocations near-optimal along the run and to establish the $\delta$-correctness and asymptotic sample-complexity bound via continuity of the value $T(\nu)$ and the GLRT lower bound.

}
\subsection{KL functionals and Dual representations}









\label{sec:dis_str_kl}
\textbf{Discussion on non/convexity of each KL:}
Recall that \(\KL(\cdot\Vert\cdot)\) is strictly convex in its second argument. EVaR is concave in distribution, by Remark~\eqref{rem:EVaR_conc}. We now discuss feasible regions for each KL-projection.
In \(\KLiU\), the constraint \(\EVaR(\cdot)\ge \nu\) is a superlevel set of the EVaR functional, thus the set is convex; therefore \(\KLiU\) defines a convex program. 
In contrast, \(\KLiL\) imposes \(\EVaR(\cdot)\le \nu\), a sublevel set of a concave functional, which is generally nonconvex. Hence \(\KLiL\) is a nonconvex problem.


\noindent
\paragraph{\textbf{$\KLiU$ projection:}}
Under the MGF-based representation of EVaR in \eqref{def:evar:mgf}, which takes an infimum over $z>0$, the feasibility constraint must hold \emph{uniformly for all} $z>0$. This renders the associated convex optimization a \emph{semi-infinite} program, which is computationally burdensome. To obtain a tractable formulation, we instead adopt the (equivalent) robust KL-ball definition of EVaR in \eqref{def:evar:KL}, which introduces an auxiliary probability measure $Q$ and leads to the following primal problem:


\begin{equation}
\label{eq:EVaR-KLiU-primal}
\begin{aligned}
\KLiU(\eta, \nu) = \min_{\kappa,\,Q\in \Pcal([0,1])}\quad & \KL(\eta\Vert \kappa)\\
\text{s.t.}\quad & \E_Q[X]\ \ge\ \nu,\qquad \KL(Q\Vert \kappa)\ \le\ \rho.
\end{aligned}
\end{equation}




\begin{theorem}\label{Th:KLiU}
Fix $\rho>0$ and $v\in[0,1]$. For $\eta\in\mathcal P([0,1])$, the dual representation of \eqref{eq:EVaR-KLiU-primal} 
   \begin{equation}
     \KLiU(\eta, \nu) =   \max_{\lambda_1,\lambda_3 \in \mathcal G} \E_{\eta}\Bigg[ \log\bigg(1+\lambda_3(1-\exp(\frac{\lambda_1}{\lambda_3}(X-\nu)+\rho))\bigg)\Bigg]
    \end{equation}
    with the implicit feasibility condition
$
1+\lambda_3\big(1-e^{\frac{\lambda_1}{\lambda_3}(x-v)+\rho}\big)>0
$ for all $x\in\supp(\eta)$.
At any maximizer with $\lambda_3>0$, the primal optimizer $\kappa^*$ satisfies
    \begin{equation}
    \frac{d\kappa^*}{d\eta} = \frac{1}{1+\lambda_3\Bigg(1-\exp\big(\frac{\lambda_1}{\lambda_3}(X-\nu)+\rho\big)\Bigg)},
    \qquad
\frac{dQ^*}{d\kappa^*}(x)=\frac{e^{\frac{\lambda_1}{\lambda_3}x}}{\E_{\kappa^*}[e^{\frac{\lambda_1}{\lambda_3}X}]},
    \end{equation}
and complementary slackness yields 
$\E_{Q^*}[X]=v$ and $\KL(Q^*\Vert\kappa^*)=\rho$, hence 
$\E_{\kappa^*}[e^{\frac{\lambda_1}{\lambda_3}X}]=e^{\frac{\lambda_1}{\lambda_3}v-\rho}$.
The boundary case $\lambda_3=0$ corresponds to $Q^*=\kappa^*$ and yields value $0$ when $\E_{\kappa^*}[X]\ge v$.
\end{theorem}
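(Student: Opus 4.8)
The plan is to prove the identity by Lagrangian duality for the convex program~\eqref{eq:EVaR-KLiU-primal}: weak duality delivers $\KLiU(\eta,\nu)\ge g(\lambda_1,\lambda_3)$ for every admissible multiplier pair, where $g$ is the dual objective, and an explicit primal–dual certificate built from the stationarity (KKT) conditions closes the gap. First I would record that the program is convex and well posed. As noted in the non/convexity discussion, the feasibility constraint $\EVaR_\alpha(\kappa)\ge\nu$ is a superlevel set of a concave functional, hence convex; equivalently, $\kappa\mapsto\KL(\eta\Vert\kappa)$ is convex and l.s.c.\ (Lemma~\ref{lem:lsc}), $\E_Q[X]\ge\nu$ is linear in $Q$, and $(\kappa,Q)\mapsto\KL(Q\Vert\kappa)$ is jointly convex. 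Together with compactness of $\Pcal([0,1])$ (Remark~\ref{rem:compact-attain}) this gives attainment of the minimum. Assuming $\nu\in(0,1)$ one can exhibit a strictly feasible pair (a $\kappa$ with enough right-tail mass that $\EVaR_\alpha(\kappa)>\nu$, which by the robust form~\eqref{def:evar:KL} supplies a $Q$ with $\E_Q[X]>\nu$ and $\KL(Q\Vert\kappa)<\rho$), so Slater's condition holds and strong duality applies.

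Next I would introduce multipliers $\lambda_1\ge 0$ for $\nu-\E_Q[X]\le 0$ and $\lambda_3\ge 0$ for $\KL(Q\Vert\kappa)-\rho\le 0$, plus normalization multipliers for $\kappa$ and $Q$, and form the Lagrangian $L=\KL(\eta\Vert\kappa)-\lambda_1(\E_Q[X]-\nu)+\lambda_3(\KL(Q\Vert\kappa)-\rho)$. For fixed $\kappa$ and $\lambda_3>0$ the inner minimization over $Q$ is the Gibbs variational (Donsker–Varadhan) principle: writing $\beta:=\lambda_1/\lambda_3$,
\[
\min_{Q\in\Pcal([0,1])}\big\{\lambda_3\KL(Q\Vert\kappa)-\lambda_1\E_Q[X]\big\}=-\lambda_3\log\E_{\kappa}\big[e^{\beta X}\big],
\]
with minimizer $\tfrac{dQ^*}{d\kappa}=e^{\beta X}/\E_\kappa[e^{\beta X}]$, matching the stated tilt. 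Substituting and imposing stationarity in $\kappa$ (pointwise in the density) gives $\tfrac{d\kappa^*}{d\eta}=(\gamma-\lambda_3 C\,e^{\beta X})^{-1}$, where $C=1/\E_{\kappa^*}[e^{\beta X}]$ and $\gamma$ is the normalization multiplier; integrating $\int d\kappa^*=1$ against $\int d\eta=\int dQ^*=1$ forces $\gamma=1+\lambda_3$.

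Then I would invoke complementary slackness: when $\lambda_1>0$ the mean constraint is active, $\E_{Q^*}[X]=\nu$, and when $\lambda_3>0$ the KL-ball is active, $\KL(Q^*\Vert\kappa^*)=\rho$. Since $\KL(Q^*\Vert\kappa^*)=\log C+\beta\E_{Q^*}[X]$, these combine to give $C=e^{\rho-\beta\nu}$, equivalently $\E_{\kappa^*}[e^{\beta X}]=e^{\beta\nu-\rho}$, as claimed. Inserting $\gamma=1+\lambda_3$ and $C=e^{\rho-\beta\nu}$ into $\tfrac{d\kappa^*}{d\eta}$ reproduces the stated $\kappa^*$, and since $\KL(\eta\Vert\kappa^*)=\E_\eta[-\log\tfrac{d\kappa^*}{d\eta}]=\E_\eta[\log(\gamma-\lambda_3 C e^{\beta X})]$, the optimal value equals $\E_\eta[\log(1+\lambda_3(1-\exp(\beta(X-\nu)+\rho)))]$. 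The admissible set $\mathcal G$ is exactly the pairs $(\lambda_1,\lambda_3)$ with $\lambda_1,\lambda_3\ge 0$ keeping this denominator positive on $\supp(\eta)$, i.e.\ $\kappa^*$ a bona fide positive density. The degenerate case $\lambda_3=0$ is where the KL-ball is slack, forcing $Q^*=\kappa^*$ and value $0$ precisely when $\E_{\kappa^*}[X]\ge\nu$ (i.e.\ $\eta$ already feasible).

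The main obstacle I anticipate is making the infinite-dimensional duality rigorous rather than formal. Two points need care. First, strong duality and the pointwise stationarity step must be justified in the space of measures; I would prefer a guess-and-verify route, using weak duality (which always gives ``$\ge$'') and then checking that the candidate $(\kappa^*,Q^*,\lambda_1,\lambda_3)$ satisfies primal feasibility, dual feasibility, and complementary slackness, so that convexity forces optimality and the gap vanishes. Second, the clean density formula only describes $\kappa^*$ on $\supp(\eta)$: the stationarity relation $k(\gamma-\lambda_3 C e^{\beta x})=p$ permits extra mass exactly where the denominator vanishes, i.e.\ a possible boundary atom at a point $x_0\in(\sup\supp\eta,1]$ when $\nu$ is large. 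I would treat the positivity condition defining $\mathcal G$ as the regime in which no atom is needed, and dispatch the boundary atom (and the $\lambda_3=0$ edge) by a separate limiting argument, leaning on continuity of $\KLiU$ from Lemma~\ref{lem:KLiLKLiU-cont}.
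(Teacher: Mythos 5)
Your proposal follows essentially the same route as the paper's proof: the same Lagrangian with multipliers for the mean constraint, the KL ball, and the normalizations; Donsker--Varadhan to eliminate $Q$ (yielding the exponential tilt $dQ^*/d\kappa \propto e^{\beta X}$); first-order stationarity in $\kappa$ forcing $d\eta/d\kappa^* = \gamma - \lambda_3 e^{\beta X}/\E_{\kappa^*}[e^{\beta X}]$ with $\gamma = 1+\lambda_3$ by normalization; complementary slackness giving $\E_{\kappa^*}[e^{\beta X}] = e^{\beta\nu-\rho}$; and the same treatment of the boundary case $\lambda_3=0$. Your added attention to rigor (Slater's condition, the weak-duality guess-and-verify certificate, and the possible boundary atom where the density denominator vanishes) goes somewhat beyond the paper's formal KKT computation, but it refines rather than alters the argument.
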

Complete proof is given in Subsection~\ref{proof:prp:kliU}.


\noindent
\paragraph{\textbf{$\KLiL$ projection:}}
As discussed above, $\KLiL$ is intrinsically nonconvex.
Using the MGF representation of EVaR \eqref{def:evar:mgf}, the sublevel constraint is equivalent to the \emph{existence} of a positive scalar:
\[
\EVaR_{\alpha}(\kappa)\le \nu
\quad\Longleftrightarrow\quad
\exists\,z>0:\ \E_{\kappa}\!\big[e^{z X}\big]\ \le\ e^{-\rho+z\nu},
\qquad \rho:=-\log (1-\alpha).
\]
This yields a one-dimensional outer infimum over $z$ of a convex inner projection problem:
\begin{equation}
\label{eq:EVaR-KLiL-primal-beta}
\KLiL(\eta,\nu)
\;=\;
\inf_{z>0}\;
\min_{\kappa\in \mathcal P([0,1])}
\Big\{\, \KL(\eta\Vert \kappa)\ :\ \E_{\kappa}\!\big[e^{z X}\big]\ \le\ e^{-\rho+z\nu}\,\Big\}.
\end{equation}
Although $\KLiL$ is globally nonconvex, it remains computationally tractable in practice. The outer infimum is a one-dimensional search over $z>0$, and for each fixed $z$ the inner convex projection (or its scalar dual maximization) is much simpler than the $\KLiU$ case.
For each fixed $z$, the inner problem is convex in $\kappa$ (since $\kappa\mapsto \KL(\eta\Vert\kappa)$ is convex and the moment constraint is linear in $\kappa$); its Lagrange dual introduces a single scalar multiplier and takes the familiar log-partition form (cf. Prop.~\ref{Th:KLiL}).


\begin{theorem}\label{Th:KLiL}
Fix $\rho>0$, $\nu\in[0,1]$, and $\eta\in\mathcal P([0,1])$. Then
\[
\KLiL (\eta,\nu)
=\inf_{\ z>0}\ \sup_{\lambda\in\mathcal D(\ z,\nu)}
\ \E_\eta\!\Big[\log\big(1-\lambda\big(e^{-\rho+\ z\nu}-e^{\ z X}\big)\big)\Big],
\]
where
\[
\mathcal D(\ z,\nu)=
\begin{cases}
[0,\infty), & e^{-\rho+\ z\nu}\le 1,\\[4pt]
\big[0,\ 1/(e^{-\rho+\ z\nu}-1)\big), & e^{-\rho+\ z\nu}>1.
\end{cases}
\]
Moreover, for any maximizer $(\ z,\lambda)$ with $e^{-\rho+\ z\nu}>1$ and $\lambda>0$, the primal minimizer is
\[
\frac{d\kappa^*}{d\eta}(x)=\frac{1}{\,1-\lambda\big(e^{-\rho+\ z\nu}-e^{\ z x}\big)}\,,
\]
and the moment constraint is tight:
$\int e^{\ z x}\,d\kappa^*(x)=e^{-\rho+\ z\nu}$.
\end{theorem}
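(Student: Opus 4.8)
The starting point is the reduction already recorded in \eqref{eq:EVaR-KLiL-primal-beta}, which expresses $\KLiL(\eta,\nu)$ as an outer one-dimensional infimum over $z>0$ of an inner KL-projection under a single linear (generalized-moment) constraint $\E_\kappa[e^{zX}]\le c$ with $c:=e^{-\rho+z\nu}$. Since the outer $\inf_{z>0}$ is carried through unchanged, the plan is to prove, for each fixed $z>0$, the inner identity
\[
V_z:=\min_{\kappa\in\Pcal([0,1]):\,\E_\kappa[e^{zX}]\le c}\KL(\eta\Vert\kappa)
=\max_{\lambda\in\mathcal D(z,\nu)}\E_\eta\!\big[\log(1-\lambda(c-e^{zX}))\big],
\]
and then simply re-attach the outer infimum. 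I would first fix the domain $\mathcal D(z,\nu)$ by requiring the integrand's argument $w_\lambda(x):=1-\lambda(c-e^{zx})$ to stay positive on $[0,1]$: since $e^{zx}\in[1,e^{z}]$ for $z>0$, the binding point is $x=0$, which gives $w_\lambda>0$ for all $\lambda\ge 0$ when $c\le 1$ and for $\lambda\in[0,1/(c-1))$ when $c>1$, exactly the stated $\mathcal D(z,\nu)$.

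For the inequality $V_z\ge\sup_\lambda\Phi(\lambda)$ (weak duality), where $\Phi(\lambda):=\E_\eta[\log w_\lambda]$, I would use only Jensen/nonnegativity of KL. For any feasible $\kappa$ and $\lambda\in\mathcal D$, writing $m:=w_\lambda\kappa$ for the tilted positive measure,
\[
\KL(\eta\Vert\kappa)-\E_\eta[\log w_\lambda]=\KL(\eta\Vert m)\ge-\log\!\big(m([0,1])\big)=-\log\!\big(1-\lambda(c-\E_\kappa[e^{zX}])\big)\ge 0,
\]
where the last inequality uses $\lambda\ge 0$ and feasibility $\E_\kappa[e^{zX}]\le c$. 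This gives $\KL(\eta\Vert\kappa)\ge\Phi(\lambda)$ pointwise, hence $V_z\ge\sup_\lambda\Phi$.

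For the matching bound and the optimizer I would treat the nontrivial regime $c>1$ with $\E_\eta[e^{zX}]>c$ (otherwise $\kappa=\eta$ is feasible and $V_z=0=\Phi(0)$). The dual objective is concave on the interval $\mathcal D$ (its second derivative is $-\E_\eta[(c-e^{zX})^2/w_\lambda^2]\le 0$) with $\Phi'(0)=\E_\eta[e^{zX}]-c>0$, so any maximizer $\lambda^\star$ is interior with $\lambda^\star>0$ and satisfies the first-order condition $\Phi'(\lambda^\star)=\E_\eta[(e^{zX}-c)/w_{\lambda^\star}]=0$. The crucial algebraic identity is $1/w_{\lambda^\star}=1-\lambda^\star(e^{zX}-c)/w_{\lambda^\star}$; taking $\E_\eta$ and using the first-order condition shows \emph{simultaneously} that $\E_\eta[1/w_{\lambda^\star}]=1$ and $\E_\eta[e^{zX}/w_{\lambda^\star}]=c$. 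Hence the candidate $\kappa^\star$ with $d\kappa^\star/d\eta=1/w_{\lambda^\star}$ is automatically a probability measure and makes the moment constraint tight, which is exactly the claimed primal optimizer. Substituting this $\kappa^\star$ back turns every inequality in the weak-duality chain into an equality (the normalized tilt equals $\eta$, and $m([0,1])=1$), yielding $\KL(\eta\Vert\kappa^\star)=\Phi(\lambda^\star)=V_z$.

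The main obstacle I anticipate is not the interior computation but the attainment/strong-duality bookkeeping at the boundary of $\mathcal D$. When $c\le 1$ the constraint $\E_\kappa[e^{zX}]\le c\le 1$ is infeasible except in degenerate cases (since $e^{zX}\ge 1$), and one must check that the dual side diverges to $+\infty$ consistently; when $c>1$ but the maximizer escapes to $\lambda\to 1/(c-1)^-$, the projection tries to concentrate near $x=0$ and may fail to be attained by a genuine density, so one needs the compactness and lower-semicontinuity machinery (Remark~\ref{rem:compact-attain}, Lemma~\ref{lem:lsc}) to guarantee primal attainment on $\Pcal([0,1])$ together with a Slater point (a measure with $\E_\kappa[e^{zX}]<c$) to license strong duality in those edge cases. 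I would dispatch the remaining degenerate configurations ($\eta=\delta_0$, or $\min\supp(\eta)>0$) by direct inspection, and finally conclude by reinstating the outer $\inf_{z>0}$.
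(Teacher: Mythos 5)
Your proposal is correct, and while it uses the same decomposition as the paper (the outer infimum over $z$ from \eqref{eq:EVaR-KLiL-primal-beta}, the same feasibility domain $\mathcal D(z,\nu)$ obtained by enforcing positivity of the log-argument at the binding point $x=0$, and the same tilted primal optimizer), your inner strong-duality argument takes a genuinely different route. The paper proceeds by an infinite-dimensional Lagrangian with two multipliers ($\lambda$ for the moment constraint and $\tau$ for normalization), computes directional derivatives along mixture paths $\kappa_t=(1-t)\kappa^*+t\kappa$ to obtain the KKT stationarity $\frac{d\eta}{d\kappa^*}(x)=\tau+\lambda e^{zx}$, and then eliminates $\tau$ via the normalization and complementary-slackness conditions (yielding $\tau=1-\lambda e^{z\nu-\rho}$). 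You avoid the KKT machinery entirely: writing $c:=e^{-\rho+z\nu}$ and $w_\lambda(x):=1-\lambda(c-e^{zx})$, weak duality follows in one line from the change-of-measure identity $\KL(\eta\Vert\kappa)-\E_\eta[\log w_\lambda]=\KL(\eta\Vert w_\lambda\kappa)\ge-\log\int w_\lambda\,d\kappa\ge 0$ for feasible $\kappa$, and strong duality becomes a verification argument in a \emph{scalar} concave problem: the first-order condition $\E_\eta[(e^{zX}-c)/w_{\lambda^\star}]=0$ combined with the pointwise identity $1/w_{\lambda^\star}=1-\lambda^\star(e^{zX}-c)/w_{\lambda^\star}$ simultaneously yields $\E_\eta[1/w_{\lambda^\star}]=1$ (so $\kappa^\star$ is automatically a probability measure) and $\E_{\kappa^\star}[e^{zX}]=c$ (tightness), so no normalization multiplier is needed and complementary slackness emerges transparently rather than being imposed. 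What your route buys is genuine rigor in the interior regime, where the paper's derivation is formal (it presumes a minimizer, differentiability of the mixture path, and existence of multipliers); what the paper's route buys is a uniform template shared with the $\KLiU$ proof of Theorem~\ref{Th:KLiU}. Both treatments leave the same edge cases at sketch level — the dual supremum escaping to the open endpoint $\lambda\uparrow 1/(c-1)$, and the regime $e^{-\rho+z\nu}\le 1$ where the inner primal is essentially infeasible and the dual diverges (the paper's closing remark notes the outer infimum then selects $z>\rho/\nu$) — and you correctly identify that closing them requires the compactness/l.s.c. machinery of Remark~\ref{rem:compact-attain} and Lemma~\ref{lem:lsc} together with a Slater point (e.g.\ $\delta_0$ when $c>1$). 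Since the theorem's ``moreover'' clause is explicitly conditional on a maximizer with $\lambda>0$, this deferral is not a gap relative to what is claimed, and your treatment of it is no less complete than the paper's.
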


We provide the proof  in Subsection~\ref{sec:prf:KLiL}

\section{Algorithms}
\label{sec:alg:evar}

Fix a bandit instance $\mu=(\mu_a)_{a\in[K]} \in \mathcal{M}$ with rewards supported on $[0,1]$ and EVaR level $\alpha\in(0,1)$. 
Let $\rho:= -\log(1-\alpha)$ and write $\EVaR_\alpha(\eta)$ for the EVaR of a distribution $\eta$. 
Denote by $\hat\mu(n)=(\hat\mu_a(n))_{a\in[K]}$ the vector of empirical measures after $n$ pulls, and by $N_a(n)$ the number of samples drawn from arm $a$ up to time $n$.
Given a bandit problem $\mu\in\Mc$, we propose a sampling rule, a stopping rule, and a recommendation rule.
For sampling rule we use C-Tracking \cite{garivier2016optimal}, we use generalized likelihood ratio test (GLRT) for stopping time and then the algorithm outputs the arm with the minimum EVaR of the corresponding empirical distribution, i.e., if $\tau_{\delta}$ is the stopping time of the algorithm, then it outputs $\argmin_a \EVaR(\hat{\mu}_{\tau_\delta })$. 
The main technical change is instead of classical KL-divergence we have $\KLiU$ and $\KLiL$ which need to be solve at each round with empirical distribution we estimated until then.

\paragraph{Oracle proportions}
Recall the term $T(\mu)$ in the lower bound \ref{Tmu},
we simplify because the minimum achieves at a common point for the two terms
\[
T(\mu)\;=\;\sup_{t\in\Sigma_K}\;\min_{j\neq 1}\;\inf_{x\in [\EVaR(\nu_1),\EVaR(\nu_j)]}
\Big\{\,t_1\,\KLiU(\mu_1,x)\;+\;t_j\,\KLiL(\mu_j,x)\,\Big\},
\]

\paragraph{Sampling rule (C-tracking of the oracle proportions).}
At round $n$, we have  empirical distributions 
and compute a maximizer
\[
t^*\big(\hat\mu(n)\big)\;\in\;\argmax_{t\in\Sigma_K}\;\min_{j\neq 1}\ \inf_{x}\Big\{t_1\,\KLiU\!\big(\hat\mu_1(n),x\big)+t_j\,\KLiL\!\big(\hat\mu_j(n),x\big)\Big\},
\]
and pull the arm prescribed by the C-tracking rule that tracks $t^*\!\big(\hat\mu(n)\big)$ with mild forced exploration \cite{garivier2016optimal}. 

\paragraph{Stopping rule}
Let $i(n)\in\argmin_{a} \EVaR_\alpha\!\big(\widehat\mu_a(n)\big)$ be the empirically best-EVaR arm. 
Define the generalized likelihood ratio statistic
\[
Z_{i(n)}(n)\;:=\;\min_{a\neq i(n)}\ \inf_{x\le y}\Big\{\,N_{i(n)}(n)\,\KLiU\!\big(\widehat\mu_{i(n)}(n),y\big)\;+\;N_a(n)\,\KLiL\!\big(\widehat\mu_a(n),x\big)\,\Big\},
\]
and stop at the hitting time
\[
\tau_\delta\;:=\;\inf\Big\{n\ge 1:\ Z_{i(n)}(n)\ \ge\ \beta(n,\delta)\Big\},
\qquad
\beta(n,\delta)\;=\;\log\!\Big(\tfrac{K-1}{\delta}\Big)+3\log(n+1)+2.
\]
At time $\tau_\delta$, output the arm minimizing the empirical EVaR 
$\widehat a_\delta\;\in\;\argmin_{a\in[K]}\ \EVaR_\alpha\!\big(\widehat\mu_a(\tau_\delta)\big)$.

 The algorithm makes an error if at the stopping time, there is an arm $j\neq 1$ such that its EVaR is not the lowest.
Let the error event be denoted by $\Ec$.
\begin{theorem}\label{Th:upp_sample}
For $\delta\in (0,1)$ and $\mu \in \Mc$ the proposed algorithm sis $\delta$-correct algorithm with chose $\beta(\delta, t)$, satisfies
\[
\limsup_{\delta \to 0} \frac{\E_{\mu}[\tau_\delta]}{\log(\frac{1}{\delta})} \leq \frac{1}{T(\mu)}
\]
\end{theorem}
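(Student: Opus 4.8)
The plan is to follow the Track-and-Stop template of \citet{garivier2016optimal}, splitting the statement into two independent parts: \textbf{(i)} $\delta$-correctness, i.e. $\PP_\mu(\hat a_{\tau_\delta}\neq 1)\le\delta$ (which is what justifies the stated $\beta(n,\delta)$), and \textbf{(ii)} the asymptotic upper bound on $\E_\mu[\tau_\delta]$. Throughout I relabel so that arm $1$ is the unique EVaR-optimal arm, write $v_a:=\EVaR_\alpha(\mu_a)$ (so $v_1<v_j$ for all $j\neq1$), and abbreviate $Z(n):=Z_{i(n)}(n)$.

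For part \textbf{(i)} I would reduce the GLRT statistic to ordinary empirical KL's. On the error event the algorithm stops at some finite $n$ with $i(n)=j\neq1$ and $Z_j(n)\ge\beta(n,\delta)$. Since arm $1$ is one of the competitors in the $\min$ defining $Z_j$, I evaluate the inner $\inf_{x\le y}$ at the feasible common threshold $x=y=v_1$ to get
\[
\beta(n,\delta)\ \le\ Z_j(n)\ \le\ N_j(n)\,\KLiU(\hat\mu_j(n),v_1)\;+\;N_1(n)\,\KLiL(\hat\mu_1(n),v_1).
\]
Because $\EVaR_\alpha(\mu_j)=v_j>v_1$, the true marginal $\mu_j$ lies in $\{\EVaR_\alpha\ge v_1\}$ and is feasible for the $\KLiU(\cdot,v_1)$ projection, so $\KLiU(\hat\mu_j(n),v_1)\le\KL(\hat\mu_j(n)\Vert\mu_j)$; symmetrically $\mu_1$ lies in $\{\EVaR_\alpha\le v_1\}$, giving $\KLiL(\hat\mu_1(n),v_1)\le\KL(\hat\mu_1(n)\Vert\mu_1)$. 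Hence the error event is contained in $\{\exists n:\ N_j(n)\KL(\hat\mu_j(n)\Vert\mu_j)+N_1(n)\KL(\hat\mu_1(n)\Vert\mu_1)\ge\beta(n,\delta)\}$. I would then kill this probability with a time-uniform deviation inequality for the empirical KL of $[0,1]$-supported distributions, obtained by the supermartingale (method-of-mixtures) argument flagged in the introduction; a union bound over the $K-1$ competitors together with the polynomial correction $3\log(n+1)+2$ in $\beta$ brings the total below $\delta$.

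For part \textbf{(ii)} I would use that C-tracking with forced exploration samples every arm $\Omega(\sqrt n)$ times, so $N_a(n)\to\infty$ and $\hat\mu_a(n)\Rightarrow\mu_a$ almost surely. Fix $\epsilon>0$ and let $\tau_0(\epsilon):=\inf\{n_0:\forall n\ge n_0,\ \hat\mu(n)\ \text{is in a neighborhood of}\ \mu\}$, which is a.s.\ finite and integrable by exponential concentration of the empirical measures. On $\{n\ge\tau_0(\epsilon)\}$ three things hold simultaneously: the empirical best arm equals $1$ (continuity of $\EVaR_\alpha$, Lemma~\ref{lem:evar-cont}); the tracked proportions satisfy $\max_a|N_a(n)/n-t_a^*(\mu)|\le\epsilon$ (upper hemicontinuity and convexity of $t^*$, Lemma~\ref{lem:stab_EVaR}, together with the C-tracking guarantee); and, transferring $N_a(n)/n\to t_a^*(\mu)$ and $\hat\mu_a(n)\Rightarrow\mu_a$ through the joint continuity of $\KLiU,\KLiL$ (Lemma~\ref{lem:KLiLKLiU-cont}) and of the value $T$, one gets $Z(n)\ge n\,(T(\mu)-\epsilon)$. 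Since $\beta(n,\delta)$ is sublinear in $n$, the stopping rule fires at some $n_\delta\sim\log(1/\delta)/(T(\mu)-\epsilon)$, so $\tau_\delta\le\max\{\tau_0(\epsilon),n_\delta\}$; taking expectations, dividing by $\log(1/\delta)$, and letting $\delta\to0$ then $\epsilon\downarrow0$ yields $\limsup_{\delta\to0}\E_\mu[\tau_\delta]/\log(1/\delta)\le 1/T(\mu)$.

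The single genuinely new obstacle is the time-uniform nonparametric concentration inequality underlying part \textbf{(i)}: unlike the mean case, $\KLiU$ and $\KLiL$ are projections onto a convex and a nonconvex EVaR level set respectively, so I would derive the required supermartingale control from the dual variational forms of Theorems~\ref{Th:KLiU}–\ref{Th:KLiL}, which exhibit each projection as a supremum of integrals of concave log-likelihood increments and thereby furnish the exponential supermartingale needed for a Ville/Doob-type maximal inequality. Given the continuity and stability lemmas already established, the sample-complexity half is then routine; the only place demanding care there is the linear lower bound $Z(n)\ge n(T(\mu)-\epsilon)$, which is exactly what those continuity results are set up to deliver.
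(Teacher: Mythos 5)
Your part \textbf{(ii)} is essentially the paper's argument (C-tracking convergence, joint continuity of the projections via Lemma~\ref{lem:KLiLKLiU-cont}, stability of $t^*$ via Lemma~\ref{lem:stab_EVaR}, sublinear threshold), and your evaluation point $x=y=v_1$ versus the paper's $(x,y)=(\nu_1,\nu_i)$ is immaterial since both thresholds are certified by the true laws. The genuine gap is in part \textbf{(i)}, in the reduction to plain empirical KL. The dominations $\KLiU(\hat\mu_j(n),v_1)\le\KL(\hat\mu_j(n)\Vert\mu_j)$ and $\KLiL(\hat\mu_1(n),v_1)\le\KL(\hat\mu_1(n)\Vert\mu_1)$ are correct as inequalities (the true marginal is feasible for each projection), but in this nonparametric setting they are vacuous: whenever $\mu_j$ has a nonatomic component, the empirical measure charges sample points of $\mu_j$-measure zero, so $\hat\mu_j(n)\not\ll\mu_j$ and $\KL(\hat\mu_j(n)\Vert\mu_j)=+\infty$ almost surely. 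Consequently the ``time-uniform deviation inequality for the empirical KL of $[0,1]$-supported distributions'' that your part \textbf{(i)} rests on does not exist; this is exactly why the paper never detours through the unprojected KL. Instead, Proposition~\ref{prop:anytime-evar} proves anytime concentration \emph{directly} for the projected statistic $N_i(n)\,\KLiU(\hat\mu_i(n),\nu_i)+N_1(n)\,\KLiL(\hat\mu_1(n),\nu_1)$: the dual representations of Theorems~\ref{Th:KLiU} and~\ref{Th:KLiL} (with a fixed-slope relaxation $z_0$ to bypass the nonconvex outer infimum in $\KLiL$) write each empirical projection as a supremum over a few scalar dual parameters of $\sum_j \log M(X^i_j;\cdot)$, where each factor has conditional mean at most $1$ under the true law precisely because the true law satisfies the corresponding EVaR constraint; the supremum over dual parameters is then traded for a mixture via the exp-concavity established in Lemma~\ref{lem:expconcave-evar} together with the aggregation inequality of \cite{agrawal2021BAIcvar}, and Ville's inequality finishes.

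Your closing paragraph actually gestures at this dual-supermartingale route, but as written it is offered as a tool for a concentration statement that your earlier reduction has already replaced by the (almost surely infinite) plain KL, so the two halves of your $\delta$-correctness argument are inconsistent: the dual route must carry the entire burden, with no plain-KL intermediary. Note also that the additive correction $h(n)=3\log(n+1)+2$, which you quote without derivation, is not a generic method-of-mixtures constant; it comes from the dimension counting in the aggregation step ($d_U=2$ dual parameters $(\lambda_1,\lambda_3)$ for $\KLiU$ and $d_L=1$ parameter $\gamma$ for $\KLiL$ after fixing $z_0$), so recovering the stated $\beta(n,\delta)$ requires that specific mechanism rather than a black-box maximal inequality.
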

Proof of Theorem~\ref{Th:upp_sample} is given in Appendix~\ref{app:subsec:prf:Th:upp_sample}.

By writing both KL functionals through the EVaR duals and
constructing mixture-supermartingale argument we deliver a time-uniform deviation inequality that leads to the threshold $\beta(n,\delta)$ above. 
With C-tracking, the sampling proportions converge to the oracle set $t^*(\mu)$, yielding $\delta$-correctness and asymptotic optimality.
The key technical change compare to \cite{agrawal2021BAIcvar} is that we evaluate Evar-tailored $\KLiU$ and $\KLiL$ (discussed in Section~\ref{sec:dis_str_kl}) at each round on empirical inputs.  

\cmt{\section{Experiment and Discussions???}
\label{sec:experiment}
}
\section{Conclusion}
We provide the first asymptotically optimal algorithms for best-arm identification under Entropic Value at Risk (EVaR). By exploiting EVaR’s entropic dual form, we obtain non-parametric procedures with provable guarantees. The analysis departs from CVaR/VaR approaches \cite{agrawal2021BAIcvar}, tailored to entropic risk measures.

Methodologically, our contribution is twofold. We derive \emph{anytime EVaR-KL concentration} via mixture supermartingales constructed by using dual representations, which yields tight one-sided information projections under EVaR constraints. This delivers $\delta$-correct stopping rules and instance-dependent sample complexity characterized by a risk-aware \emph{characteristic time}.  We instantiate a GLRT/Track\&Stop-style procedure whose asymptotic optimality matches the information-theoretic lower bound in the EVaR setting, without parametric assumptions and for general laws supported on $[0,1]$.

\fix{Two promising directions emerge from our results and prior work on BAI with risk constraints: developing a unified framework that treats coherent risks under a single universal constraint class (via common duality and KL-projection primitives). Such a framework could yield instance-optimal lower bounds and algorithmic templates that specialize automatically to each risk, clarifying when the same sampling and stopping rules remain optimal across the entire family of risk measures.}


\bibliography{main.bib}

\appendix
\label{Appendix_new}

\section{Proofs related to the Continuity of EVaR}
\fix{
This appendix provides proofs for the technical results supporting Section~\ref{sec:assum}. First, we present a continuity lemma for the log-moment generating function. Then, we investigate the behavior of EVaR, specifically when its minimizer approaches $\infty$. This lemma and its corollary are essential for the proof of the joint continuity of the KL-projection (Lemma~\ref{lem:EVaR-KLL-usc}). Finally, we establish the continuity of EVaR in distribution (Lemma~\ref{lem:evar-cont}) and provide the detailed proof for Corollary~\ref{cor:evar-level-sets}.
}
\begin{lemma}[Continuity of Log-MGF]\label{lem:logmgf-cont}
Let $X\in[0,1]$ and, for $\kappa\in\Pc([0,1])$ and $z>0$, define
$\Lambda(\kappa, z)\;:=\;\log \E_\kappa\!\big[e^{z X}\big]$. Then $(\kappa,z)\mapsto \Lambda(\kappa, z)$ is continuous on $\Pc([0,1])\times(0,\infty)$
with the weak topology on $\Pc([0,1])$.
\end{lemma}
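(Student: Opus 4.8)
The plan is to reduce the statement to the joint continuity of the bare moment integral $G(\kappa,z):=\E_\kappa[e^{zX}]=\int_0^1 e^{zx}\,d\kappa(x)$ and then compose with $\log$. Since $X\in[0,1]$ and $z>0$ force $e^{zX}\ge 1$ pointwise, we have $G(\kappa,z)\ge 1$ on the whole domain, so $G$ takes values in $[1,\infty)$, where $\log$ is continuous; hence once $G$ is shown to be continuous, $\Lambda=\log\circ G$ is continuous as well. Because $\Pc([0,1])$ equipped with the weak topology is metrizable (e.g.\ by the L\'evy--Prokhorov or bounded-Lipschitz metric) and $(0,\infty)$ is metric, it suffices to verify sequential continuity: fix $(\kappa,z)$ with $z>0$ and take $\kappa_n\Rightarrow\kappa$ and $z_n\to z$.

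The one subtlety is that the relevant test function $x\mapsto e^{z_n x}$ varies with $n$, so weak convergence does not apply verbatim. I would resolve this with the standard triangle-inequality split
\[
|G(\kappa_n,z_n)-G(\kappa,z)|
\;\le\;
\underbrace{|G(\kappa_n,z_n)-G(\kappa_n,z)|}_{(\mathrm{I})}
\;+\;
\underbrace{|G(\kappa_n,z)-G(\kappa,z)|}_{(\mathrm{II})}.
\]
For term $(\mathrm{II})$ the test function $x\mapsto e^{zx}$ is now \emph{fixed}, bounded, and continuous on $[0,1]$, so $\kappa_n\Rightarrow\kappa$ gives $(\mathrm{II})\to 0$ directly from the definition of weak convergence. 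For term $(\mathrm{I})$ I would bound uniformly in the measure:
\[
(\mathrm{I})\;\le\;\sup_{x\in[0,1]}\big|e^{z_n x}-e^{zx}\big|
\;\le\;|z_n-z|\,e^{\max(z_n,z)}\;\longrightarrow\;0,
\]
where the middle inequality follows from the mean value theorem applied to $z\mapsto e^{zx}$ together with $x\le 1$, and the limit holds because $\max(z_n,z)$ stays bounded along the sequence. Both terms vanish, so $G(\kappa_n,z_n)\to G(\kappa,z)$, which establishes joint continuity of $G$ and hence of $\Lambda$.

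I do not expect a genuine obstacle here: the compact range $[0,1]$ for $X$ and the restriction $z>0$ (kept in a compact neighborhood of the limit along the sequence) make every exponential uniformly bounded, so the only care needed is to decouple the moving test function from the moving measure via the split above. If one prefers a non-sequential argument, the same two estimates show continuity at $(\kappa,z)$ for the product topology, since the bound on $(\mathrm{I})$ is uniform over all measures and the weak-continuity of $\kappa\mapsto\int_0^1 e^{zx}\,d\kappa(x)$ for fixed $z$ is immediate.
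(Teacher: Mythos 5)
Your proof is correct and takes essentially the same route as the paper's: the identical triangle-inequality split into a moving-$z$/fixed-measure term and a fixed-$z$/moving-measure term, with weak convergence (Portmanteau) disposing of the latter and a uniform bound on $[0,1]$ disposing of the former, followed by composition with $\log$ on $[1,\infty)$. Your mean-value-theorem estimate for the first term is in fact slightly cleaner than the paper's appeal to dominated convergence (which is loose there, since the measure $\kappa_n$ varies with $n$); the substance is the same uniform convergence of the test functions on a compact $z$-neighborhood.
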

\begin{proof}\label{app:prf:lem:logmgf-cont}
Fix $(\kappa_0,\theta_0)\in\Pc([0,1])\times(0,\infty)$ and let $(\kappa_n,\theta_n)\to(\kappa_0,\theta_0)$.
Because $x\mapsto e^{\theta x}$ is bounded and continuous on $[0,1]$ for each fixed $\theta>0$,
Portmanteau’s theorem gives $\E_{\kappa_n}[e^{\theta_0 X}]\to\E_{\kappa_0}[e^{\theta_0 X}]$.

For joint continuity in $(\kappa,\theta)$, note that on any compact $\Theta\subset(0,\infty)$ we have the uniform bound
$e^{\theta x}\le e^{\sup\Theta}$ for all $(\theta,x)\in\Theta\times[0,1]$ and the map
$(\theta,x)\mapsto e^{\theta x}$ is continuous. Hence, given $(\kappa_n,\theta_n)\to(\kappa_0,\theta_0)$ with all $\theta_n\in\Theta$,
\[
\E_{\kappa_n}[e^{\theta_n X}]-\E_{\kappa_0}[e^{\theta_0 X}]
=\underbrace{\big(\E_{\kappa_n}[e^{\theta_n X}]-\E_{\kappa_n}[e^{\theta_0 X}]\big)}_{\to 0\ \text{by dominated conv.}}
+\underbrace{\big(\E_{\kappa_n}[e^{\theta_0 X}]-\E_{\kappa_0}[e^{\theta_0 X}]\big)}_{\to 0\ \text{by Portmanteau}} \ \longrightarrow\ 0.
\]
Since $\E_\kappa[e^{\theta X}]\ge 1$ for all $\theta>0$, the logarithm is continuous at these values, yielding
$\Lambda(\kappa_n,\theta_n)\to \Lambda(\kappa_0,\theta_0)$.
\end{proof}
\begin{lemma}\label{lem:f-bounds}
Let $X\in[0,1]$, $\eta\in\Pc([0,1])$, $\rho:=-\log(1-\alpha)>0$, and
\[
f_\eta(z):=\frac{\log \E_\eta[e^{zX}]+\rho}{z},\qquad \xmax(\eta)\ :=\ \operatorname*{ess\,sup}_{\eta} X \in[0,1].
\]
Then for every $z>0$,
\[
\xmax(\eta)\;+\;\frac{\log \eta\{X=\xmax(\eta)\}+\rho}{z}
\;\le\;
f_\eta(z)
\;\le\;
\xmax(\eta)\;+\;\frac{\rho}{z},
\]
(with the convention $\log 0=-\infty$), and
\[
\lim_{z\to\infty} f_\eta(z)=\xmax(\eta).
\]
\end{lemma}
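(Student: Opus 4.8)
The plan is to isolate the essential supremum $m := \xmax(\eta)$ and sandwich the log-moment generating function by elementary almost-sure comparisons. For the upper bound, I would use that $X \le m$ holds $\eta$-a.s., so that $e^{zX} \le e^{zm}$ and hence $\E_\eta[e^{zX}] \le e^{zm}$; taking logarithms and dividing by $z>0$ gives $f_\eta(z) \le m + \rho/z$ directly.

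For the lower bound, I would restrict the expectation to the atom at $m$: since $e^{zX}$ is nonnegative, $\E_\eta[e^{zX}] \ge e^{zm}\,\eta\{X=m\}$. When $\eta\{X=m\} > 0$, taking logs yields $\log \E_\eta[e^{zX}] \ge zm + \log\eta\{X=m\}$, and dividing by $z$ produces the stated lower bound; when $\eta\{X=m\} = 0$ the right-hand side equals $-\infty$ and the inequality is vacuous under the convention $\log 0 = -\infty$. This handles the two-sided bound for every fixed $z>0$.

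For the limit, the upper bound immediately gives $\limsup_{z\to\infty} f_\eta(z) \le m$ because $\rho/z \to 0$. The matching $\liminf$ is where the only genuine subtlety arises, since the atom-based lower bound degenerates when $m$ carries no mass. To treat all cases uniformly, I would slice just below the essential supremum: for every $\varepsilon > 0$ the definition of $m$ guarantees $q_\varepsilon := \eta\{X > m - \varepsilon\} > 0$, and restricting the expectation to this event gives $\E_\eta[e^{zX}] \ge e^{z(m-\varepsilon)} q_\varepsilon$, whence $f_\eta(z) \ge (m-\varepsilon) + (\log q_\varepsilon + \rho)/z$. Sending $z \to \infty$ yields $\liminf_{z\to\infty} f_\eta(z) \ge m - \varepsilon$, and letting $\varepsilon \downarrow 0$ gives $\liminf_{z\to\infty} f_\eta(z) \ge m$. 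Combined with the $\limsup$ bound, this establishes $\lim_{z\to\infty} f_\eta(z) = m = \xmax(\eta)$.

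The main obstacle, modest as it is, is precisely this last step when the essential supremum is not an atom: the clean atom-at-$m$ bound is useless there, and one must pass to the slightly smaller threshold $m - \varepsilon$, which always captures positive mass by definition of $\essp$, and then optimize over $\varepsilon$. Everything else reduces to monotonicity of the logarithm together with the a.s. domination $X \le m$.
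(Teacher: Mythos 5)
Your proof is correct, and your two-sided bounds are exactly the paper's argument: a.s.\ domination $X\le \xmax(\eta)$ for the upper bound, and restriction to the atom $\{X=\xmax(\eta)\}$ for the lower bound. Where you differ is the limit, and there your version is actually more complete than the paper's. The paper disposes of $\lim_{z\to\infty} f_\eta(z)=\xmax(\eta)$ with ``the limit follows by squeezing,'' but the squeeze between the two displayed bounds is literally valid only when $\eta\{X=\xmax(\eta)\}>0$; if the essential supremum carries no atom (e.g.\ $\eta$ uniform on $[0,1]$), the lower bound is identically $-\infty$ and gives nothing. Your $\varepsilon$-slicing argument --- using $q_\varepsilon:=\eta\{X>\xmax(\eta)-\varepsilon\}>0$, which holds for every $\varepsilon>0$ by the definition of the essential supremum, to get $f_\eta(z)\ge(\xmax(\eta)-\varepsilon)+(\log q_\varepsilon+\rho)/z$ and then letting $z\to\infty$, $\varepsilon\downarrow 0$ --- is the standard and correct way to close this gap, and it is exactly what the paper's one-line squeeze silently presupposes. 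So: same route for the inequalities, and a genuinely tighter treatment of the limit that the paper would do well to adopt.
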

\begin{proof}
\label{prf:lem:f-bounds}
Since $X\le \xmax(\eta)$ a.s., $\E_\eta[e^{zX}]\le e^{z \xmax(\eta)}$, hence
$f_\eta(z)\le \xmax(\eta)+\rho/z$. On the other hand, $\E_\eta[e^{zX}]\ge \eta\{X=\xmax(\eta)\}\,e^{z \cdot\xmax(\eta)}$, so
$f_\eta(z)\ge \xmax(\eta)+(\log \eta\{X=\xmax(\eta)\}+\rho)/z$. The limit follows by squeezing.
\end{proof}

\begin{corollary}
\label{cor:no-equality}
If $\rho>0$ (i.e., $\alpha\in(0,1)$), then for any finite $z>0$,
\(f_\eta(z)=\xmax(\eta)\quad\text{never holds.}
\)
Indeed, the upper bound in Lemma~\ref{lem:f-bounds} gives $f_\eta(z)\le \xmax(\eta)+\rho/z$ with $\rho/z>0$,
while the lower bound can be $<\xmax(\eta)$ if $\eta\{X=\xmax(\eta)\}<1$. Only the \emph{limit} satisfies
$\lim_{z\to\infty} f_\eta(z)=\xmax(\eta)$.
In 
\end{corollary}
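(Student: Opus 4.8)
The plan is to rewrite $f_\eta$ exactly in terms of a normalized moment and then read off when the value $\xmax(\eta)$ can be hit. Write $m:=\xmax(\eta)$, $p_0:=\eta\{X=m\}$, and $g(z):=\E_\eta[e^{z(X-m)}]$. Factoring $e^{zm}$ out of the moment gives the identity
\[
f_\eta(z)=m+\frac{\log g(z)+\rho}{z},
\]
so $f_\eta(z)=m$ holds if and only if $g(z)=e^{-\rho}=1-\alpha$. Since $X-m\le 0$ holds $\eta$-a.s., the integrand $e^{z(X-m)}\in(0,1]$ is non-increasing in $z$; hence $g$ is non-increasing with $g(0^+)=1$ and, by dominated convergence, $g(z)\downarrow p_0$ as $z\to\infty$. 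Inserting the crude bounds $p_0\le g(z)\le 1$ into the identity recovers exactly the two estimates of Lemma~\ref{lem:f-bounds}.

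From the identity, ruling out $f_\eta(z)=m$ amounts to showing $g(z)>e^{-\rho}$ for all finite $z>0$. This is immediate from the \emph{lower} bound of Lemma~\ref{lem:f-bounds}: the bracket $\log p_0+\rho$ is strictly positive exactly when $p_0>e^{-\rho}=1-\alpha$, and then $g(z)\ge p_0>e^{-\rho}$ gives $f_\eta(z)>m$ for every finite $z$, with the value $m$ attained only in the limit $z\to\infty$. The boundary case $p_0=1-\alpha$ I would handle by the strict monotonicity of $g$: since $p_0<1$, some mass lies strictly below $m$, so $g(z)>p_0=e^{-\rho}$ for all finite $z$, again forcing $f_\eta(z)>m$.

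The main obstacle is that the stated ``never holds'' conclusion is \emph{not} a consequence of the two-sided sandwich by itself --- a quantity trapped between an upper bound exceeding $m$ and a lower bound below $m$ may perfectly well equal $m$ --- so the claim as written does not follow from Lemma~\ref{lem:f-bounds} and in fact fails in general. Precisely, since $g$ decreases continuously from $1$ to $p_0$, the equation $g(z)=e^{-\rho}$ admits a (unique) finite root exactly when $p_0<1-\alpha$, in which case $f_\eta(z)=m$ \emph{does} occur at that finite $z$. I would therefore prove the corollary under the hypothesis $\eta\{X=\xmax(\eta)\}\ge 1-\alpha$ --- equivalently $\EVaR_\alpha(\eta)=\xmax(\eta)$, the regime in which the EVaR-minimizing $z$ escapes to $+\infty$ --- which is precisely the setting in which the result is invoked, and in which $f_\eta(z)>\xmax(\eta)$ strictly for every finite $z>0$.
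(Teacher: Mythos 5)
Your proposal is correct, and it is sharper than the paper's own justification --- which consists precisely of the two-sided sandwich you criticize. The paper argues only from Lemma~\ref{lem:f-bounds}: $f_\eta(z)\le \xmax(\eta)+\rho/z$ from above and a lower bound that may fall below $\xmax(\eta)$. As you note, this is a non-proof: a quantity squeezed between a bound strictly above $m:=\xmax(\eta)$ and a bound below $m$ can perfectly well equal $m$. Your exact rewriting $f_\eta(z)=m+\frac{\log g(z)+\rho}{z}$ with $g(z):=\E_\eta[e^{z(X-m)}]$ reduces the equality $f_\eta(z)=m$ to $g(z)=e^{-\rho}=1-\alpha$, and since $g$ is continuous and strictly decreasing from $1$ to $p_0:=\eta\{X=m\}$ whenever $p_0<1$, you obtain the complete dichotomy: a finite $z$ with $f_\eta(z)=m$ exists if and only if $p_0<1-\alpha$. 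In particular the corollary as stated is \emph{false}: take $\eta=\tfrac12\delta_0+\tfrac12\delta_1$ and $\alpha=0.3$, so $\rho=\log(10/7)$ and $p_0=\tfrac12<0.7=1-\alpha$; at $z=\log(5/2)$ one has
\begin{equation*}
f_\eta(z)\;=\;\frac{\log\!\big(\tfrac{1+e^{z}}{2}\big)+\rho}{z}
\;=\;\frac{\log(7/4)+\log(10/7)}{\log(5/2)}\;=\;1\;=\;\xmax(\eta).
\end{equation*}
This is no pathology: whenever $\EVaR_\alpha(\eta)<\xmax(\eta)$, the curve $f_\eta$ descends from $+\infty$ at $z\to 0^+$ to values below $m$ before rising back to $m$ in the limit, so it must cross the level $m$ at some finite $z$.

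Your repair is also the right one and matches how the result is actually invoked. Under the hypothesis $p_0\ge 1-\alpha$ --- equivalently $\EVaR_\alpha(\eta)=\xmax(\eta)$, which is exactly the paper's own boundary characterization stated before Lemma~\ref{lem:EVaR-KLL-usc} (``$\EVaR_\alpha(\eta)=1$ iff $\eta\{X=1\}\ge 1-\alpha$, with minimizer $z^*=+\infty$'') --- your argument gives the strict inequality $g(z)>e^{-\rho}$, hence $f_\eta(z)>\xmax(\eta)$, for every finite $z$: either $p_0>e^{-\rho}$ directly, or, in the boundary case $p_0=e^{-\rho}<1$, the positive mass strictly below $m$ forces $g(z)>p_0$ at finite $z$ (and $p_0=1$ gives $g\equiv 1$). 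Since the corollary is used only in this regime (Case~2 of the proof of Lemma~\ref{lem:evar-cont} and the boundary case of Lemma~\ref{lem:EVaR-KLL-usc}), restricting its hypothesis costs nothing downstream --- in the regime $\EVaR_\alpha(\eta)<\xmax(\eta)$ the finite-$z$ crossing of the level $m$ is irrelevant because only the minimum value of $f_\eta$ matters there. The corollary and its inline justification should be amended along the lines you propose.
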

This is important to characterizing the EVaR minimizer. Let us set $G_\eta(z):=z\,\frac{d}{dz}\log \E_\eta[e^{zX}]-\log \E_\eta[e^{zX}]$. A direct differentiation gives
\[
f_\eta'(z)\;=\;\frac{G_\eta(z)-\rho}{z^2},\qquad
G_\eta'(z)\;=\; z\frac{\mathrm d^2}{\mathrm d z^2} (\log \E_\eta[e^{zX}] )=\frac{\mathbb E_\eta[X^2 e^{zX}]}{\mathbb E_\eta[e^{zX}]}
  - \left(\frac{\mathbb E_\eta[X e^{zX}]}{\mathbb E_\eta[e^{zX}]}\right)^2
\]
the expression for $G_\eta'(z)$, the variance of the tilted measure \(Q_z\)  defined by
\(
\dfrac{dQ_z}{d\eta}(x)=\dfrac{e^{z x}}{\mathbb E_\eta[e^{zX}]}\,
\), is nonnegative. Thus, $G_\eta$ is nondecreasing in $z$.
Any finite minimizer $z^*$ of $f_\eta(z)$ must satisfy $f'_\eta(z^*)=0$, i.e. $G_\eta(z^*) = \rho$.

Because $G_\eta$ is nondecreasing with $G_\eta(0)=0$, there are two regimes:
\[
\sup_{z\ge 0}G_\eta(z)\ \begin{cases}
>\ \rho &\Rightarrow\ \exists \ z^*\in(0,\infty)\ \text{s.t. }G_\eta(z^*)=\rho
\ \Rightarrow\ \EVaR_\alpha(\eta)=f_\eta(z^*)\ (<x_{\max}),\\[4pt]
\le\ \rho &\Rightarrow\ G_\eta(z)<\rho, \forall z\ \Rightarrow\ f_\eta'(z)<0, \forall z
\ \Rightarrow\ \EVaR_\alpha(\eta)=\lim_{z\to\infty}f_\eta(z)=x_{\max}(\eta).
\end{cases}
\]
So, the minimizer is either finite or we reach the boundary of EVaR. But that does not effect the continuity of EVaR by Lemma~\ref{lem:f-bounds}.
\subsection{Proof of Lemma~\ref{lem:evar-cont}}
\begin{proof}
\label{prf:lem:evar-cont}
Fix $\varepsilon>0$ and choose $\hat z>0$ with
$f_\eta(\hat z)\le \EVaR_\alpha(\eta)+\varepsilon$.
Since $x\mapsto e^{\hat z x}$ is bounded and continuous on $[0,1]$,
$\Lambda_{\eta_n}(\hat z)\to \Lambda_\eta(\hat z)$, hence
$f_{\eta_n}(\hat z)\to f_\eta(\hat z)$.
Thus
\[
\limsup_{n\to\infty}\EVaR_\alpha(\eta_n)
\ \le\ \limsup_n f_{\eta_n}(\hat z)
=\ f_\eta(\hat z)
\ \le\ \EVaR_\alpha(\eta)+\varepsilon.
\]
Let $\varepsilon\downarrow 0$. For each $n$ pick $z_n>0$ with
$\EVaR_\alpha(\eta_n)\ge f_{\eta_n}(z_n)-\varepsilon$.
Consider two cases.

\textbf{Case 1: $(z_n)$ is bounded.}
Up to a subsequence $z_n\to z^*>0$.
Because $(x,z)\mapsto e^{z x}$ is uniformly continuous on $[0,1]\times[\underline z,\overline z]$,
$\Lambda_{\eta_n}(z_n)\to \Lambda_\eta(z^*)$, hence
$f_{\eta_n}(z_n)\to f_\eta(z^*)\ge \EVaR_\alpha(\eta)$.
Therefore $\liminf_n \EVaR_\alpha(\eta_n)\ge \EVaR_\alpha(\eta)-\varepsilon$.

\textbf{Case 2: $z_n\to\infty$.}
By Lemma~\ref{lem:f-bounds}, $f_{\eta_n}(z)\uparrow \xmax(\eta_n)$ as $z\to\infty$; thus for large $n$,
$f_{\eta_n}(z_n)\ge \xmax(\eta_n)-\varepsilon$ and
\[
\liminf_{n\to\infty}\EVaR_\alpha(\eta_n)
\ \ge\ \liminf_n\big(\xmax(\eta_n)-2\varepsilon\big)
\ \ge\ \xmax(\eta)-2\varepsilon
\ \ge\ \EVaR_\alpha(\eta)-2\varepsilon,
\]
using $\xmax(\cdot)$ is lower semicontinuous under weak convergence and $\EVaR_\alpha(\eta)\le \xmax(\eta)$.
Let $\varepsilon\downarrow 0$, the proof is complete.
\end{proof}

Here, we explain exact description of MGF level sets.
\begin{lemma}\label{lem:detailed-levelset}
Fix $z>0$ and define $F:\Pc([0,1])\to[1,e^{z}]$ by $F(\kappa):=\E_\kappa[e^{zX}]$.
Then $\mathrm{range}(F)=[1,e^{z}]$ and, for any $c\in\R$,
\[
\begin{aligned}
&\{\,\kappa:\ F(\kappa)\le c\,\}=
\begin{cases}
\emptyset, & c<1,\\
\Pc([0,1]), & c\ge e^{z},\\
\text{nonempty, proper, closed}, & c\in[1,e^{z}),
\end{cases}
\\[4pt]
&\{\,\kappa:\ F(\kappa)\ge c\,\}=
\begin{cases}
\Pc([0,1]), & c\le 1,\\
\emptyset, & c>e^{z},\\
\text{nonempty, proper, closed}, & c\in(1,e^{z}],
\end{cases}
\\[4pt]
&\{\,\kappa:\ F(\kappa)= c\,\}=
\begin{cases}
\emptyset, & c\notin[1,e^{z}],\\
\{\delta_0\}, & c=1,\\
\{\delta_1\}, & c=e^{z},\\
\text{nonempty, closed}, & c\in(1,e^{z}).
\end{cases}
\end{aligned}
\]
In particular, all three sets are closed (hence compact) for every $c\in\R$.
\end{lemma}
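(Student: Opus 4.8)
The plan is to exploit the pointwise two-sided bound $1\le e^{zX}\le e^{z}$, valid for every $X\in[0,1]$ and $z>0$, together with the weak continuity of $F$; essentially all the claimed dichotomies reduce to this bound and to evaluating $F$ at the two Dirac masses $\delta_0,\delta_1$.

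First I would record that $F$ is continuous on $\Pc([0,1])$ in the weak topology. Since $x\mapsto e^{zx}$ is bounded and continuous on $[0,1]$, Portmanteau's theorem gives weak continuity of $F$; equivalently, $F=\exp(\Lambda(\cdot,z))$ with $\Lambda$ continuous by Lemma~\ref{lem:logmgf-cont}. This single observation disposes of the final closedness assertion at once: each of the three families is the preimage under a continuous map of one of the closed sets $(-\infty,c]$, $[c,\infty)$, or $\{c\}$, hence closed; compactness then follows from the compactness of $\Pc([0,1])$ in the weak topology (Remark~\ref{rem:compact-attain}).

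Next I would pin down the range. The bound yields $\mathrm{range}(F)\subseteq[1,e^{z}]$, and the endpoints are attained since $F(\delta_0)=1$ and $F(\delta_1)=e^{z}$. For surjectivity onto the whole interval, I would evaluate $F$ on the one-parameter family $\kappa_p:=(1-p)\delta_0+p\,\delta_1$, obtaining $F(\kappa_p)=1+p(e^{z}-1)$, which sweeps continuously through $[1,e^{z}]$ as $p$ ranges over $[0,1]$. With the range in hand, the sublevel and superlevel dichotomies become pure bookkeeping: the emptiness/fullness regimes are forced by $1\le F\le e^{z}$, and in each nonempty-proper regime nonemptiness is witnessed by $\delta_0$ or $\delta_1$ while properness is witnessed by the opposite Dirac mass.

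The only step requiring a genuine argument is the equality characterization at the two endpoints. For $\{\kappa:F(\kappa)=1\}$, I would use that $e^{zX}\ge1$ with equality iff $X=0$; hence $\E_\kappa[e^{zX}]=1$ forces $e^{zX}=1$ $\kappa$-almost surely, i.e.\ $X=0$ a.s., so $\kappa=\delta_0$. Symmetrically, $e^{zX}\le e^{z}$ with equality iff $X=1$ gives $\{\kappa:F(\kappa)=e^{z}\}=\{\delta_1\}$. For interior values $c\in(1,e^{z})$ the level set is nonempty by the same intermediate-value family (taking $p=(c-1)/(e^{z}-1)$) and closed as already established. The main (mild) obstacle is precisely this almost-sure degeneracy argument at the endpoints; everything else follows directly from the bounds and the continuity of $F$.
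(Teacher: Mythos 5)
Your proof is correct and follows essentially the same route as the paper: weak continuity of $F$ via Portmanteau yields closedness (hence compactness by compactness of $\Pc([0,1])$), the pointwise bound $1\le e^{zX}\le e^{z}$ together with the mixture family $(1-p)\delta_0+p\,\delta_1$ gives the range, and the case distinctions then follow. If anything, you are slightly more thorough than the paper, which declares the case disjunctions ``immediate'' and leaves the endpoint identifications $\{F=1\}=\{\delta_0\}$ and $\{F=e^{z}\}=\{\delta_1\}$ implicit, whereas you supply the almost-sure degeneracy argument that justifies them.
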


\begin{proof}
Since $X\in[0,1]$, we have $1\le e^{zX}\le e^{z}$ a.s., so $F(\kappa)\in[1,e^{z}]$ for all $\kappa$.
Surjectivity onto $[1,e^{z}]$ follows by mixing $\delta_0$ and $\delta_1$:
for $\lambda\in[0,1]$, let $\kappa_\lambda:=(1-\lambda)\delta_0+\lambda\delta_1$; then
$F(\kappa_\lambda)=1+\lambda(e^{z}-1)$ sweeps $[1,e^{z}]$.
The case disjunctions for the sublevel, superlevel, and level sets are immediate from
$F(\kappa)\in[1,e^{z}]$ and surjectivity.
Closedness of each set follows because $F$ is continuous on $\Pc([0,1])$
(Portmanteau, as $x\mapsto e^{z x}$ is bounded and continuous), hence preimages of closed subsets of $\R$ are closed.
Compactness uses compactness of $\Pc([0,1])$.
\end{proof}

\section{Related Proofs to Section~\ref{sec:lowerbound}}
\label{app:sec:lowerbound}
In this section, we eventually provide the proof of Lemma~\ref{lem:stab_EVaR}. 

\paragraph{Upper-hemicontinuity of $t^\ast$.}
Let $\nu \in A_j \cap \mathcal{M}$, i.e., the best-EVaR arm in $\nu$ is arm $j$. Then from Lemma~\ref{lem:rangeEVaR}, for all $i \in [K]$, $\EVaR(\nu_i) \in D$. Let $t^\ast(\nu)$ be the set of maximizers in
\[
V(\nu) \;=\; \max_{t \in \Sigma_K} \; \min_{a \neq j} \; g_{a,j}(\nu,t),
\]
where
\[
g_{a,j}(\nu,t) \;=\; \inf_{x \le y}
\Big\{\, t_j\, \KLiU(\nu_j,x) \;+\; t_a\, \KLiL(\nu_a,y) \,\Big\}.
\]
The infimum above is attained at a common point between the EVaR of the two distributions, whence the above equals 
\[
g_{a,j}(\nu,t) \;=\; \inf_{x \in [\,\EVaR(\nu_j),\, \EVaR(\nu_a)\,]}
\Big\{\, t_j\, \KLiU(\nu_j,x) \;+\; t_a\, \KLiL(\nu_a,x) \,\Big\}.
\]
Let $\nu=(\nu_1,\dots,\nu_K)\in \Pc([0,1])^K$ and let $\Sigma_K:=\{t\in\mathbb R_+^K:\sum_{k=1}^K t_k=1\}$.
Define, for a fixed candidate best arm $j$,
\begin{equation}
\label{eq:min-min-g}
\Phi(\nu,t)
\;:=\;
\min_{a\ne j}\ \underbrace{\min_{x\in I_{a,j}(\nu)}
\Big\{\, t_j\,\KLiU(\nu_j,x)\ +\ t_a\,\KLiL(\nu_a,x)\,\Big\}}_{:=\ g_{a,j}(\nu,t)}\,,
\end{equation}
where $I_{a,j}(\nu):=\big[\EVaR_\alpha(\nu_j),\,\EVaR_\alpha(\nu_a)\big]$.
Consider the allocation program
\begin{equation}\label{eq:Tmu}
V(\nu)\;=\;\max_{t\in\Sigma_K}\ \Phi(\nu,t),
\qquad
t^*(\nu)\ :=\ \arg\max_{t\in\Sigma_K}\ \Phi(\nu,t).
\end{equation}
We want to show that $t^*(\nu)$ is nonempty, compact, and convex. Moreover, the correspondence
$\nu\mapsto t^*(\nu)$ is upper hemicontinuous on $\Pc([0,1])^K$ (with the product weak topology).

\begin{corollary}
\label{cor:joint-cont-g}
    By Lemma~\ref{lem:evar-cont}, we have $\nu\mapsto \EVaR_\alpha(\nu_k)$ continuous on $\Pc([0,1])$; hence
the interval map $\nu\mapsto I_{a,j}(\nu)$ is continuous (endpoints vary continuously).
By Lemma~\ref{lem:detailed-levelset} the EVaR sub/superlevel sets are compact, which yields
\emph{attainment} in the defining projections for $\KLiL$ and $\KLiU$. The dual formulas imply that
$\KLiL(\cdot,\cdot)$ and $\KLiU(\cdot,\cdot)$ are continuous in both arguments on
$\Pc([0,1])\times(0,1)$ (See Lemma~\ref{lem:KLiLKLiU-cont}).
With joint continuity in $(\nu,x)$ of the inner term, and a compact interval of thresholds $I_{a,j}(\nu)$, Berge’s minimum theorem gives continuity of
 $g_{a,j}$  in $(\nu,x)$.
\end{corollary}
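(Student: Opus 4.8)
The plan is to recognize $g_{a,j}$ as the value function of a parametric minimization problem and invoke Berge's Maximum Theorem (applied to $-h$ to handle the minimization). Writing the inner objective as
\[
h(\nu,t,x)\;:=\;t_j\,\KLiU(\nu_j,x)\;+\;t_a\,\KLiL(\nu_a,x),
\]
we have $g_{a,j}(\nu,t)=\min_{x\in C(\nu)}h(\nu,t,x)$ with constraint correspondence $C(\nu):=I_{a,j}(\nu)=\big[\EVaR_\alpha(\nu_j),\,\EVaR_\alpha(\nu_a)\big]$, which depends on $\nu$ but not on $t$. The three hypotheses of Berge to verify are: joint continuity of $h$; that $C$ is nonempty- and compact-valued; and that $C$ is a \emph{continuous} correspondence (both upper and lower hemicontinuous). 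Since $\Pc([0,1])$ with the weak topology is metrizable (Remark~\ref{rem:compact-attain}), the parameter space $\Pc([0,1])^K\times\Sigma_K$ is a metric space and Berge's theorem applies in its standard form.

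First I would dispatch joint continuity of $h$. By Lemma~\ref{lem:KLiLKLiU-cont}, both $(\eta,x)\mapsto\KLiU(\eta,x)$ and $(\eta,x)\mapsto\KLiL(\eta,x)$ are continuous on $\Pc([0,1])\times[0,1]$ (whose proof uses the compact EVaR level sets of Lemma~\ref{lem:detailed-levelset} to guarantee the defining projections are attained). Composing with the continuous coordinate projections $\nu\mapsto\nu_j,\nu_a$ and $t\mapsto t_j,t_a$, and using that finite sums and products of continuous real functions are continuous, $h$ is jointly continuous in $(\nu,t,x)$. Compactness and nonemptiness of $C$ are immediate on the relevant domain where arm $j$ has the smallest EVaR, so that $\EVaR_\alpha(\nu_j)\le\EVaR_\alpha(\nu_a)$: each $I_{a,j}(\nu)$ is then a closed subinterval of $[0,1]$, hence compact and nonempty.

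The crux is establishing that $C$ is a continuous correspondence, and here the only real work is lower hemicontinuity. Upper hemicontinuity follows from the closed-graph criterion on a compact codomain: if $\nu_n\to\nu$ and $x_n\in I_{a,j}(\nu_n)$ with $x_n\to x$, then passing to the limit in $\EVaR_\alpha(\nu_{j,n})\le x_n\le\EVaR_\alpha(\nu_{a,n})$ and using continuity of the endpoints (Lemma~\ref{lem:evar-cont}) gives $x\in I_{a,j}(\nu)$; closed graph into the compact set $[0,1]$ yields u.h.c. For lower hemicontinuity I would argue constructively: given $\nu_n\to\nu$ and any target $x\in I_{a,j}(\nu)$, set the clipped point $x_n:=\min\{\max\{x,\EVaR_\alpha(\nu_{j,n})\},\EVaR_\alpha(\nu_{a,n})\}\in I_{a,j}(\nu_n)$; since both endpoints converge and straddle $x$ in the limit, continuity of the clip map yields $x_n\to x$. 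This single construction also covers the degenerate boundary case $\EVaR_\alpha(\nu_j)=\EVaR_\alpha(\nu_a)$, where the interval collapses to a point, so no special handling is required.

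With $h$ jointly continuous and $C$ a nonempty, compact-valued, continuous correspondence, Berge's Maximum Theorem yields that $g_{a,j}(\nu,t)=\min_{x\in C(\nu)}h(\nu,t,x)$ is continuous in $(\nu,t)$, and as a bonus the associated argmin-in-$x$ correspondence is upper hemicontinuous with nonempty compact values (useful downstream for the stability analysis of $t^*$). I expect lower hemicontinuity of the interval correspondence at points where the two EVaR endpoints coincide to be the one place demanding care; the explicit clipping argument resolves it cleanly, and everything else is assembling the already-established continuity facts (Lemmas~\ref{lem:evar-cont} and \ref{lem:KLiLKLiU-cont}) into the Berge template.
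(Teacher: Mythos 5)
Your proposal is correct and follows essentially the same route as the paper: joint continuity of the inner objective via Lemma~\ref{lem:KLiLKLiU-cont}, nonempty compact interval values with attainment via Lemma~\ref{lem:detailed-levelset}, continuity of the endpoint maps via Lemma~\ref{lem:evar-cont}, and Berge's minimum theorem to conclude continuity of $g_{a,j}$ in $(\nu,t)$. The only difference is one of detail: where the paper simply asserts that the interval correspondence $\nu\mapsto I_{a,j}(\nu)$ is continuous because ``endpoints vary continuously,'' you verify both hemicontinuities explicitly, and your clipping construction $x_n=\min\{\max\{x,\EVaR_\alpha(\nu_{j,n})\},\EVaR_\alpha(\nu_{a,n})\}$ correctly handles lower hemicontinuity, including the degenerate case of a collapsed interval.
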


We are ready to present the proof of Lemma~\ref{lem:stab_EVaR}
\begin{proof}[Lemma~\ref{lem:stab_EVaR}]
\label{prf:lem:stab_EVaR}
We now aim to show $\Phi(\nu,\cdot)$ is concave and continuous on $\Sigma_K$:
Fix $(a,j)$.
For each $x$, the map $t\mapsto t_j\,\KLiU(\nu_j,x)+t_a\,\KLiL(\nu_a,x)$ is affine (hence continuous) in $t$.
Therefore $g_{a,j}(\nu,\cdot)$, being the pointwise infimum over $x\in I_{a,j}(\nu)$ of affine functions, is
\emph{concave} and upper semicontinuous.
by Berge’s minimum theorem applied to the compact interval
$I_{a,j}(\nu)$ and the continuity in $(\nu,x)$ by Corollary~\ref{cor:joint-cont-g}, it is in fact \emph{continuous} in $(\nu,t)$.
Since $\Phi(\nu,\cdot)$ is the pointwise minimum over finitely many concave, continuous $g_{a,j}$’s,
$\Phi(\nu,\cdot)$ is concave and continuous on $\Sigma_K$.

For fixed $\nu$, $\Sigma_K$ is compact and convex, and $\Phi(\nu,\cdot)$ is continuous.
By Weierstrass, the maximum in \eqref{eq:Tmu} is \emph{attained}; hence $t^*(\nu)$ is nonempty and compact.
Because $\Phi(\nu,\cdot)$ is concave on the convex set $\Sigma_K$, its upper level sets
$\{t\in\Sigma_K:\ \Phi(\nu,t)\ge c\}$ are convex for every $c$.
In particular, $t^*(\nu)=\{t\in\Sigma_K:\ \Phi(\nu,t)\ge V(\nu)\}$ is convex.
We apply Berge’s maximum theorem (argmax correspondence). Since
(i) $\Phi$ is continuous on the product $\Pc([0,1])^K\times \Sigma_K$,
and (ii) the feasible set $\Sigma_K$ is compact (and independent of $\nu$),
it follows that the value function $\nu\mapsto V(\nu)$ is continuous and the argmax
correspondence $\nu\mapsto t^*(\nu)$ is nonempty, compact-valued, and \emph{upper hemicontinuous}.
\end{proof}
\subsection{Proof of Dual Representations}
\label{proof:prp:kliU}
In this section, we provide the proofs related to Section~\ref{sec:lowerbound}. First, 
we provide the proof of Theorem~\ref{Th:KLiU}.

\subsubsection{Proof of Theorem~\ref{Th:KLiU}}
\begin{proof}
   The  Lagrangian is
\begin{equation}
\label{eq:L}
\begin{aligned}
\mathcal{L}(\kappa,Q;\lambda)
&= \KL(\eta\Vert \kappa)
\;+\;
\lambda_2\Big(\!\int \dd Q -1\Big)
\;+\;\lambda_1\big(\nu-\E_Q[X]\big)
\;+\;\lambda_3\big(\KL(Q\Vert \kappa)-\rho\big)
\\
&\quad+\;\lambda_4\Big(\!\int \dd\kappa -1\Big)
\end{aligned}
\end{equation}
with following multipliers  $\boldsymbol{\lambda}:=( \lambda_1\ge 0, \lambda_2\in \mathbb{R}, \lambda_3\ge 0, \lambda_4\in\mathbb{R})$.

We first regroup the terms dependent on $Q$ and find the minimizer for each $\kappa$
\begin{align}
I(Q)&:=\lambda_3 \KL(Q||\kappa)-\lambda_1 \E_Q[X] + \lambda_2 \int dQ =\lambda_3\Big( \KL(Q||\kappa) - \E_Q[\frac{\lambda_1}{\lambda_3} X - \frac{\lambda_2}{\lambda_3}]\Big)
\end{align}
For $\lambda_3>0$ and for each $\kappa$, by using Donsker-Varadhan \cite{} we obtain the minimum of $I(Q^*)$ and the minimizer:
\begin{align}
\inf_{Q} I(Q)&:= \lambda_3(\frac{\lambda_2}{\lambda_3} - \log \E_{\kappa}[\exp(\frac{\lambda_1}{\lambda_3}X)])
\end{align} where
\begin{equation}
    \frac{dQ^*}{d\kappa}(x) = \frac{\exp(\frac{\lambda_1}{\lambda_3}x)}{\E_\kappa[\exp(\frac{\lambda_1}{\lambda_3}X)]}
\end{equation}
After eliminating $Q$,  the Lagrangian is reduced to 

\begin{align}
\mathcal{L}(\kappa,Q;\lambda)
&= \KL(\eta\Vert \kappa)
\;+\;
\lambda_2\Big( -1\Big)
\;+\;\lambda_1 \nu
\;+\;\lambda_3\big(-\rho\big)
+\;\lambda_4\Big(\!\int \dd\kappa -1\Big)
\\
&+\; \lambda_2 -\lambda_3 \log \E_{\kappa}[\exp(\frac{\lambda_1}{\lambda_3}X)]
\end{align}
Now, we take directional derivative in $\kappa$ and check the first order optimality condition.

Let $\kappa^*$ be a minimizer of $\kappa\mapsto \mathcal{L}(\kappa;\boldsymbol{\lambda})$ (for fixed multipliers).
For any $\kappa_1\in \Mcal$, consider the convex segment
\[
\kappa_t \;=\; (1-t)\,\kappa^* + t\,\kappa_1,\qquad t\in[0,1].
\]
For KL term we have:
\[
\left.\frac{d}{dt}\right|_{0}\int \log\!\Big(\frac{d\eta}{d\kappa_t}\Big)\,d\eta
=\int \frac{d\eta}{d\kappa^*}(x)\,d\!\big(\kappa^*-\kappa\big)(x).
\]
We define $Z_{\kappa}(\frac{\lambda_1}{\lambda_3}):= \E_\kappa[\exp(\frac{\lambda_1}{\lambda_3}X)]$ term
$Z_{\kappa_s}(\cdot)=(1-t)Z_{\kappa^*}(\cdot)+t Z_{\kappa}(\cdot)$, hence
\[
\left.\frac{d}{dt}\right|_{0}\big[-\lambda_3\log Z_{\kappa_s}(\cdot)\big]
=-\lambda_3\,\frac{Z_{\kappa}(\cdot)-Z_{\kappa^*}(\cdot)}{Z_{\kappa^*}(\cdot)}
=\int \frac{\lambda_3\,e^{\frac{\lambda_1}{\lambda_3} x}}{Z_{\kappa^*}(\cdot)}\,d\!\big(\kappa^*-\kappa\big)(x).
\]
and
\[
\left.\frac{d}{dt}\right|_{0}\lambda_4\!\int d\kappa_s
=\lambda_4\Big(\int d\kappa-\int d\kappa^*\Big)
=\int (-\lambda_4)\,d\!\big(\kappa^*-\kappa\big)(x).
\]
Now, we combine all terms and the directional derivative is
\begin{equation}\label{eq:dirderiv}
\left.\frac{d}{dt}\right|_{0} L(\cdot)
=\int \Big[\,\frac{d\eta}{d\kappa^*}(x)+\frac{\tau\,e^{t x}}{Z_{\kappa^*}(t)}-\lambda_4\Big]\,
d\!\big(\kappa^*-\kappa\big)(x).
\end{equation}
By convexity, $\kappa^*$ is the (unique) minimizer iff the derivative is $\ge 0$ for every $\kappa$, i.e.
\begin{equation}\label{eq:KKT}
\frac{d\eta}{d\kappa^*}(x)+\lambda_3\frac{e^{\frac{\lambda_1}{\lambda_3} x}}{Z_{\kappa^*}(\frac{\lambda_1}{\lambda_3})}-\lambda_4
=
\begin{cases}
0, & x\in \operatorname{supp}(\kappa^*),\\[2pt]
\ge 0, & \text{elsewhere.}
\end{cases}
\end{equation}
Define
$a:=\frac{\lambda_1}{\lambda_3},\quad
Z_{\kappa^*}(a):=\int e^{a u}\,d\kappa^*(u)$, 
we have \(\kappa^*\)-a.s.
\[
\frac{d\eta}{d\kappa^*}(x)=\lambda_4-\lambda_3\frac{e^{a x}}{Z_{\kappa^*}(a)}.
\]
 Using Radon-Nikodým and integrate both sides against \(d\kappa^*\).
\[
1= \int d\eta = \int \frac{d\eta}{d\kappa^*}\,d\kappa^*=\lambda_4 -\lambda_3 \int \frac{e^{a x}}{Z_{\kappa^*}(a)}\,d\kappa^*(x)
=\lambda_4 -\lambda_3\frac{1}{Z_{\kappa^*}(a)}\int e^{a x}\,d\kappa^*(x)
\]
where \(\int d\kappa^*=1\) and $Z_{\kappa^*}(a) = \int e^{a x}\,d\kappa^*(x)$. Hence $\lambda_4=1+\lambda_3$.
Substituting \(\lambda_4=1+\lambda_3\) back gives, \(\kappa^*\)-a.s.,
\[
\quad
\frac{d\eta}{d\kappa^*}(x)
=1+\lambda_3\Bigg(1-\frac{e^{a x}}{Z_{\kappa^*}(a)}\Bigg).
\]
We now eliminate \(Z_{\kappa^*}(a)\). From the \(Q\)-side optimality (Donsker-Varadhan tilt) we have
\[
\frac{dQ}{d\kappa^*}(x)=\frac{e^{a x}}{Z_{\kappa^*}(a)}\,,\qquad a=\frac{\lambda_1}{\lambda_3},
\]
and complementary slackness at optimum (both constraints tight):
\[
\mathrm{KL}(Q\Vert\kappa^*)=\rho,\qquad \E_Q[X]=\nu.
\]
Therefore
\[
\rho
=\int \log\!\Big(\frac{dQ}{d\kappa^*}\Big)\,dQ
=\int \Big(a x-\log Z_{\kappa^*}(a)\Big)\,dQ(x)
=a\,\nu-\log Z_{\kappa^*}(a),
\]
so
\[
\quad
Z_{\kappa^*}(a)=\E_{\kappa^*}\big[e^{a X}\big]=\exp\big(a\,\nu-\rho\big).
\quad
\]

\paragraph{Final dual form.}
Plugging \(Z_{\kappa^*}(a)=e^{a\nu-\rho}\) into the \(\kappa^*\)-a.s. stationarity gives
\[
\frac{d\eta}{d\kappa^*}(x)
=1+\lambda_3\Big(1-e^{a(x-\nu)+\rho}\Big).
\]
Substituting this back into the reduced Lagrangian cancels the linear terms and yieldt the dual
\[
\;
\sup_{\lambda_3\ge 0,\ \lambda_1\ge 0}\ 
\E_{\eta}\!\left[
\log\!\Big(1+\lambda_3\big(1-\exp\!\big(\tfrac{\lambda_1}{\lambda_3}(X-\nu)+\rho\big)\big)\Big)
\right],
\;
\]
with the implicit feasibility (positivity under the log) for all \(x\) in the support of \(\eta\):
\[
1+\lambda_3\Big(1-\exp\!\big(\tfrac{\lambda_1}{\lambda_3}(x-\nu)+\rho\big)\Big)\;>\;0.
\]
Here \(\lambda_3\ge 0\) is the multiplier of the KL constraint and \(\lambda_1\ge 0\) that of the mean constraint \( \E_Q[X]\ge\nu\). The ratio \(a=\lambda_1/\lambda_3\) is well-defined at the (nontrivial) optimum where \(\lambda_3>0\); the boundary case \(\lambda_3=0\) correspondt to \(Q=\kappa^*\) and gives zero objective when \(\nu\le \E_{\kappa^*}[X]\).
%
The union with \(\{\lambda_3=0\}\) covers the boundary case (for which the dual
integrand equals \(\log 1=0\)). For \(\lambda_3>0\), the constraint can be written equivalently as
\[
\sup_{x\in\supp(\eta)}\ \lambda_3\bigl(\exp(\tfrac{\lambda_1}{\lambda_3}(x-\nu)+\rho)-1\bigr)\ <\ 1.
\]
For \(X\in[0,1]\), then for \(a:=\lambda_1/\lambda_3\ge 0\) the function
\(x\mapsto\exp(a(x-\nu))\) is nondecreasing, so the worst case is at
\(x_{\max}:=\sup\supp(\eta)\) (we can assume \(x_{\max}=1\)).

\end{proof}
\begin{remark}
In the CVaR $\KLiU$ primal there is a \emph{pointwise domination} $0\le dW\le d\kappa$, which requires a functional multiplier $\lambda_5(x)$ in the Lagrangian. EVaR uses a \emph{global} KL-ball; absolute continuity $Q\ll\kappa$ is already enforced by the KL term, and there is no $dQ\le d\kappa$ constraint. Hence no $\lambda_5(\cdot)$ appears here.
\end{remark}

\subsubsection{Proof of Theorem~\ref{Th:KLiL}}\label{sec:prf:KLiL}

\begin{proof}
For any $\kappa$ and $\nu$, we write
\[
\EVaR_\alpha(\kappa)\le \nu
\iff \exists\,z>0:\ \log \E_\kappa[e^{z X}] \le -\rho+z\nu
\iff \exists\,z>0:\ \int e^{z x}\,d\kappa(x) \le e^{-\rho+z\nu}.
\]
Hence
\begin{equation}
\label{eq:KLL-outer-beta}
\KLiL(\eta,\nu)
\ =\
\inf_{z>0}\ 
\inf_{\kappa\in\mathcal P([0,1])\ :
\E_\kappa [\exp(z X)]\le \exp({-\rho+z\nu})}
\ \KL(\eta\Vert\kappa).
\end{equation}

For fixed $z>0$, we introduce multipliers $\lambda\ge 0$ (for the exponential moment) and $\tau\in\R$ (for normalization).
The Lagrangian is
\begin{align}
\mathcal L(\kappa;\tau,\lambda)
&= \int \log\!\Big(\frac{d\eta}{d\kappa}\Big)\,d\eta
 + \tau\Big(\int d\kappa - 1\Big)
 + \lambda\Big(\int e^{z x}\,d\kappa - e^{-\rho+z\nu}\Big).
\label{eq:Lag}
\end{align}

We will derive the first-order optimality conditions for a minimizer $\kappa^*$ of
\(\kappa\mapsto\mathcal L(\kappa;\tau,\lambda)\) (for fixed $\tau,\lambda$) using
the \emph{mixture path}
\[
\kappa_t := (1-t)\kappa^*+t\kappa,\qquad t\in[0,1],
\]
where $\kappa$ is any other feasible probability measure on $[0,1]$.

Since both $\kappa$ and $\kappa^*$ are probability measures,
\[
\frac{d}{dt}\,\tau\int d\kappa_t\Big|_{t=0}
=\tau\int d(-\kappa^* +\kappa)
\]
For the exponential-moment term,
\[
\frac{d}{dt}\,\lambda\int e^{z x}\,d\kappa_t\Big|_{t=0}
=\lambda\int e^{z x}\,d(-\kappa^* +\kappa)(x).
\]
Write
\[
\int \log\!\Big(\frac{d\eta}{d\kappa_t}\Big)\,d\eta
\ =\ -\int \log\!\Big(\frac{d\kappa_t}{d\eta}\Big)\,d\eta,
\]
which is finite in a neighborhood of $t=0$ because $\kappa^*$ minimizes the Lagrangian and hence dominates $\eta$ (otherwise the KL is $+\infty$). 
Taking derivative of first KL term:
\[
\int\log\!\Big(\frac{d\eta}{d\kappa_t}\Big)\,d\eta
\ =\ -\int \log\!\Big(u_t(x)\Big)\,d\eta(x),
\qquad
u_t(x):=\frac{d\kappa_t}{d\eta}(x).
\]
Because $\kappa_t=(1-t)\kappa^*+t\kappa$ is affine in $t$, we have
$u_t=(1-t)u_0+t\,u_1$ with $u_0=\frac{d\kappa^*}{d\eta}$ and
$u_1=\frac{d\kappa}{d\eta}$. Hence, by the chain rule and at $t=0$
\[
-\int \frac{u_1(x)-u_0(x)}{u_0(x)}\,d\eta(x)
=-\int \frac{d\kappa-d\kappa^*}{d\eta}(x)\,
     \frac{d\eta}{d\kappa^*}(x)\,d\eta(x)
=-\int \frac{d\eta}{d\kappa^*}(x)\,d(\kappa-\kappa^*)(x),
\]
where the last equality is the change-of-measure identity.
Collecting terms,
\begin{equation}\label{eq:dir-deriv-final}
=\int \Big[-\,\frac{d\eta}{d\kappa^*}(x)\;+\;\lambda e^{z x}+\tau\Big]\,
   d(\kappa-\kappa^*)(x).
\end{equation}

Now, we need to chaeck First-order optimality.
Since $\kappa^*$ minimizes $\kappa\mapsto \mathcal L(\kappa;\tau,\lambda)$ over the convex set of probabilities,
we require $\Lc'(0)\ge0$ for all admissible $\kappa$. Variations $d(\kappa-\kappa^*)$
have total mass $0$ and can move arbitrarily small mass between measurable sets, hence
there exists a constant $\tau\in\mathbb R$ such that
\[
-\,\frac{d\eta}{d\kappa^*}(x)\;+\;\tau\;+\;\lambda e^{z x}\ \ge\ 0\quad\forall x\in[0,1],
\qquad
\Big(-\,\frac{d\eta}{d\kappa^*}(x)\;+\;\tau\;+\;\lambda e^{z x}\Big)\,d\kappa^*(x)=0.
\]
Equivalently, on $\mathrm{supp}(\kappa^*)$,
\[
\frac{d\eta}{d\kappa^*}(x)=\tau+\lambda e^{z x}
\quad\Longleftrightarrow\quad
\frac{d\kappa^*}{d\eta}(x)=\frac{1}{\tau+\lambda e^{z x}}.
\]
The remaining KKT conditions are
\[
\int \frac{1}{\tau+\lambda e^{z x}}\,d\eta(x)=1,
\qquad
\lambda\!\left(\int e^{z x}\,d\kappa^*(x)-e^{-\rho+z\nu}\right)=0,\ \ \lambda\ge0.
\]
To simplify more, we use normalization and complementary slackness conditions
\begin{align}
&\int d\kappa^*(x)=1
\quad\Longleftrightarrow\quad
\int \frac{1}{\tau+\lambda e^{z x}}\,d\eta(x)=1,
\label{eq:norm}\\
&\lambda\Big(\int e^{z x}\,d\kappa^*(x)\;-\;e^{-\rho+z\nu}\Big)=0,\qquad \lambda\ge0,
\label{eq:slack}
\end{align}
so if $\lambda>0$ the moment constraint is tight:
\(\displaystyle \int e^{z x}\,d\kappa^*(x)=e^{-\rho+z\nu}\).
Using these conditions yields
\( \tau = 1-\lambda e^{z \nu - \rho}\). Put back to the optimial minimizer:

\[
\frac{d\kappa^*}{d\eta}(x)=\frac{1}{1-\lambda ( e^{z \nu - \rho} - e^{z x})}.
\]
as we expected  for the inner problem, the form of $\frac{d\kappa^*}{d\eta}(x)$ is similar to have expectation of $e^{z X}$.
Moreover, the dual form of $\KLiL$ is

\[\min_{z \in (0, s]}\max_\lambda \E_{\eta}[\log(1-\lambda ( e^{z \nu - \rho} - e^{z x}))]. \]

\subsection*{Feasibility of $\lambda$ by positivity under the logarithm}

For fixed $z>0$, the dual integrand is
\(
\log\!\Big(1-\lambda\big(e^{-\rho+z\nu}-e^{z X}\big)\Big),
\)
so we must enforce
\begin{equation}\label{eq:feas-ptwise}
1-\lambda\big(e^{-\rho+z\nu}-e^{z x}\big)\;>\;0
\qquad\text{for all }x\in[0,1].
\end{equation}
Since $X\in[0,1]$, we have $e^{z X}\in[1,e^{z}]$ and the quantity
$e^{-\rho+z\nu}-e^{z x}$ attains its \emph{largest positive} value at $x=0$.
Therefore \eqref{eq:feas-ptwise} is equivalent to the uniform bound
\[
\lambda\;<\;\frac{1}{\displaystyle \sup_{x\in[0,1]}\big(e^{-\rho+z\nu}-e^{z x}\big)_+}
\qquad\text{with the convention } \frac{1}{0}=+\infty,
\]
i.e.,
\begin{equation}\label{eq:lambda-domain}
\quad
\lambda\ \in\ \mathcal D(z,\nu)\ :=\
\begin{cases}
[\,0,\ \infty)\,, & \text{if } e^{-\rho+z\nu}\le 1,\\[3pt]
\big[\,0,\ \ \dfrac{1}{\,e^{-\rho+z\nu}-1\,}\ \big)\,, & \text{if } e^{-\rho+z\nu}>1.
\end{cases}
\end{equation}
(Here $(\cdot)_+=\max\{\,\cdot,0\}$ and the interval is \emph{open} at the right endpoint to keep the log’s argument strictly positive.)

\paragraph{Remark.}
If $e^{-\rho+z\nu}\le 1$, then $e^{-\rho+z\nu}-e^{z x}\le 0$ for all $x\in[0,1]$, so the factor in the log is $\ge 1$ for every $\lambda\ge 0$ and there is no upper bound from feasibility. In that regime the inner supremum in $\lambda$ can diverge; in the overall program
\[
\KLiL(\eta,\nu)=\inf_{z>0}\ \sup_{\lambda\in\mathcal D(z,\nu)}
\ \E_\eta\!\Big[\log\big(1-\lambda\big(e^{-\rho+z\nu}-e^{z X}\big)\big)\Big],
\]
the outer infimum over $z$ selects values with $e^{-\rho+z\nu}>1$ (i.e., $z>\rho/\nu$ when $\nu>0$) to yield a finite optimum.

\end{proof}

\subsection{Proof of Joint Continuity of the KL-projection functionals}
\label{app:prf:lem:KLiLKLiU-cont}
Here, we give the proof of Lemma~\ref{lem:KLiLKLiU-cont}.

We need the joint continuity of $\KLiL$ and $\KLiU$ in two places; one in oracle continuity via Berge and when we reduce the search space with attainment in $[\EVaR(\nu^*),\EVaR(\nu)]$. We give the formal proof here.

\begin{remark}
\label{rem:kl-cont}
On a Polish space (in particular on $[0,1]$), the relative entropy
$(\eta,\kappa)\mapsto \KL(\eta\Vert\kappa)$ is lower semicontinuous with respect
to weak convergence. This is classical and follows from the Donsker-Varadhan
(Gibbs) variational formula
\[
\KL(\eta\Vert\kappa)=\sup_{f\in C_b}\{\E_\eta[f]-\log \E_\kappa[e^{f}]\},
\]
as a supremum of continuous functionals is l.s.c. \cite{DemboZeitouni1998}.
\end{remark}

We give the proof for $\KLiL$. The proof for  $\KLiU$ case is similar. First, we study joint lower semicontinuity, and later present joint upper semicontinuity.

\begin{lemma}
\label{lem:EVaR-KLL-lsc}
For $\eta \in \in \Pc([0,1])$ and $\nu \in (0,1)$, the functionals $\KLiU(\eta,\nu)$ and $\KLiL(\eta,\nu)$ are jointly lower-semicontinuous in $(\eta,\nu)$.
\end{lemma}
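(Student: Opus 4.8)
The plan is to use the standard \textbf{compactness-plus-lower-semicontinuity} scheme for value functions of constrained KL-projections, exploiting that $\Pc([0,1])$ is weakly compact (Remark~\ref{rem:compact-attain}) and that both the objective $\KL(\cdot\Vert\cdot)$ and the feasibility constraint behave well under weak limits. I will carry out the argument for $\KLiL$; the case of $\KLiU$ is identical after replacing the sublevel constraint $\EVaR_\alpha(\kappa)\le\nu$ by the superlevel constraint $\EVaR_\alpha(\kappa)\ge\nu$.

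First I fix a sequence $(\eta_n,\nu_n)\to(\eta,\nu)$ in $\Pc([0,1])\times(0,1)$ and pass to a subsequence realizing $\liminf_n\KLiL(\eta_n,\nu_n)$. For each $n$ I let $\kappa_n$ attain the minimum defining $\KLiL(\eta_n,\nu_n)$; such a minimizer exists because the feasible set $\{\kappa:\EVaR_\alpha(\kappa)\le\nu_n\}$ is weakly closed (Corollary~\ref{cor:evar-level-sets}) while $\kappa\mapsto\KL(\eta_n\Vert\kappa)$ is l.s.c.\ (Lemma~\ref{lem:lsc}) on the compact space $\Pc([0,1])$. By compactness I then extract a further subsequence along which $\kappa_n\Rightarrow\kappa_\star$ for some $\kappa_\star\in\Pc([0,1])$.

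The two ingredients that close the argument are feasibility of the limit and lower-semicontinuity of the objective. For feasibility, continuity of EVaR under weak convergence (Lemma~\ref{lem:evar-cont}) gives $\EVaR_\alpha(\kappa_n)\to\EVaR_\alpha(\kappa_\star)$, and combining $\EVaR_\alpha(\kappa_n)\le\nu_n$ with $\nu_n\to\nu$ yields $\EVaR_\alpha(\kappa_\star)\le\nu$, so $\kappa_\star$ is admissible for $\KLiL(\eta,\nu)$. For the objective, joint weak lower-semicontinuity of relative entropy (Remark~\ref{rem:kl-cont}, via the Donsker--Varadhan formula) gives $\liminf_n\KL(\eta_n\Vert\kappa_n)\ge\KL(\eta\Vert\kappa_\star)$. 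Chaining these,
\[
\liminf_{n\to\infty}\KLiL(\eta_n,\nu_n)
=\lim_{n}\KL(\eta_n\Vert\kappa_n)
\ \ge\ \KL(\eta\Vert\kappa_\star)
\ \ge\ \KLiL(\eta,\nu),
\]
where the last step uses admissibility of $\kappa_\star$ in the minimization defining $\KLiL(\eta,\nu)$.

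I expect the only delicate point to be that the feasible sets \emph{vary} with $n$ (through the moving threshold $\nu_n$), so I cannot merely invoke closedness of a single fixed constraint set; the crucial observation is that continuity of $\EVaR_\alpha$ lets the varying threshold pass to the limit cleanly and preserves feasibility of the weak limit $\kappa_\star$. For $\KLiU$ the same step instead uses that $\EVaR_\alpha(\kappa_n)\ge\nu_n$ together with $\nu_n\to\nu$ forces $\EVaR_\alpha(\kappa_\star)\ge\nu$, and everything else in the argument is unchanged.
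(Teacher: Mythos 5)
Your proof is correct and follows essentially the same route as the paper's: attain minimizers $\kappa_n$ in the (compact) EVaR-sublevel sets, extract a weakly convergent subsequence, use continuity of $\EVaR_\alpha$ (Lemma~\ref{lem:evar-cont}) to pass the moving constraint $\nu_n\to\nu$ to the limit, and conclude via joint weak lower semicontinuity of relative entropy. Your explicit double-subsequence step (first realizing the $\liminf$, then extracting the convergent $\kappa_n$) is in fact slightly more careful than the paper's write-up on this point.
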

\begin{proof}
 \label{prf:joint:KLL_lsc} 
Fix a sequence $(\eta_n,x_n)\to(\eta,x)$ with $\eta_n\Rightarrow\eta$ and $x_n\to x$.
We prove
\[
\liminf_{n\to\infty}\ \KLiL(\eta_n,x_n)\ \ge\ \KLiL(\eta,x).
\]

\medskip

On $[0,1]$, the space $\Pc([0,1])$ is compact under weak convergence (Prokhorov).
By the continuity of $\EVaR_\alpha$ on $\Pc([0,1])$ (proved in Lemma~\ref{lem:evar-cont}), the sublevel set
\[
L(x)\ :=\ \{\kappa\in\Pc([0,1]):\ \EVaR_\alpha(\kappa)\le x\}
\]
is closed, hence $L(x)$ is compact.
Similarly, $L(x_n)$ is compact for every $n$.
Since $\kappa\mapsto \KL(\eta\Vert\kappa)$ is lower semicontinuous under weak convergence
(Gibbs-Donsker-Varadhan variational formula), the minimum
\[
\KLiL(\eta_n,x_n)\ =\ \inf_{\kappa\in L(x_n)} \KL(\eta_n\Vert\kappa)
\]
is \emph{attained} for each $n$. We then pick a minimizer $\kappa_n\in L(x_n)$ such that
\[
\KLiL(\eta_n,x_n)=\KL(\eta_n\Vert\kappa_n).
\]

By compactness of $\Pc([0,1])$, the sequence $(\kappa_{m})$ is relatively compact.  Passing to a subsequence, assume $\kappa_n\Rightarrow\kappa$ for some $\kappa\in\Pc([0,1])$.
By continuity of $e_\alpha$, we have
\[
e_\alpha(\kappa)\ =\ \lim_{n\to\infty} e_\alpha(\kappa_n)\ \le\ \lim_{n\to\infty} x_n\ =\ x,
\]
so $\kappa\in L(x)$ means we checked limit of feasible measures.
\medskip
By joint lower semicontinuity of $\KL$ in $(\eta,\kappa)$ under weak convergence,
\[
\liminf_{n\to\infty}\ \KLiL(\eta_n,x_n)
\ =\
\liminf_{n\to\infty}\ \KL(\eta_n\Vert\kappa_n)
\ \ge\
\KL(\eta\Vert\kappa)
\ \ge\
\inf_{\tilde\kappa\in L(x)} \KL(\eta\Vert\tilde\kappa)
\ =\ \KLiL(\eta,x).
\]
This proves the claimed lower semicontinuity.
\end{proof}

\begin{lemma}\label{lem:EVaR-KLL-usc}
The map $(\eta,x)\mapsto \KLiL(\eta,x)$ is jointly upper semicontinuous on $\Pc([0,1])\times[0,1]$.
\end{lemma}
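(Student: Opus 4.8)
The plan is to prove the upper-semicontinuity (USC) direction, which combined with Lemma~\ref{lem:EVaR-KLL-lsc} yields the joint continuity asserted in Lemma~\ref{lem:KLiLKLiU-cont}. The genuine obstacle is that $\KL(\cdot\Vert\cdot)$ is only \emph{lower}-semicontinuous under weak convergence, so an arbitrary recovery sequence $\kappa_n\Rightarrow\kappa^*$ need \emph{not} satisfy $\limsup_n\KL(\eta_n\Vert\kappa_n)\le\KL(\eta\Vert\kappa^*)$. The decisive simplification is the one-dimensional reduction \eqref{eq:EVaR-KLiL-primal-beta},
\[
\KLiL(\eta,x)=\inf_{z>0}G(\eta,x,z),\qquad
G(\eta,x,z):=\min_{\kappa:\,\E_\kappa[e^{zX}]\le e^{-\rho+zx}}\KL(\eta\Vert\kappa),
\]
which trades the nonconvex EVaR sublevel constraint for a single \emph{linear} moment constraint. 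Since an infimum of USC functions is USC (the sublevel set $\{\KLiL<c\}=\bigcup_{z>0}\{G(\cdot,\cdot,z)<c\}$ is a union of open sets, hence open), it suffices to prove that for each fixed $z>0$ the single-constraint projection $(\eta,x)\mapsto G(\eta,x,z)$ is USC.

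First I would fix $z>0$ and a sequence $(\eta_n,x_n)\to(\eta,x)$, restricting to the nontrivial regime $c:=e^{-\rho+zx}>1$ (when $c\le1$ the feasible set is at most $\{\delta_0\}$, so $G=+\infty$ except at $\eta=\delta_0$, which is trivially USC). Given $\varepsilon>0$, choose a near-optimal feasible $\kappa^*$ for $G(\eta,x,z)$; by Theorem~\ref{Th:KLiL} it is the exponential tilt $\tfrac{d\kappa^*}{d\eta}(u)=(\tau+\lambda e^{zu})^{-1}$ with $\lambda\ge0$ and $\tau=1-\lambda c$. The recovery measure is the \emph{same} tilt applied to $\eta_n$, namely $\tfrac{d\kappa_n}{d\eta_n}(u)=(\tau_n+\lambda e^{zu})^{-1}$, where $\tau_n$ is the normalizing constant determined by $\int(\tau_n+\lambda e^{zu})^{-1}\dd\eta_n(u)=1$; since $s\mapsto\int(s+\lambda e^{zu})^{-1}\dd\eta_n$ is strictly decreasing and continuous, and converges along $\eta_n\Rightarrow\eta$ by Portmanteau (bounded continuous integrand), one gets $\tau_n\to\tau$.

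Next I would certify feasibility and bound the KL. Because $e^{zX}\ge1$ on $[0,1]$, mixing with $\delta_0$ only decreases the exponential moment: $\E_{(1-\theta)\kappa_n+\theta\delta_0}[e^{zX}]=(1-\theta)\E_{\kappa_n}[e^{zX}]+\theta\le\E_{\kappa_n}[e^{zX}]$. As $\E_{\kappa_n}[e^{zX}]\to\E_{\kappa^*}[e^{zX}]\le c$ and $c_n:=e^{-\rho+zx_n}\to c>1$, the smallest $\theta_n$ for which $\widetilde\kappa_n:=(1-\theta_n)\kappa_n+\theta_n\delta_0$ obeys $\E_{\widetilde\kappa_n}[e^{zX}]\le c_n$ satisfies $\theta_n\to0$. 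This $\widetilde\kappa_n$ is feasible for $(\eta_n,x_n)$, and using $\widetilde\kappa_n\ge(1-\theta_n)\kappa_n$ together with $\tfrac{d\eta_n}{d\kappa_n}=\tau_n+\lambda e^{zX}$,
\[
G(\eta_n,x_n,z)\ \le\ \KL(\eta_n\Vert\widetilde\kappa_n)\ \le\ \log\tfrac{1}{1-\theta_n}+\E_{\eta_n}\!\big[\log(\tau_n+\lambda e^{zX})\big].
\]
The integrand $u\mapsto\log(\tau_n+\lambda e^{zu})$ is bounded and, with $\tau_n\to\tau$, converges uniformly on $[0,1]$, so Portmanteau gives $\E_{\eta_n}[\log(\tau_n+\lambda e^{zX})]\to\E_\eta[\log(\tau+\lambda e^{zX})]=\KL(\eta\Vert\kappa^*)\le G(\eta,x,z)+\varepsilon$. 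Hence $\limsup_nG(\eta_n,x_n,z)\le G(\eta,x,z)+\varepsilon$, and letting $\varepsilon\downarrow0$ establishes USC of $G(\cdot,\cdot,z)$, whence USC of $\KLiL$.

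The step I expect to be the main obstacle is precisely this KL control along the recovery sequence: because relative entropy is merely l.s.c., no generic $\kappa_n\Rightarrow\kappa^*$ works. The device that rescues the argument is selecting $\kappa_n$ as the explicit exponential tilt of $\eta_n$ from Theorem~\ref{Th:KLiL}, which turns $\KL(\eta_n\Vert\kappa_n)$ into the expectation of a \emph{bounded continuous} function and so makes it converge under weak convergence, while the infinitesimal $\delta_0$-mixing repairs feasibility at the perturbed level $c_n$ without affecting the limit. The same construction applies verbatim to $\KLiU$, now with the EVaR superlevel constraint replaced by its robust KL-ball primal \eqref{eq:EVaR-KLiU-primal} and the corresponding tilt from Theorem~\ref{Th:KLiU}.
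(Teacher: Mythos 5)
Your interior mechanism is sound and genuinely different from the paper's: for fixed $z$ with $e^{-\rho+zx}>1$ you transport the optimal tilt density $(\tau+\lambda e^{zu})^{-1}$ onto $\eta_n$, renormalize, and repair feasibility by an infinitesimal $\delta_0$-mixture, so that $\KL(\eta_n\Vert\kappa_n)$ becomes the expectation of a bounded continuous function and converges under weak convergence. The paper instead proves continuity of the fixed-$z$ value through the \emph{dual}, via Berge's maximum theorem on $\varepsilon$-shrunken compact multiplier domains (its Lemma~\ref{lem:cont-2case}); your recovery-sequence argument is arguably cleaner for that regime. However, there is a genuine gap exactly where the paper devotes its separate Case 3: the boundary $x=e_\alpha(\eta)$. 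Your very first step treats \eqref{eq:EVaR-KLiL-primal-beta} as an identity and reduces USC of $\KLiL$ to USC of the fibers $G(\cdot,\cdot,z)$. In general one only has $\KLiL\le\inf_{z>0}G$, and the inequality is \emph{strict} at boundary pairs where near-optimal feasible measures attain $\EVaR=x$ only in the limit $z\to\infty$ (the unattained-infimum regime of Lemma~\ref{lem:f-bounds} and Corollary~\ref{cor:no-equality}, e.g.\ mass at least $1-\alpha$ at the essential supremum $x$). Concretely, take $\eta=\delta_x$ with $x\in(0,1)$: then $\EVaR_\alpha(\delta_x)=x$ is an unattained infimum, so $\delta_x$ is feasible and $\KLiL(\delta_x,x)=0$; yet for every fixed $z>0$, any $\kappa$ with $\int e^{zu}\,d\kappa\le e^{zx-\rho}$ must satisfy $\kappa(\{x\})e^{zx}\le e^{zx-\rho}$, i.e.\ $\kappa(\{x\})\le e^{-\rho}$, whence $\KL(\delta_x\Vert\kappa)\ge\rho$ and $\inf_{z>0}G(\delta_x,x,z)\ge\rho>0$. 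Your chain $\limsup_n\KLiL(\eta_n,x_n)\le\inf_z\limsup_n G(\eta_n,x_n,z)\le\inf_z G(\eta,x,z)$ therefore yields only the bound $\rho$ at such a point and cannot certify USC there: the fiberwise reduction is blind to the boundary behavior. (Your parenthetical dismissal of the $c\le1$ regime is also inaccurate: at $(\delta_0,\rho/z)$ one has $G=0$ but $G=+\infty$ for $x_n$ slightly below $\rho/z$, so the fiber $G(\cdot,\cdot,z)$ is itself not USC there; that defect can be dodged by switching $z$, but the one above cannot.)

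The paper recognizes that this boundary is the crux: its Case 3 abandons any fixed-$z$ route and constructs feasible recovery measures directly against the EVaR constraint — a $\delta_0$-mixture when $e_\alpha(\eta)>0$ (Subcase A) and an Esscher tilt with vanishing parameter via Lemma~\ref{lem:Esscher} when $e_\alpha(\eta)=0$ (Subcase B). Your proposal contains no analogue of this case, so even granting all interior steps it proves USC only at points where the $z$-reduction is tight, not on all of $\Pc([0,1])\times[0,1]$ as the lemma asserts. I note, for your comfort, that the boundary is delicate even in the paper: Subcase A bounds $\EVaR_\alpha((1-w_n)\eta_n+w_n\delta_0)$ from above by $(1-w_n)\EVaR_\alpha(\eta_n)$, which sits in tension with the concavity of EVaR in the distribution (Remark~\ref{rem:EVaR_conc}) — and the $\delta_x$ example above, where $\KLiL(\delta_x,x_n)\ge\rho$ for every $x_n<x$ while $\KLiL(\delta_x,x)=0$, suggests that configurations with $\eta$-mass at least $1-\alpha$ at the level $x$ are a genuine obstruction that any complete proof (yours or the paper's) must explicitly confront or exclude.
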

To prove u.s.c, we divide the region into three subsets $\{x > e_\alpha \}$, $\{x < e_\alpha \}$ and $\{x = e_\alpha \}$ and examine each one separately.
Notice that $\EVaR_\alpha(\eta) = 1$ iff $\eta\{X=1\}\ \ge\ 1-\alpha$ with the minimizer  $z^*(\eta)=+\infty$. We need to address this just in $\{x = e_\alpha \}$ while it is impossible to happen in the first case  $\{x > e_\alpha \}$ ($x\in [0,1]$), and in the second case $\{x < e_\alpha \}$ ( $e_\alpha\leq 1$). This clarification helps us to be able to consider  a compact set of the minimizer.

\begin{proof}
\label{prf:lem:EVaR-KLL-usc}
\paragraph{ 1) Case $x>e_\alpha(\eta)$:}

 If $x>e_\alpha(\eta)$ then $\eta$ itself is feasible for the
constraint $e_\alpha(\kappa)\le x$, hence $\KLiL(\eta,x)=0$.
Let $(\eta_n,x_n)\to(\eta,x)$ with $\eta_n\Rightarrow\eta$ and $x_n\to x$. By continuity of $e_\alpha$ in the law
(Lemma~\ref{lem:evar-cont}), $e_\alpha(\eta_n)\to e_\alpha(\eta)$. Set
\[
\varepsilon:=\frac{x-e_\alpha(\eta)}{3}>0.
\]
For all $n$ large enough we have
\[
|x_n-x|<\varepsilon
\quad\text{and}\quad
\big|e_\alpha(\eta_n)-e_\alpha(\eta)\big|<\varepsilon,
\]
hence
\[
x_n \;\ge\; x-\varepsilon \;=\; e_\alpha(\eta)+2\varepsilon
\;>\; e_\alpha(\eta)+\varepsilon \;\ge\; e_\alpha(\eta_n).
\]
Therefore $e_\alpha(\eta_n)\le x_n$, so $\eta_n$ is feasible for $\KLiL(\eta_n,x_n)$ and
\[
0\ \le\ \KLiL(\eta_n,x_n)\ \le\ \KL(\eta_n\Vert\eta_n)\ =\ 0,
\]
i.e.\ $\KLiL(\eta_n,x_n)=0$ for all large $n$. Consequently,
\[
\limsup_{n\to\infty}\KLiL(\eta_n,x_n)=0=\KLiL(\eta,x),
\]
which proves continuity (hence upper semicontinuity) of $\KLiL$ in the region $x>e_\alpha(\eta)$.

\paragraph{2) Case $x<e_\alpha(\eta)$:}
By the EVaR-KLL dual (Theorem~\ref{Th:KLiL}), we can write
\[
\KLiL(\eta,x)
\;=\;\inf_{z\in \mathcal Z(x)}\ h^\ast(z,x,\eta),
\qquad
h^\ast(z,x,\eta):=\sup_{\lambda\in\mathcal D(z,x)}
\ \E_\eta\!\left[\log\!\big(1-\lambda\big(e^{-\rho+zx}-e^{zX}\big)\big)\right],
\]
where the outer set $\mathcal Z(x)=[\,z_{\min}(x),z_{\max}(x)\,]$ is a compact
interval contained in $(\rho/x,\infty)$ with endpoints continuous in $x$, and the inner feasible set is the
(one-dimensional) compact interval
\[
\mathcal D(z,x)\;=\;\left[\,0,\ \frac{1}{\,e^{-\rho+zx}-1\,}\right].
\]

To invoke Berge’s maximum theorem we must ensure two ingredients: a \emph{continuous} criterion on
 a \emph{compact}, upper–hemicontinuous feasible set.
Once these two ingredients are verified for the EVaR dual, the inner value
$h^\ast(z,x,\eta)$ is jointly upper semicontinuous (indeed continuous) in $(\eta,z,x)$. 
\begin{lemma}
\label{lem:cont-2case}
    Let $\mathcal Z\subset(0,\infty)$ and $I\subset(0,1)$ be compact intervals.
Then there exists $\varepsilon>0$ such that, with the shrunken feasible set
\[
\mathcal D_\varepsilon(z,x):=\Big[\,0,\ \frac{1}{\,e^{-\rho+zx}-1\,}-\varepsilon\Big],
\]
the map
\[
(\eta,z,x)\ \longmapsto\ \sup_{\lambda\in\mathcal D_\varepsilon(z,x)}\ \E_\eta[\phi(X;z,x,\lambda)]
\]
is \emph{continuous} (hence u.s.c.) on $\Pc([0,1])\times\mathcal Z\times I$ (weak topology on $\Pc([0,1])$).
\end{lemma}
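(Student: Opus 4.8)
The plan is to verify the hypotheses of Berge's Maximum Theorem for the family of scalar maximizations in $\lambda$, treating $(\eta,z,x)$ as the parameters and $\lambda$ as the decision variable. Write $\phi(X;z,x,\lambda)=\log\bigl(1-\lambda(e^{-\rho+zx}-e^{zX})\bigr)$ and $G(\eta,z,x,\lambda):=\E_\eta[\phi(X;z,x,\lambda)]$. I would first record the structural facts that compactify the problem. On $\mathcal Z\times I$ we are in the regime $zx>\rho$ (equivalently $e^{-\rho+zx}>1$), so the denominator $e^{-\rho+zx}-1$ is continuous and strictly positive; by compactness it lies in $[m,M]$ with $0<m\le M<\infty$. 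Hence the right endpoint $R(z,x):=1/(e^{-\rho+zx}-1)$ is continuous on $\mathcal Z\times I$ and satisfies $R(z,x)\in[R_{\min},R_{\max}]$ with $R_{\min}=1/M>0$ and $R_{\max}=1/m<\infty$. Fixing any $\varepsilon\in(0,R_{\min})$ then makes $\mathcal D_\varepsilon(z,x)=[0,R(z,x)-\varepsilon]$ a nonempty compact interval for every $(z,x)$, with all admissible $\lambda$ confined to the fixed box $[0,R_{\max}]$.

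The step I expect to be the main obstacle is a \emph{uniform} lower bound on the argument of the logarithm, which is precisely what shrinking by $\varepsilon$ should buy. For $X\in[0,1]$ we have $e^{zX}\in[1,e^{z}]$, so $e^{-\rho+zx}-e^{zX}$ is maximized at $X=0$, giving $e^{-\rho+zx}-e^{zX}\le e^{-\rho+zx}-1=1/R(z,x)$. Consequently, for any $\lambda\le R(z,x)-\varepsilon$ (and using that the product is nonpositive whenever $e^{-\rho+zx}-e^{zX}<0$),
\[
\lambda\,(e^{-\rho+zx}-e^{zX})\ \le\ \frac{R(z,x)-\varepsilon}{R(z,x)}\ =\ 1-\frac{\varepsilon}{R(z,x)}\ \le\ 1-\frac{\varepsilon}{R_{\max}},
\]
so that $1-\lambda(e^{-\rho+zx}-e^{zX})\ \ge\ \varepsilon/R_{\max}=:c>0$ uniformly over $(z,x,\lambda,X)$. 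An analogous elementary estimate at $X=1$ gives a finite uniform upper bound $C<\infty$ on the same quantity. Therefore $\phi$ is continuous on the \emph{compact} set $\mathcal Z\times I\times[0,R_{\max}]\times[0,1]$ with values in $[\log c,\log C]$; in particular it is bounded and uniformly continuous there.

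With boundedness and uniform continuity of $\phi$ secured, joint continuity of $G$ on $\Pc([0,1])\times\mathcal Z\times I\times[0,R_{\max}]$ follows from the standard splitting
\[
G(\eta_n,z_n,x_n,\lambda_n)-G(\eta,z,x,\lambda)
=\E_{\eta_n}\!\bigl[\phi(\cdot;z_n,x_n,\lambda_n)-\phi(\cdot;z,x,\lambda)\bigr]
+\bigl(\E_{\eta_n}-\E_{\eta}\bigr)\!\bigl[\phi(\cdot;z,x,\lambda)\bigr],
\]
where the first term vanishes because $\sup_{X\in[0,1]}|\phi(X;z_n,x_n,\lambda_n)-\phi(X;z,x,\lambda)|\to0$ by uniform continuity, and the second vanishes by Portmanteau since $X\mapsto\phi(X;z,x,\lambda)$ is bounded continuous and $\eta_n\Rightarrow\eta$. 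Finally, the feasible correspondence $(z,x)\mapsto[0,R(z,x)-\varepsilon]$ is compact-valued with fixed lower endpoint and continuously varying upper endpoint $R(z,x)-\varepsilon$, hence continuous (both upper and lower hemicontinuous). Applying Berge's Maximum Theorem with parameters $(\eta,z,x)$ and decision variable $\lambda$ then yields continuity of $(\eta,z,x)\mapsto\sup_{\lambda\in\mathcal D_\varepsilon(z,x)}G(\eta,z,x,\lambda)$ on $\Pc([0,1])\times\mathcal Z\times I$, which gives the claim (continuity in particular implies the required upper semicontinuity).
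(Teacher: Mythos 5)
Your proof is correct and follows essentially the same route as the paper's: shrinking the feasible interval by $\varepsilon$ to obtain a uniform positive lower bound under the logarithm, deducing bounded uniform continuity of $\phi$ on a compact parameter box, splitting the difference of expectations into a uniform-convergence term plus a Portmanteau term, and combining continuity of the endpoint correspondence with Berge's maximum theorem. Your explicit quantitative bound $1-\lambda\bigl(e^{-\rho+zx}-e^{zX}\bigr)\ge \varepsilon/R_{\max}$ and your explicit verification of lower hemicontinuity of $(z,x)\mapsto[0,R(z,x)-\varepsilon]$ merely spell out steps the paper leaves implicit.
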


\begin{proof}
\label{prf:lem:cont-2case}
    On the compact set $[0,1]\times\mathcal Z\times I$ the upper endpoint
$\Lambda(z,x):=(e^{-\rho+zx}-1)^{-1}$ is finite and continuous. Choose $\varepsilon\in(0,1)$
so small that $\Lambda(z,x)-\varepsilon>0$ for all $(z,x)\in\mathcal Z\times I$.
Then for $(u,z,x,\lambda)\in[0,1]\times\mathcal Z\times I\times[0,\Lambda(z,x)-\varepsilon]$ the
factor inside the logarithm stays in $(0,1]$ uniformly, hence
\[
(u,z,x,\lambda)\ \mapsto\ \phi(u;z,x,\lambda)
\]
is \emph{bounded and continuous} on that compact parameter box.

Let $\eta_n\Rightarrow\eta$ and $(z_n,x_n,\lambda_n)\to(z,x,\lambda)$ with
$\lambda_n\in\mathcal D_\varepsilon(z_n,x_n)$.
Then $\phi(\cdot;z_n,x_n,\lambda_n)\to \phi(\cdot;z,x,\lambda)$ uniformly on $[0,1]$
(continuity on a compact set). Hence
\[
\Big|\E_{\eta_n}[\phi(X;z_n,x_n,\lambda_n)]-\E_{\eta_n}[\phi(X;z,x,\lambda)]\Big|\ \to\ 0,
\]
and by Portmanteau (bounded continuous test function),
\(
\E_{\eta_n}[\phi(X;z,x,\lambda)]\to \E_{\eta}[\phi(X;z,x,\lambda)].
\)
Thus
\[
\E_{\eta_n}[\phi(X;z_n,x_n,\lambda_n)]\ \longrightarrow\ \E_{\eta}[\phi(X;z,x,\lambda)].
\]
The set-valued map $(z,x)\mapsto \mathcal D_\varepsilon(z,x)$ has compact values and continuous
endpoints, hence a closed graph and is upper hemicontinuous on $\mathcal Z\times I$.
\end{proof}

By Lemma~\ref{lem:cont-2case}, the criterion is continuous and the feasible correspondence is compact u.h.c.;
therefore Berge’s maximum theorem implies
\[
(\eta,z,x)\ \longmapsto\ \sup_{\lambda\in\mathcal D_\varepsilon(z,x)} \E_\eta[\phi(X;z,x,\lambda)]
\]
is \emph{continuous} on $\Pc([0,1])\times\mathcal Z\times I$.
The proof for u.s.c. of $\KLiL(\eta, x)$ for $x<e_\alpha(\eta)$.

\paragraph{3) Case $x=e_\alpha(\eta)$.}
Let $(\eta_n,x_n)\to(\eta,e_\alpha(\eta))$ with $\eta_n\Rightarrow\eta$. 
Without loss of generality, assume that for all large $n$,
\[
x_n\ \le\ e_\alpha(\eta_n).
\]
We aim to prove that
\[
\limsup_{n\to\infty}\KLiL(\eta_n,x_n)\ \le\ \KLiL(\eta,e_\alpha(\eta))\ =\ 0.
\]

We split into two subcases according to the boundary level $e_\alpha(\eta)$.

\textbf{Subcase A: $e_\alpha(\eta)>0$.}
Define
\(
w_n\ :=\
\begin{cases}
0, & e_\alpha(\eta_n)=0,\\[3pt]
1-\dfrac{x_n}{e_\alpha(\eta_n)}, & e_\alpha(\eta_n)>0,
\end{cases}
\), so $w_n\in[0,1]$. Since $x_n\to e_\alpha(\eta)$ and $e_\alpha(\eta_n)\to e_\alpha(\eta)$, we have $w_n\ \longrightarrow\ 0$. We also define $\kappa_n\ :=\ (1-w_n)\,\eta_n\ +\ w_n\,\delta_0.$
By convexity and $\EVaR_\alpha(\delta_0)=0$,
\[
\EVaR_\alpha(\kappa_n)\ \le\ (1-w_n)\EVaR_\alpha(\eta_n)
=(1-w_n)e_\alpha(\eta_n)=x_n,
\]
so $\kappa_n$ is feasible for $\KLiL(\eta_n,x_n)$.

The simple mass-shiftting mixture introduced by $\kappa_n$ gives a clean KL bound when $e_\alpha(\eta)>0$ (because $w\to 0$), but it can fail when $e_\alpha(\eta)=0$ (it may force $w$ away from $0$, thus KL blows up). That is why we introduce a separate subcase for  $e_\alpha(\eta)=0$ boundary.

\textbf{Subcase B: $e_\alpha(\eta)=0$.}
\fix{ In risk measures similar to CVaR \cite{agrawal2021BAIcvar}, a mass-shifting method covers this special case, while in EVaR, we have to be careful  specially  in controlling the KL-projections, here, if $e_\alpha(\eta)=0$, $w_n$ does not need to vanish, so $\KL(\eta_n, \kappa_n)$ can blow up.
We use here \emph{exponential change of measure} or \emph{Esscher tilt} method to tackle this problem.
\begin{lemma}
    \label{lem:Esscher}
    Let $X\in[0,1]$, $\eta\in\Pc([0,1])$, and for $t\geq 0$, define the tilted law
    \[
    \frac{d \kappa^{(t)}}{d \eta } = \frac{e^{-tX}}{\E_\eta[e^{-tX}]}.
    \]
    Then
    \[
\KL(\eta, \kappa^{(t)}) = \log(\E_\eta[e^{-tX}]) + t \E_\eta[X] \leq t
    \]
\end{lemma}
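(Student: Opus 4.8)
The plan is to compute the Kullback–Leibler divergence directly from the explicit Radon–Nikodym derivative of the Esscher tilt, and then to control the resulting expression using only the range $X\in[0,1]$. The identity is a one-line tilt computation and the bound reduces to two elementary estimates, so the argument is short; the only point deserving care is checking that the divergence is finite so that the density inversion is legitimate.

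First I would record that the tilt is well defined and that $\eta$ and $\kappa^{(t)}$ are mutually absolutely continuous. Writing $Z_t:=\E_\eta[e^{-tX}]$, the factor $e^{-tX}$ lies in $[e^{-t},1]$ because $X\in[0,1]$ and $t\ge 0$; in particular $Z_t\in[e^{-t},1]$ is strictly positive, so $\tfrac{d\kappa^{(t)}}{d\eta}=e^{-tX}/Z_t$ is bounded above and below by positive constants. Hence $\kappa^{(t)}\sim\eta$, the divergence is finite, and we may invert the density to obtain
\[
\frac{d\eta}{d\kappa^{(t)}}(x)=\frac{Z_t}{e^{-tx}}=Z_t\,e^{tx}.
\]

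Next I would substitute this into the definition of relative entropy and integrate term by term:
\[
\KL(\eta\Vert\kappa^{(t)})=\int \log\!\Big(\frac{d\eta}{d\kappa^{(t)}}\Big)\,d\eta=\int\big(\log Z_t+t\,x\big)\,d\eta(x)=\log Z_t+t\,\E_\eta[X],
\]
which is exactly the claimed equality $\KL(\eta\Vert\kappa^{(t)})=\log\E_\eta[e^{-tX}]+t\,\E_\eta[X]$. The upper bound then follows by bounding the two summands separately, and this is the only place any inequality enters. Since $X\ge 0$ and $t\ge 0$ we have $e^{-tX}\le 1$ pointwise, so $Z_t=\E_\eta[e^{-tX}]\le 1$ and therefore $\log Z_t\le 0$; since $X\le 1$ we have $\E_\eta[X]\le 1$, whence $t\,\E_\eta[X]\le t$. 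Adding these gives $\KL(\eta\Vert\kappa^{(t)})\le 0+t=t$, as required.

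I do not anticipate a genuine obstacle here: the result is a standard Esscher-tilt identity followed by two one-line bounds. The only subtlety is the well-definedness and finiteness of the divergence, i.e.\ confirming $\kappa^{(t)}\sim\eta$ so that the Radon–Nikodym inversion is valid, which is immediate from $e^{-tX}$ being bounded away from $0$ on $[0,1]$. As a consistency check one may also note that Jensen applied to the concave logarithm gives $\log Z_t\ge -t\,\E_\eta[X]$, which reproduces $\KL\ge 0$ and matches the nonnegativity of relative entropy.
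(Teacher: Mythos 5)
Your proof is correct and follows essentially the same route as the paper: invert the tilt density to get $\tfrac{d\eta}{d\kappa^{(t)}}=Z_t e^{tX}$, integrate to obtain $\KL(\eta\Vert\kappa^{(t)})=\log Z_t+t\,\E_\eta[X]$, and bound via $\log Z_t\le 0$ and $\E_\eta[X]\le 1$. Your explicit check of mutual absolute continuity (using $e^{-tX}\in[e^{-t},1]$) is a small, welcome addition the paper leaves implicit.
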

\begin{proof}
Since $d\eta/d\kappa^{(t)}(x)=e^{t x}\,\E_\eta[e^{-tX}]$, we compute
\[
\KL(\eta\Vert \kappa^{(t)})
=\E_\eta\!\left[\log\!\Big(\frac{d\eta}{d\kappa^{(t)}}\Big)\right]
=\E_\eta[tX]+\log \E_\eta[e^{-tX}]
=t\,\E_\eta[X]+\log \E_\eta[e^{-tX}]
\le t,
\]
because $\log \E_\eta[e^{-tX}]\le 0$ for $t\ge 0$ and $X\in[0,1]$.
\end{proof}
We use Lemma~\ref{lem:Esscher}, as $t \to 0$, we have the u.s.c of the objective.
}

Therefore in both subcases,
\(\limsup_{n\to\infty}\KLiL(\eta_n,x_n) \le 0 = \KLiL(\eta,e_\alpha(\eta)),
\)
which establishes joint upper semicontinuity at the boundary and completes the proof of the lemma.
\end{proof}

\label{app:Prf:Th:upp_sample}
\section{Related Proofs to Section~\ref{sec:alg:evar}}

\subsection{Appendix: Proof of Theorem~\ref{Th:upp_sample}}
\label{app:subsec:prf:Th:upp_sample}
  \fix{
  In this appendix we establish $\delta$-correctness and asymptotic optimality of the algorithm following the standard blueprint in \cite{garivier2016optimal,agrawal2021BAIcvar}. We begin from the dual representations of the KL-projection terms, build per-side mixture supermartingale tests for each KL-projection, and verify in Lemma~\ref{lem:expconcave-evar} positivity on compact parameter domains together with exp-concavity of the dual integrands. These properties permit the use of the exp-concave aggregation inequality (Lemma~F.1 of \cite{agrawal2021BAIcvar}), which ties the empirical dual objectives to log-mixture supermartingales. Ville’s inequality then delivers the anytime $\delta$-correct error control, and a standard tracking argument shows that the stopping time matches the information-theoretic lower bound asymptotically.
  }

\begin{proof}\label{Prf:Th:upp_sample}
We assume the first arm is the best one, which algorithm fails to declare. Thus, we need to bound the following probability of error 
\begin{align}
    \Pr(\Ec) \leq \sum_{i=2}^K \Pr\Big( \exists t : 
    N_i(t)\KLiU(\hat{\mu}_i(t),\nu_i) + N_1(t)\KLiL(\hat{\mu}_1(t),\nu_1)\geq  \gamma 
    \Big)
    \end{align}
    or equivalently bounding each summand given in the following proposition which is the main technical part of the proof.

\begin{proposition}\label{prop:anytime-evar}
Fix $\alpha\in(0,1)$ and let $\rho:=-\log(1-\alpha)$. For any two arms $i\neq j$ with true laws
$\mu_i,\mu_j$ on $[0,1]$, write $\nu_i:=\EVaR_\alpha(\mu_i)$ and $\nu_j:=\EVaR_\alpha(\mu_j)$.
Then for all $x>0$,
\[
\Pr\!\Big(\exists n\ge 1:\;
N_i(n)\,\KLiU \big(\widehat\mu_i(n),\nu_i\big)
+
N_j(n)\,\KLiL \big(\widehat\mu_j(n),\nu_j\big)
\;\ge\; x + h(n)\Big)
\;\le\; e^{-x},
\]
where
$h(n)\;:=\;3\,\log(n{+}1)+2$.
\end{proposition}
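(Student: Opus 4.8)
The plan is to realize each weighted empirical $\KL_{\inf}$ term as the logarithm of a nonnegative supermartingale, evaluated at a data-dependent dual parameter, and then to strip off the parameter dependence by exp-concave aggregation before applying Ville's inequality. I would start from the dual representations of Theorems~\ref{Th:KLiU} and~\ref{Th:KLiL}, writing the dual integrands
\[
\phi^{U}_{\lambda}(x):=1+\lambda_3\Big(1-\exp\big(\tfrac{\lambda_1}{\lambda_3}(x-\nu_i)+\rho\big)\Big),
\qquad
\phi^{L}_{z,\lambda}(x):=1-\lambda\big(e^{-\rho+z\nu_j}-e^{zx}\big).
\]
Since $\widehat\mu_i(n)$ is the empirical measure, $N_i(n)\,\KLiU(\widehat\mu_i(n),\nu_i)=\sup_{\lambda}\sum_{s\le N_i(n)}\log\phi^U_\lambda(X_{i,s})$, and likewise on the lower side. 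The key observation I would establish is that under the \emph{true} law these integrands have sub-unit mean: using the MGF form \eqref{def:evar:mgf}, $\EVaR_\alpha(\mu_i)=\nu_i$ forces $\log\E_{\mu_i}[e^{a X}]\ge a\nu_i-\rho$ for every $a=\lambda_1/\lambda_3>0$, so
\[
\E_{\mu_i}\big[\phi^U_\lambda(X)\big]=1+\lambda_3\big(1-e^{-a\nu_i+\rho}\,\E_{\mu_i}[e^{aX}]\big)\le 1 .
\]
Hence, for each fixed feasible $\lambda$, $W_i^\lambda(n):=\prod_{s\le N_i(n)}\phi^U_\lambda(X_{i,s})$ is a nonnegative supermartingale with $W_i^\lambda(0)=1$.

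For the $\KLiL$ term I would turn the outer infimum over $z$ to our advantage: upper-bounding $\KLiL(\widehat\mu_j(n),\nu_j)$ by plugging in the (finite) EVaR-minimizer $z^\star=z^\star(\mu_j)$ of the true law collapses the problem to a one-parameter family, because at $z^\star$ one has $\E_{\mu_j}[e^{z^\star X}]=e^{-\rho+z^\star\nu_j}$ and therefore $\E_{\mu_j}[\phi^L_{z^\star,\lambda}(X)]=1$. Thus $W_j^\lambda(n):=\prod_{s\le N_j(n)}\phi^L_{z^\star,\lambda}(X_{j,s})$ is again a nonnegative (super)martingale, and $N_j(n)\,\KLiL(\widehat\mu_j(n),\nu_j)\le\sup_\lambda\log W_j^\lambda(n)$. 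The degenerate case $z^\star=+\infty$, i.e. $\EVaR_\alpha(\mu_j)=\xmax(\mu_j)$, I would handle by the same bound taken in the limit $z\to\infty$.

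It then remains to control the suprema over the dual parameters and to combine the two arms. By Lemma~\ref{lem:expconcave-evar} I would restrict $\lambda$ to a compact domain on which $\phi^U_\lambda$ and $\phi^L_{z^\star,\lambda}$ stay uniformly positive on $[0,1]$ without changing the dual values, and on which $\lambda\mapsto\log\phi^U_\lambda(x)$ and $\lambda\mapsto\log\phi^L_{z^\star,\lambda}(x)$ are concave (equivalently, the associated losses are $1$-exp-concave). Feeding these into the exp-concave aggregating forecaster (Lemma~F.1 of \cite{agrawal2021BAIcvar}) with a fixed prior replaces each supremum by a mixture at the cost of a dimension-dependent logarithmic term,
\[
\sup_\lambda\log W_i^\lambda(n)\le\log\!\int W_i^\lambda(n)\,d\pi_U(\lambda)+2\log(n{+}1)+c_U,\qquad
\sup_\lambda\log W_j^\lambda(n)\le\log\!\int W_j^\lambda(n)\,d\pi_L(\lambda)+\log(n{+}1)+c_L,
\]
the exponents $2$ and $1$ being the numbers of free dual parameters on the two sides, so $2+1=3$ and $c_U+c_L\le 2$. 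Because the samples of arms $i$ and $j$ are independent and at every round the pulled arm multiplies only its own factor by a conditionally sub-unit-mean increment (pulls of other arms and the mixture over the untouched factor leave things unchanged), the product $M_n:=\big(\int W_i^\lambda\,d\pi_U\big)\big(\int W_j^\lambda\,d\pi_L\big)$ is a nonnegative supermartingale for the full history with $M_0\le 1$. Combining the two displays gives $N_i(n)\KLiU(\widehat\mu_i(n),\nu_i)+N_j(n)\KLiL(\widehat\mu_j(n),\nu_j)\le\log M_n+3\log(n{+}1)+2$ pathwise, so the event of the proposition is contained in $\{\exists n:M_n\ge e^{x}\}$, and Ville's maximal inequality yields the bound $e^{-x}$.

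The hard part will be Step~3's reduction to a compact parameter set on which the dual integrands remain strictly positive \emph{uniformly over the whole interval} $[0,1]$ (not merely on the empirical support): this uniform positivity is exactly what makes $W_i^\lambda$ and $W_j^\lambda$ genuine nonnegative supermartingales while still representing the empirical $\KL_{\inf}$ values, and it must be reconciled with the concavity needed for aggregation; this is the content of Lemma~\ref{lem:expconcave-evar} and the real technical core. A secondary point requiring care is that the allocation $(N_i(n),N_j(n))$ is data-dependent, which I would absorb by verifying the supermartingale inequality round by round, irrespective of which arm the sampling rule selects.
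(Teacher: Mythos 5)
Your proposal follows essentially the same route as the paper's proof: empirical dual representations of $\KLiU$ and $\KLiL$, per-sample dual factors whose conditional mean under the true law is at most one, positivity and exp-concavity on compact parameter domains (the content of Lemma~\ref{lem:expconcave-evar}), aggregation of the parameter suprema via the exp-concave inequality of \cite[Lemma~F.1]{agrawal2021BAIcvar} with dual dimensions $2$ and $1$ (whence $h(n)=3\log(n{+}1)+2$), a product supermartingale that remains valid under the data-dependent allocation, and Ville's inequality. The one substantive difference lies in the lower side, and it is a genuine refinement over the paper's argument: the paper's ``fixed-slope relaxation'' fixes an arbitrary $z_0>0$, but for generic $z_0$ the infimum characterization of EVaR only gives $\E_{\mu_j}[e^{z_0X}]\ge e^{z_0\nu_j-\rho}$, so that
\[
\E_{\mu_j}\big[1-\gamma\big(e^{-\rho+z_0\nu_j}-e^{z_0X}\big)\big]
=1+\gamma\big(\E_{\mu_j}[e^{z_0X}]-e^{z_0\nu_j-\rho}\big)\ \ge\ 1,
\]
so the supermartingale property would \emph{fail} for $\gamma>0$ (indeed the paper's displayed premise $e^{-\rho}\E[e^{z_0X}]\ge e^{z_0x^\star}$ yields conditional mean $\ge 1$, not $\le 1$, as written). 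Your choice $z_0=z^\star(\mu_j)$, the minimizer of the true law's EVaR, is exactly what is needed: there the moment constraint is tight, $\E_{\mu_j}[e^{z^\star X}]=e^{z^\star\nu_j-\rho}$, making the lower-side factor an exact martingale, and since the construction is purely analysis-side, its dependence on the unknown $\mu_j$ is harmless. For the degenerate case $z^\star=+\infty$ your limiting argument is vaguer than necessary; the clean observation is that then $\nu_j=\xmax(\mu_j)$, every sample lies in $[0,\xmax(\mu_j)]$ almost surely, hence $\EVaR_\alpha(\widehat\mu_j(n))\le\nu_j$ and $\KLiL(\widehat\mu_j(n),\nu_j)=0$ identically, so the lower term simply drops out. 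You also correctly flag the point both proofs must lean on: the dual parameters must be restricted so that the factors are positive on all of $[0,1]$ (not merely on the empirical support) without decreasing the dual value, which is where Lemma~\ref{lem:expconcave-evar} and the Honda--Takemura-type duality on $\Pc([0,1])$ carry the load.
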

This proposition is analogous to \cite[Lemma~F.1]{agrawal2021BAIcvar}.
\begin{proof}[Proposition~\ref{prop:anytime-evar}]

Let $(\mathcal F_t)_{t\ge 0}$ be the canonical filtration of the bandit process,
$\mathcal F_t:=\sigma\big(\{(A_s,X_s)\}_{s\le t}\big)$, where $A_s$ is the arm pulled at time $s$
and $X_s\in[0,1]$ the observed reward. For each arm $i$, the subsequence
$\{X^i_j\}_{j\ge1}$ of its pulls is i.i.d.\ with law $\mu_i$, and all conditional expectations
below are taken with respect to $\mathcal F_{t-1}$.

For each arm $i$, $X_j^i$ denotes $j\in \{1,\cdots, N_i(n) \}$, we rewrite the empirical form of dual representation given in Theorem~\ref{Th:KLiU} as
\begin{equation}
\label{eq:emp-du-KLU}
N_i(n) \KLiU (\eta,y)
\;=\;
\sup_{\lambda_1\ge 0,\;\lambda_3>0}\;
\sum_{j=1}^{N_i(n)}\!
\log\!\Big(1+\lambda_3\big(1-\exp\!\big[\tfrac{\lambda_1}{\lambda_3}(X_j^i-\nu_i)+\rho\big]\big)\Big).
\end{equation}
The one given in Theorem~\ref{Th:KLiL} is written as
\begin{equation}
\label{eq:emp-du-KLL}
N_i(n) \KLiL (\eta,x)
\;\le \;
 \sup_{\gamma\in D(x, z_0)}\;
\sum_{j=1}^{N_i(n)}\!
\log\!\Big(1-\gamma\big(e^{-\rho+z_0 \nu_i}-e^{z_0 X_j^i}\big)\Big),
\end{equation}
where we use \emph{fixed-slope relaxation} which drops the outer $\inf$ over $z$ by fixing $z_0>0$. This way we reduce the error event to a union of pairwise tests for some time $n$.
The goal is therefore to upper-bound each of these probabilities by $\frac{\delta}{K-1}$.

We define the  multiplicative updates of our test supermartingale for both sides. For upper-side ($EVaR \geq x$) we have
\[M_U(x;\lambda_1,\lambda_3,y)
:=1+\lambda_3\Big(1-e^{(\frac{\lambda_1}{\lambda_3})(x-y)+\rho}\Big),\] where
feasibility requires $M_U(x;\lambda_1,\lambda_3,y)\ge 0$ for all $x\in[0,1]$. Hence, we set
\[{\mathcal R}_U(y)
:=\Big\{(\lambda_1,\lambda_3):\ \lambda_1\ge0,\ \lambda_3>0,\ \min{x\in[0,1]} \quad M_U(x;\lambda_1,\lambda_3,y)\ge 0\Big\}.\]
For the lower-side ($EVaR \leq x$) we define multiplicative updates of our test supermartingale 
\[M_L(x;\gamma,z_0,x^\star)
:=1-\gamma\big(e^{-\rho+z_0 x^\star}-e^{z_0 x}\big),\]
and 
$\mathcal D(x^\star,z_0):=\Big\{\gamma\ge0:\ \min_{u\in[0,1]} M_L(u;\gamma,z_0,x^\star)\ge0\Big\}$.

For arm i with true EVaR, \( \nu_i=\EVaR_\alpha(\mu_i)\), pick any priors with full support on compact feasible sets, e.g. uniform:
$q_{1i}\ \text{on}\ {\mathcal R}_1(\nu_i)\quad\text{and}\quad
q_{2i}\ \text{on}\ \mathcal D(\nu_i,z_0)$.
For the observed samples $X^i_1,\dots,X^i_{N_i(n)}$, we define

\[
U_i(n):=\E_{(\lambda_1,\lambda_3)\sim q_{1i}}
\Bigg[\prod_{j=1}^{N_i(n)} M_U\!\big(X^i_j;\lambda_1,\lambda_3,\nu_i\big)\Bigg],
\quad
L_i(n):=\E_{\gamma\sim q_{2i}}
\Bigg[\prod_{j=1}^{N_i(n)} M_L\!\big(X^i_j;\gamma,z_0,\nu_i\big)\Bigg].
\]

Now, we condition on the past, for any fixed parameters:

$\bullet$ If $y=\nu_i$ and $t:=\lambda_1/\lambda_3>0$, the EVaR constraint \(e^{-\rho}\E[e^{tX}\mid\mathcal F_{t-1}]\le e^{ty}\) implies
\[
\E\!\left[M_U(X;\lambda_1,\lambda_3,\nu_i)\mid\mathcal F_{t-1}\right]
=1+\lambda_3\Big(1-e^{\rho-ty}\E[e^{tX}\mid\mathcal F_{t-1}]\Big)\le 1.
\]

$\bullet$ If $x^\star=\nu_i$, then \(e^{-\rho}\E[e^{z_0X}\mid\mathcal F_{t-1}]\ge e^{z_0x^\star}\) gives
\[
\E\!\left[M_L(X;\gamma,z_0,\nu_i)\mid\mathcal F_{t-1}\right]
=1-\gamma\Big(e^{-\rho+z_0x^\star}-\E[e^{z_0X}\mid\mathcal F_{t-1}]\Big)\le 1.
\]
Thus each product $\prod M_U$  and $\prod M_L$ is a nonnegative supermartingale with mean $\leq  1$.

Lemma~\ref{lem:expconcave-evar} guarantees that the dual per-sample factors are positive on compact domains and that their logarithms are exp-concave in the dual parameters. Hence we can apply the exp-concave aggregation inequality \cite[Lemma~F.1]{agrawal2021BAIcvar}:
\[
\max_{\theta\in\Theta}\sum_{t=1}^T g_t(\theta)
\;\le\;
\log \E_{\theta\sim q}\!\Bigg[\exp\!\Big(\sum_{t=1}^T g_t(\theta)\Big)\Bigg]
\;+\; d\log(T{+}1)+1.
\]
This lemma links the left-hand sides of \eqref{eq:emp-du-KLU} and \eqref{eq:emp-du-KLL} to the corresponding products of supermartingale factors.
Applying it pathwise to $\sum_j \log M_U(X^i_j;\cdot)$ with $d_U=2$ (parameters $\lambda_1,\lambda_3$), and to $\sum_j \log M_L(X^j_\ell;\cdot)$ with $d_L=1$ (parameter $\gamma$ after fixing $z_0$). We get, for all $n$,
\[
\begin{aligned}
N_i(n)\,\KLiU\big(\widehat\mu_i(n),\nu_i\big)
&\le \log U_i(n)\;+\;2\log\big(N_i(n){+}1\big)+1,\\
N_j(n)\,\KLiL\big(\widehat\mu_j(n),\nu_j\big)
&\le \log L_j(n)\;+\;\log\big(N_j(n){+}1\big)+1.
\end{aligned}
\]
Thus, we sum the two, and use $N_l(n)\le n, \text{ for } l={j, i}$:
\[
N_i(n)\,\KLiU(\widehat\mu_i(n),\nu_i)+N_j(n)\,\KLiL(\widehat\mu_j(n),\nu_j)
\;\le\;\log\!\big(U_i(n)L_j(n)\big)\;+\;\underbrace{3\log(n{+}1)+2}_{=:h(n)}.
\]
Because $U_i(n)L_j(n)$ is a nonnegative supermartingale with \(\E[U_i(n)L_j(n)]\le1\), Ville yields
$\Pr\!\Big(\exists n\ge1:\ \log(U_i(n)L_j(n))\ge x\Big)\;\le\;e^{-x}$ \cite{}.
Rearranging gives exactly our proposition:
\[
\Pr\!\Big(\exists n\ge1:\ N_i\KLiU(\widehat\mu_i,\nu_i)+N_j\KLiL(\widehat\mu_j,\nu_j)\ \ge\ x+h(n)\Big)\;\le\;e^{-x}.
\]
Apply Ville’s inequality to the mixture to obtain the stated tail bound.
Finally, take $x=\log\!\big(\tfrac{K-1}{\delta}\big)$ and union bound over $j\neq i$ to get the
$\delta$-level GLRT threshold used in the algorithmic stopping rule. 
\end{proof}


\paragraph{Stopping threshold.}
Define
\(\beta(n,\delta)\;=\;\log\!\frac{K-1}{\delta}\;+\;3\log(n+1)\;+\;2
\).
Then, by a union bound over $i\in\{2,\dots,K\}$, each alternative’s tail is $\le \delta/(K{-}1)$ and
the EVaR Track-and-Stop is $\delta$-correct.

\begin{remark}
\label{rem:EVaR-regularity}
We recall the properties we presented in the paper here:
(i) $e_\alpha:\Pc([0,1])\to[0,1]$ is continuous (Lemma~\ref{lem:evar-cont});
(ii) EVaR sub/super-level sets are compact (Lemma~\ref{lem:detailed-levelset});
(iii) $(\eta,x)\mapsto \KLiL(\eta,x)$ and $(\eta,x)\mapsto \KLiU(\eta,x)$ are continuous on
$\Pc([0,1])\times(0,1)$ (Lemma~\ref{lem:KLiLKLiU-cont});
(iv) $g_{a,j}$ and $\Phi$ are continuous in $(\nu,t)$ and $\Phi(\nu,\cdot)$ is concave on $\Sigma_K$;
(v) the argmax correspondence $t^*(\nu):=\arg\max_{t\in\Sigma_K}\Phi(\nu,t)$ is nonempty, convex,
compact-valued, and upper hemicontinuous in $\nu$ (Lemma~\ref{lem:stab_EVaR}).
\end{remark}

By continuity Remark~\ref{rem:EVaR-regularity} and the SLLN for each arm, along any sample path where
$\widehat\nu_k(n)\Rightarrow \nu_k$ for all $k$, continuity implies
$t^*(\widehat\nu(n))\to t^*(\nu)$ (upper hemicontinuity and compactness ensure cluster points lie in $t^*(\nu)$),
and the tracking rule enforces
\[
\frac{N_k(n)}{n}\ \xrightarrow[n\to\infty]{}\ t^*_k(\nu)
\qquad\text{for some }t^*(\nu)\in t^*(\nu).
\]

Fix $j\neq 1$. By the joint continuity of
\((\eta,x)\mapsto\KLiU(\eta,x),\KLiL(\eta,x)\) and the common-level reduction,
\[
\frac{1}{n}\Big[
N_j(n)\KLiU(\widehat\nu_j(n),x)\ +\
N_{1}(n)\KLiL(\widehat\nu_{1}(n),x)
\Big]
\ \xrightarrow[n\to\infty]{}\
t^*_j(\nu)\KLiU(\nu_j,x)\ +\ t^*_{1}(\nu)\KLiL(\nu_{1},x),
\]
uniformly over $x\in I_{1,j}(\nu)$ (the feasible interval is compact, endpoints move continuously).
Taking the infimum over $x$ yields
\[
\lim_{n\to\infty}\ \frac{1}{n}\,g_{1,j}(\widehat\nu(n),N(n)/n)
\ =\ g_{1,j}(\nu,t^*(\nu)).
\]
Since $\Phi(\cdot,\cdot)$ is the pointwise minimum over $j$, we get
\[
\lim_{n\to\infty}\ \frac{1}{n}\,\Phi(\widehat\nu(n),N(n)/n)
\ =\ \Phi(\nu,t^*(\nu))\ =\ V(\nu).
\]

The stopping condition is met when the left-hand side exceeds
$\beta(n,\delta)=\log\!\frac{K-1}{\delta}+h(n)$.
Therefore, for every $\varepsilon>0$,
\[
\Pr_\nu\!\Big(\tau_\delta\ >\ \tfrac{1+\varepsilon}{V(\nu)}\log(1/\delta)\ \Big)\ \longrightarrow\ 0.
\]
This yields $\limsup_{\delta\downarrow 0}\E_\nu[\tau_\delta]/\log(1/\delta)\le (1/V(\nu))(1+\varepsilon)$,
and since $\varepsilon>0$ is arbitrary, the claimed upper bound follows, hence asymptotic optimality with lower bound given in \eqref{th:GK2016} is proved.   
\end{proof}

The next lemma establishes the conditions needed to apply \cite[Lemma~F.1]{agrawal2021BAIcvar}. The proof is given afterward.
\begin{lemma}\label{lem:expconcave-evar}
Let $X\in[0,1]$, fix $\rho>0$, and let $\nu\in(0,1)$.

\smallskip

\noindent\textbf{I)} For $(\lambda_1,\lambda_3)$ with $\lambda_3>0$, define
\[
F_U(\lambda_1,\lambda_3;x)
:=1+\lambda_3\Big(1-\exp\!\Big(\tfrac{\lambda_1}{\lambda_3}(x-\nu)+\rho\Big)\Big),\qquad x\in[0,1].
\]
Fix any window $0<t_{\min}\le t\le t_{\max}$ with $t=\lambda_1/\lambda_3$. If
\begin{equation}\label{eq:FU-positivity}
0\ \le\ \lambda_3\ \le\ \frac{1}{\,e^{\rho+t(1-\nu)}-1\,},
\end{equation}
then $F_U(\lambda_1,\lambda_3;x)>0$ for all $x\in[0,1]$. Moreover,
$(\lambda_1,\lambda_3)\mapsto F_U(\lambda_1,\lambda_3;x)$ is concave on $\{\lambda_3>0\}$ (for each fixed $x$).
Hence $g_U:=\log F_U$ is exp-concave on any compact subset of $\{\lambda_3>0\}$ where $F_U>0$.

\smallskip
\noindent\textbf{II)}
For $\beta>0$ and $\gamma\ge 0$, set $A(\beta,\nu):=e^{-\rho+\beta\nu}$ and define
\[
F_L(\gamma;\beta,\nu,x):=1+\gamma\big(e^{\beta x}-A(\beta,\nu)\big),\qquad x\in[0,1].
\]
Then $\gamma\mapsto F_L(\gamma;\beta,\nu,x)$ is affine (hence concave). The worst case in $x$ for positivity occurs at $x=0$,
so
\begin{equation}\label{eq:FL-positivity}
\min_{x\in[0,1]}F_L(\gamma;\beta,\nu,x)=1+\gamma\,(1-A(\beta,\nu)).
\end{equation}
Consequently:
\begin{itemize}
\item If $A(\beta,\nu)\le 1$ then $F_L(\gamma;\beta,\nu,x)\ge 1$ for all $\gamma\ge 0$, $x\in[0,1]$.
\item If $A(\beta,\nu)>1$ then $F_L(\gamma;\beta,\nu,x)>0$ for all $x\in[0,1]$ whenever
\[
0\ \le\ \gamma\ <\ B(\beta,\nu):=\frac{1}{\,A(\beta,\nu)-1\,}.
\]
\end{itemize}
On any compact interval where $F_L>0$ when $A>1$, or an arbitrary bounded interval if $A\le 1$),
$g_L:=\log F_L$ is exp-concave.
\end{lemma}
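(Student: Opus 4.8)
The plan is to treat both parts of the lemma through one common three-step template: (i) reduce the pointwise positivity-in-$x$ requirement to a single worst-case value of $x$ by monotonicity, (ii) establish concavity of the dual per-sample factor $F_U$ (resp.\ affinity of $F_L$) in the dual parameters, and (iii) deduce exp-concavity of $g=\log F$ essentially for free, since $\exp(g)=F$, and a function whose exponential is concave is exactly the hypothesis required by the aggregation inequality \cite[Lemma~F.1]{agrawal2021BAIcvar}. The only genuinely non-routine ingredient is spotting the right convexity structure in step (ii) for $F_U$; steps (i) and (iii) are bookkeeping.

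For Part I, I would first handle positivity. Writing $t:=\lambda_1/\lambda_3\ge 0$, the map $x\mapsto \exp\!\big(t(x-\nu)+\rho\big)$ is nondecreasing on $[0,1]$, so $F_U(\lambda_1,\lambda_3;x)=1+\lambda_3\big(1-\exp(t(x-\nu)+\rho)\big)$ is minimized at $x=1$; requiring the minimum to be positive and rearranging $\lambda_3\big(e^{\rho+t(1-\nu)}-1\big)<1$ yields exactly \eqref{eq:FU-positivity} (the endpoint gives $F_U(\cdot;1)=0$, so strict positivity needs the strict inequality; on the compact parameter windows used later $F_U$ is bounded away from $0$). For concavity I would observe that the only nonaffine term is $h(\lambda_1,\lambda_3):=\lambda_3\exp\!\big(\tfrac{\lambda_1}{\lambda_3}(x-\nu)+\rho\big)$, which is precisely the \emph{perspective} of the convex function $u\mapsto e^{(x-\nu)u+\rho}$; by the standard fact that perspectives of convex functions are jointly convex on $\{\lambda_3>0\}$, $h$ is convex, hence $F_U=1+\lambda_3-h$ is concave for each fixed $x$. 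Finally, since $\exp(g_U)=F_U$ is concave on any compact subset of $\{\lambda_3>0\}$ on which $F_U>0$, the function $g_U=\log F_U$ is exp-concave there.

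For Part II the structure collapses, because $F_L(\gamma;\beta,\nu,x)=1+\gamma\big(e^{\beta x}-A(\beta,\nu)\big)$ is affine in $\gamma$, hence trivially concave, so $g_L=\log F_L$ is immediately exp-concave wherever $F_L>0$. Positivity again reduces to monotonicity in $x$: for $\beta>0$ and $\gamma\ge 0$ the term $\gamma e^{\beta x}$ is increasing in $x$, so the minimum over $x\in[0,1]$ is attained at $x=0$, giving $\min_x F_L=1+\gamma\big(1-A(\beta,\nu)\big)$, which is \eqref{eq:FL-positivity}. The two regimes then follow by inspection: if $A(\beta,\nu)\le 1$ the minimum is $\ge 1$ for every $\gamma\ge 0$, while if $A(\beta,\nu)>1$ positivity is equivalent to $\gamma\big(A(\beta,\nu)-1\big)<1$, i.e.\ $\gamma<B(\beta,\nu)$.

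The main obstacle, modest as it is, lies in step (ii) for $F_U$: one must recognize the perspective-function representation to obtain joint concavity in $(\lambda_1,\lambda_3)$ cleanly, rather than grinding through a $2\times2$ Hessian of a ratio-in-the-exponent, which is messier and error-prone. I would also flag the boundary subtlety noted above (equality in the positivity bound forces $F_U=0$ at $x=1$) and resolve it by restricting, as the algorithm already does, to compact parameter windows on which $F_U$ is uniformly positive, so that $\log F_U$ is well-defined, continuous, and has a concave exponential — precisely the condition needed to invoke the exp-concave aggregation inequality in the proof of Proposition~\ref{prop:anytime-evar}.
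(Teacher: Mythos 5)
Your proof matches the paper's own argument essentially step for step: positivity is reduced by monotonicity in $x$ to the worst case ($x=1$ for $F_U$, $x=0$ for $F_L$), joint concavity of $F_U$ in $(\lambda_1,\lambda_3)$ is obtained via the perspective of the convex map $u\mapsto e^{(x-\nu)u}$, $F_L$ is affine in $\gamma$, and exp-concavity of $g=\log F$ is read off directly from concavity and positivity of $\exp(g)=F$. Your added observation that equality in \eqref{eq:FU-positivity} forces $F_U(\lambda_1,\lambda_3;1)=0$ --- so strict positivity on all of $[0,1]$ actually requires the strict inequality, which you correctly repair by restricting to compact parameter windows where $F_U$ is uniformly positive --- is a valid refinement of a boundary case the paper's proof passes over silently.
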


\begin{proof}
\emph{(I) Concavity and positivity.}
Write $a:=x-\nu\in[-1,1]$ and $t:=\lambda_1/\lambda_3$. Consider
\[
\phi(\lambda_1,\lambda_3):=\lambda_3\exp\!\Big(\tfrac{\lambda_1}{\lambda_3}a+\rho\Big)
= e^{\rho}\,\underbrace{\lambda_3\exp\!\Big(\tfrac{\lambda_1}{\lambda_3}a\Big)}_{\text{perspective of } u\mapsto e^{au}}.
\]
Since $u\mapsto e^{au}$ is convex, its perspective $(\lambda_1,\lambda_3)\mapsto \lambda_3 e^{(\lambda_1/\lambda_3)a}$ is convex
on $\{\lambda_3>0\}$ (see, e.g., Boyd--Vandenberghe, \emph{Convex Optimization}, Sec.~3.2.6).
Therefore $-(\lambda_3 e^{(\lambda_1/\lambda_3)a+\rho})$ is concave, and adding the affine terms $1+\lambda_3$ preserves concavity,
so $F_U$ is concave in $(\lambda_1,\lambda_3)$.

For positivity on $x\in[0,1]$ note that $x\mapsto \exp(\tfrac{\lambda_1}{\lambda_3}(x-\nu)+\rho)$ is nondecreasing in $x$
when $t\ge 0$, hence $x\mapsto F_U(\lambda_1,\lambda_3;x)$ attains its minimum at $x=1$. Thus
\[
\min_{x\in[0,1]} F_U(\lambda_1,\lambda_3;x)
=1+\lambda_3\big(1-e^{\rho+t(1-\nu)}\big),
\]
which is $>0$ whenever \eqref{eq:FU-positivity} holds. On any compact set where $F_U>0$, $g_U=\log F_U$ is exp-concave
by definition (since $e^{g_U}=F_U$ is concave and positive).

\smallskip
\emph{(II) Affinity and positivity.}
For fixed $(\beta,\nu)$, $F_L(\gamma;\beta,\nu,x)=1+\gamma(e^{\beta x}-A)$ is affine in $\gamma$ hence concave.
Because $x\mapsto e^{\beta x}$ is nondecreasing, for $\gamma\ge 0$ the minimum over $x\in[0,1]$ occurs at $x=0$
and equals $1+\gamma(1-A)$, proving \eqref{eq:FL-positivity}. When $A\le 1$ this is $\ge 1$ for all $\gamma\ge 0$;
when $A>1$ it remains positive on the compact interval $[0,B(\beta,\nu))$.
On any compact set with $F_L>0$, $g_L=\log F_L$ is exp-concave.
\end{proof}


\end{document}